\documentclass[11pt]{article}
\usepackage{fullpage}

\usepackage[utf8]{inputenc}
\usepackage[T1]{fontenc}
\usepackage{hyperref}
\usepackage{url}
\usepackage{booktabs}
\usepackage{amsfonts}
\usepackage{nicefrac}
\usepackage{microtype}
\usepackage{diagbox}
\usepackage{enumerate}
\usepackage[shortlabels]{enumitem}
\usepackage{algorithmic}
\usepackage[vlined,linesnumbered,ruled]{algorithm2e}

\usepackage{subfigure}
\usepackage{graphicx}
\usepackage{caption}
\usepackage{amsmath}
\usepackage{amsthm}
\usepackage{amssymb}
\usepackage{tablefootnote}
\usepackage{multirow}
\usepackage{xcolor}
\usepackage{natbib}
\usepackage{titletoc}
\usepackage[page,header,toc,page]{appendix}
\allowdisplaybreaks[4]
\usepackage{mathrsfs}
\usepackage{bm,bbm}

\newtheorem{Rema}{Remark}[section]

\newcommand{\eq}[1]{Eq.~\eqref{#1}}
\newcommand{\Err}{\operatorname{Err}}

\hypersetup{
  colorlinks=true,
  linkcolor=red,
  filecolor=blue,
  citecolor=blue,
  urlcolor=cyan,
}

\newcommand{\1}{\mathbbm{1}}				% indicator
\renewcommand{\P}{\mathbb{P}}				% probability
\newcommand{\E}{\mathbb{E}}					% expectation
\newcommand{\agp}[1]{\langle #1 \rangle}	% angle process
\newcommand{\normmm}[1]{\lvert\kern-0.25ex\lvert\kern-0.25ex\lvert #1 \rvert\kern-0.25ex\rvert\kern-0.25ex\rvert}

\newcommand{\R}{\mathbb{R}}		% reals
\renewcommand*{\d}{\mathop{}\!\mathrm{d}}	% differential
\newcommand{\argmin}{\operatornamewithlimits{arg\,min}}

\newtheorem{Def}{Definition}[section]
\newtheorem{Lem}[Def]{Lemma}
\newtheorem{Thm}[Def]{Theorem}
\newtheorem{Cor}[Def]{Corollary}

\title{Does 1/2-Tsallis-INF Also Work Well for Best-Arm Identification?}

\author{
Jingxin Zhan
\thanks{School of Mathematical Sciences, Peking University; email: \texttt{bjdxzjx@pku.edu.cn}.}
\and
Yuze Han
\thanks{Center for Applied Statistics and School of Statistics, Renmin University of China; email: \texttt{hanyuze97@ruc.edu.cn}.}
\and
Zhihua Zhang
\thanks{School of Mathematical Sciences, Peking University; email: \texttt{zhzhang@math.pku.edu.cn}.}
}

\begin{document}

\maketitle

\begin{abstract}
Regret minimization (RM) and best-arm identification (BAI) are two fundamental objectives in multi-armed bandits.  Among regret-minimizing algorithms, $1/2$-Tsallis-INF is a canonical best-of-both-worlds FTRL algorithm: it achieves logarithmic pseudo-regret in stochastic bandits while retaining minimax-optimal regret in adversarial bandits, without knowing the environment in advance.  This raises a natural question: can the same algorithm, without additional exploration, also identify the best arm reliably? We study this question in stochastic bandits by analyzing the failure probability $\Err_t$, defined as the probability that the empirical best arm determined by the cumulative importance-weighted loss estimates of 1/2-Tsallis-INF differs from the true optimal arm. The main difficulty is that, at the logarithmic-regret scale, suboptimal arms are sampled with probability heuristically of order $1/t$. Consequently, importance weighting causes the cumulative estimator to fluctuate on the same linear scale as its mean separation. To overcome this obstacle, guided by a diffusion toy model, we construct a Lyapunov function for the gap process between the estimated cumulative loss of the optimal arm and that of the best competing arm. This leads to polynomial upper bounds on $\Err_t$: for learning rate $\eta_t=\alpha/\sqrt t$, $\Err_t$ decays at rate $t^{-2+\alpha^2\mu_{i_*}/4+\rho}$ for any $\rho>0$, where $\mu_{i_*}$ denotes the mean loss of the true optimal arm.  We also establish a lower bound $\Omega(t^{-2-\varepsilon})$ for any $\varepsilon>0$, showing that the exponent $2$ is essentially tight.
\end{abstract}

\section{Introduction}

Multi-armed bandits \citep{thompson1933likelihood,robbins1952some} provide a canonical framework for sequential decision making under uncertainty. 
Two fundamental objectives in this framework are regret minimization (RM) \citep{lai1985asymptotically,auer2002finite} and best-arm identification (BAI) \citep{even2002pac,mannor2004sample}.
The RM objective evaluates the cumulative performance of the learner during the interaction, whereas BAI focuses on identifying the optimal arm after a given amount of exploration. 
These two objectives reflect different aspects of the exploration--exploitation trade-off: regret minimization favors exploitation in order to avoid pulling suboptimal arms too often, while best-arm identification requires sufficient exploration to distinguish the optimal arm from its competitors.

The tension between RM and BAI has been extensively studied. 
Classical work on pure exploration investigates 
%simple regret and 
best-arm identification and cumulative regret simultaneously \citep{stoltz:hal-00609550}. 
More recent work further characterizes the Pareto frontier between RM and BAI, showing that no algorithm can simultaneously achieve optimal performance for both objectives in general \citep{zhong2021achieving}. 
In particular, these lower bounds imply that if an algorithm achieves logarithmic regret, then its BAI failure probability can decay at most polynomially in the worst case. 
% However, such results do not determine the precise BAI behavior of a concrete logarithmic-regret algorithm. 
% This is especially important because constants that are usually ignored in regret analysis may become decisive when they appear in the exponent of a BAI probability bound. 
However, these results focuses on UCB-type stochastic algorithms \citep{auer2002finite}, which are tailored to stochastic environments and do not enjoy robust adversarial regret guarantees.

However, in many practical scenarios, it is unclear whether the environment is stochastic or adversarial.
This motivates us to study the BAI behavior of algorithms that are not merely logarithmic-regret algorithms in stochastic bandits, but also robust to adversarial environments. 
This is precisely the goal of best-of-both-worlds (BOBW) algorithms \citet{pmlr-v23-bubeck12b}. 
Among them, \(1/2\)-Tsallis-INF is a canonical example. 
It can be viewed as an FTRL algorithm with the \(1/2\)-Tsallis entropy regularizer, and it achieves logarithmic pseudo-regret in stochastic bandits while retaining minimax-optimal regret in adversarial bandits, without knowing the regime or the time horizon in advance \citep{zimmert2021tsallis}. 
Also, the BOBW property enjoyed by 
\(1/2\)-Tsallis-INF and its variants is fairly general. It plays an important role not only in the MAB setting considered in this paper, but also in problems such as linear bandits \citep{zhao2025heavytailedlinearbanditsadversarial}, semi-bandits \citep{zimmert2019beating}, and MDPs \citep{chen2026bestofbothworldsheavytailedmarkovdecision}.
This raises a natural question: can the same BOBW algorithm, without additional forced exploration, also identify the best arm reliably?

In this paper, we study this question in stochastic bandits. 
Let \(\hat L_{t,i}\) be the cumulative importance-weighted loss estimate of arm \(i\) used by \(1/2\)-Tsallis-INF, and let
\(
    i_t^* \in \argmin_{i\in\mathcal I} \hat L_{t,i}
\)
be the empirical best arm induced by these estimates. 
Our object of interest is the failure probability
\(
    \Err_t := \mathbb P(i_t^* \neq i_*),
\)
where \(i_*\) is the unique optimal arm. 
Equivalently, we ask how fast the cumulative importance-weighted estimates separate the true optimal arm from all suboptimal arms. 
This differs from the standard regret analysis, which controls the cumulative loss incurred by the sampling distribution.
% , and also differs from last-iterate simple-regret analysis, which studies the convergence of the current sampling distribution \(p_t\). 
Our focus is instead the probability that the empirical minimizer of the estimated cumulative losses is incorrect.

The main technical obstacle is the high variance caused by importance weighting (IW). 
For UCB-type algorithms, BAI and simple-regret analyses typically rely on concentration inequalities for empirical means. 
By contrast, \(1/2\)-Tsallis-INF updates using importance-weighted estimates. 
At the logarithmic-regret scale, a suboptimal arm is expected heuristically to be sampled with probability of order \(1/t\). 
Then its importance-weighted loss estimate has variance of order \(t\) at a single round, so the cumulative estimate fluctuates on the same linear scale as its mean separation from the optimal arm. 
Therefore, standard concentration arguments do not directly apply. This variance issue is bypassed in the regret analysis of Tsallis-INF through self-bounding techniques \citep{zimmert2021tsallis}, but such arguments do not directly yield the probability that the estimated best arm is correct. Besides, there are some prior works avoiding this issue by adding extra explicit exploration to the FTRL algorithm \citep{zhao2025heavytailedlinearbanditsadversarial}, which, however, will destroy the desirable BOBW property.
% To address this, we directly analyze the high-variance gap process between the estimated cumulative loss of the optimal arm and that of the best competing arm.

\subsection{Contributions}

We provide a quantitative BAI-type guarantee for \(1/2\)-Tsallis-INF in stochastic bandits. 
% Without modifying the algorithm or adding forced exploration, we prove that the empirical best arm induced by its original importance-weighted cumulative loss estimates identifies the true optimal arm with $t^{-2+\alpha^2\mu_{i_*}/4+\rho}$ failure probability for learning rates of order \(\alpha/\sqrt t\) and any $\rho>0$, which approaches \(t^{-2}\) as $\alpha$ tends to zero. 
Without modifying the algorithm or introducing additional exploration, we show that the empirical best arm, determined by the original importance-weighted cumulative loss estimates, identifies the true optimal arm with failure probability $t^{-2+\alpha^2\mu_{i_*}/4+\rho}$ for learning rates of order $\alpha/\sqrt{t}$ and any $\rho>0$. 
Moreover, this rate approaches $t^{-2}$ as $\alpha \to 0$.
We also prove a 
%matching
corresponding
polynomial lower bound, showing that $t^{-2}$ is essentially tight. 
To the best of our knowledge, this is the first BAI-type guarantee for an FTRL algorithm without additional exploration.

Technically, standard concentration arguments fail because the importance-weighted cumulative estimates may fluctuate on the same order as their mean separation. 
We overcome this difficulty through a Lyapunov function analysis of the estimated loss-gap process. 
The Lyapunov function is guided by two toy models, one diffusion-based and one discrete, which reveal the relevant drift--variance balance and the origin of the exponent \(2\). 
This approach is potentially useful for studying \(1/2\)-Tsallis-INF in other BOBW settings and broader Tsallis-entropy FTRL algorithms under importance-weighted bandit feedback.

\subsection{Related Works}
%BAI
%\paragraph{Best-Arm Identification (BAI)}
The BAI problem is typically studied under two main settings. The first is the fixed-budget setting \citep{Audibert10}, which evaluates the ability to identify the optimal arm within a limited number of rounds, and is the focus of this paper; see, e.g., \citep{komiyama2026rateoptimaldesignanytimebest,DBLP:conf/icml/ZhaoSSJ23,wang2023best}. 
In particular, recent work has explored the Pareto frontier between regret minimization (RM) and BAI \citep{zhao2025heavytailedlinearbanditsadversarial}. Building on this line of work, we further investigate the BAI properties of BOBW algorithms. The second is the fixed-confidence setting \citep{jamiesonsurvey}, where the learner adaptively decides whether to stop sampling or continue gathering information; see, e.g.,  \citep{DBLP:conf/nips/Bandyopadhyay0A24,DBLP:conf/colt/YouQWY23,komiyama2022minimax}.

% \paragraph{Best-of-Both-Worlds (BOBW)}
% % In many practical scenarios, it is unclear whether the environment is stochastic or adversarial, 
% % motivating the design of algorithms with regret guarantees in both settings, which has led to the development of 
% % BOBW algorithms that aim for optimal performance in both regimes.
% In bandit problems, the first BOBW algorithm was proposed by \citet{pmlr-v23-bubeck12b}, 
% and later the well-known Tsallis-INF algorithm considered in this paper was introduced by \citet{pmlr-v89-zimmert19a}. 
%Subsequent research has extensively explored BOBW algorithms in various online learning settings. For instance linear bandits \citep{ito2023bestofthreeworldslinearbanditalgorithm,kong2023bestofthreeworldsanalysislinearbandits,zhao2025heavytailedlinearbanditsadversarial}, combinatorial semi-bandits \citep{wei2018adaptivealgorithmsadversarialbandits,zimmert2019beating,zhan2025followtheperturbedleaderapproachesbestofbothworldsmset}, 
% dueling bandits \citep{pmlr-v162-saha22a}, contextual bandits \citep{pmlr-v238-kuroki24a,kato2023best} and episodic Markov decision processes \citep{jin2021bestworldsstochasticadversarial,ito2025adaptingstochasticadversariallosses}.

\paragraph{Organization}
The remainder of this paper is organized as follows. Section \ref{Preliminaries} gives preliminaries. Section \ref{sec:main} presents our main results. Section \ref{construct} introduces two toy models and explains how to derive the Lyapunov function. Section \ref{pfsklya} and Section \ref{pfsklower} provides the proof sketches of the upper bound and the lower bound in our main results respectively. 
Concluding remarks are given in Section \ref{sec:cr}. 
%All of the details are included in the Appendix.

%照搬还没改
\section{Preliminaries}\label{Preliminaries}
In this section, we formulate the problem and introduce the FTRL policy. 

\subsection{The Problem Setting and Notation}
% We consider the multi-armed bandit problem with arm set  
% $
% \mathcal{I}=\{1, \ldots, d\}
% $.  
% At each round \( t = 1, 2, \ldots, \) the learner chooses an arm \( I_t \in \mathcal{I} \), while the environment generates a loss vector \( \ell_t \in [0,1]^d \). The learner observes and incurs $\ell_{t,I_t}$, i.e., only
% the loss of the chosen arm. We consider the stochastic setting, where the loss vectors are i.i.d. samples from a fixed but unknown distribution \( \mathcal{D} \). Let \( \mu_i := \mathbb{E}_{\ell \sim \mathcal{D}}[\ell_i] \) denote the expected loss of arm \( i \) and  define \( i_* = \arg\min_{i \in \mathcal{I}}\mu_i \), which is the optimal fixed arm in hindsight and assumed to be unique. Define the suboptimality gap of arm \( i \) as  $
% \Delta_i := \mu_i-\mu_{i_*},
% $  
% and the minimum gap as \( \Delta := \min_{i : \Delta_i > 0} \Delta_i \). 
We study a stochastic multi-armed bandit problem with arm set 
$\mathcal{I}=\{1,\ldots,d\}$. 
At each round $t=1,2,\ldots$, the learner selects an arm $I_t\in\mathcal{I}$, while the environment produces a loss vector $\ell_t\in[0,1]^d$. 
The learner only observes and suffers the loss $\ell_{t,I_t}$ of the chosen arm. 
We focus on the stochastic regime, in which the loss vectors are i.i.d.\ draws from an unknown distribution $\mathcal{D}$. 
Let $\mu_i:=\mathbb{E}_{\ell\sim\mathcal{D}}[\ell_i]$ be the expected loss of arm $i$, and define $i_*=\arg\min_{i\in\mathcal{I}}\mu_i$ as the unique optimal arm in hindsight. 
For each arm $i$, define the suboptimality gap as 
$\Delta_i:=\mu_i-\mu_{i_*}$, 
and denote the smallest positive gap by $\Delta:=\min_{i:\Delta_i>0}\Delta_i$.

% In the bandit setting, we do not observe the complete loss vector $\ell_t$. Instead, an unbiased estimator $\hat{\ell}_t$ for $\ell_t$ is used to update the cumulative losses, 
% $
% \hat{L}_t = \sum_{s=1}^{t-1} \hat{\ell}_s
% $. The common way to construct unbiased loss estimators is through importance-weighted
% sampling:
In the bandit setting, the full loss vector $\ell_t$ is not directly observable. 
Instead, we employ an unbiased estimator $\hat{\ell}_t$ of $\ell_t$ to update the cumulative losses,
$\hat{L}_t=\sum_{s=1}^{t-1}\hat{\ell}_s$. 
A standard approach to obtain such unbiased estimators is via importance-weighted sampling:
\begin{equation}\label{eq:loss-vector-est}
\hat{\ell}_{t,i}=\frac{\ell_{t,i}A_{t,i}}{p_{t,i}},\,\text{where $A_{t,i}=\1_{\{I_t=i\}}$}.
\end{equation}
Let $i_{t}^*=\argmin_{i\in\mathcal{I}}\hat{L}_{t,i}$ (ties are broken arbitrarily).
Note that $i_t^*$ is random and does not need to be $i_*$. We study the failure probability $\Err_t=\P(i_{t}^*\ne i_*)$.

To study \(\Err_t\), we introduce several auxiliary quantities. Let
$M_t=\min _{i\in\mathcal{I}} \hat{L}_{t,i}$ and $N_t=\min _{i\in\mathcal{I}, i\ne i_*} \hat{L}_{t,i}$.
We also define $D_t=\hat{L}_{t,i_*}-N_t$.
Here, \(M_t\) is the minimum cumulative estimated loss over all arms, while \(N_t\) is the minimum cumulative estimated loss over all suboptimal arms. Thus, \(D_t\) measures the gap between the cumulative estimated loss of the optimal arm \(i_*\) and the smallest cumulative estimated loss among the suboptimal arms.
The quantity \(D_t\) is directly related to the failure probability \(\Err_t\). Specifically,
\begin{equation}\label{eq:etdt}
    \P(D_t>0)\leq \Err_t\leq \P(D_t\ge 0).
\end{equation}
Therefore, the analysis of \(\Err_t\) can be reduced to studying whether \(D_t\) is positive. This observation serves as the starting point of our subsequent analysis.

\textbf{Notation}\quad
Throughout this paper, $C$ or $C'$ denotes a generic positive constant whose value may change from line to line,
and is determined by the context. In the following, we denote by $\Delta^{d-1}$ the probability simplex. 
Let $\underline{\hat{L}}_{t}=\hat{L}_t-M_t\mathbf{1}$, where $\mathbf{1}:=(1, \ldots, 1)^{\top}$.
% For any $\lambda =(\lambda_1, \ldots, \lambda_d)^\top\in\R^d$, let $\underline{\lambda}=\lambda-\min\limits_{i=1,\cdots,d}\lambda_i\mathbf{1},$ where $\mathbf{1}:=(1, \ldots, 1)^{\top}$, and especially
% % in this paper we will primarily use this notation for $\hat{L}_t$, i.e.,
% $\underline{\hat{L}}_{t}=\hat{L}_t-M_t\mathbf{1}$. 
% Given a convex function $\Psi$ on $\R^d$, 
% we define the Bregman divergence induced by $\Psi$ as
% \(
% D_{\Psi}(x,y):=\Psi(x)-\Psi(y)-\agp{x-y,\nabla\Psi(y)}, \; \forall x, y \in \R^d.
% \)
% For any $\lambda\in\R^d$, 
% we define
% \begin{equation}\label{defiphi}
%     \phi(\lambda):=\argmax_{p\in\Delta^{d-1}}\agp{p,\lambda}-\Psi(p).
% \end{equation}
% Some important properties of $\phi$ are collected in Appendix \ref{important}.
We denote by $\mathscr{F}_t$ the filtration $\sigma(I_1,\ell_{1,I_1}, \ldots, I_t,\ell_{t,I_t})$ and define $\E_t[\cdot]:=\E[\cdot\mid \mathscr{F}_{t}]$. 
% For any \( i \in   \mathcal{I}  \), let \( e_i \) denote the \( i \)-th coordinate basis vector in \( \mathbb{R}^d \).
% For any $a,b\in\R$, we denote $a^+=\max(a,0)$, $a\wedge b=\min(a,b)$ and $a\vee b=\max(a,b)$.
For any $a\in\R$, we denote $a^+=\max(a,0)$.

\subsection{FTRL Policy}
% We study the Follow-The-Regularized-Leader (FTRL) algorithm. At each time step, the algorithm chooses a probability
% distribution $p_t$ over $d$ arms:
% \begin{equation}\label{ptdefi}
%     p_t=\argmin_{p\in\Delta^{d-1}}\eta_t\agp{p,\hat{L}_t}+\Psi(p),
% \end{equation}
% where $\eta_t$ is the learning rate and $\Psi(p)$ is a convex regularization function.  

% In simple settings, $\Psi(p)$ is typically chosen so that its Hessian matrix satisfies
% $\nabla^2 \Psi(p)=\operatorname{diag}\{p_1^\beta,\ldots,p_d^\beta\}$,
% where $\beta$ is a fixed constant. Common examples include the negentropy ($\Psi(p)=\sum_{i=1}^d p_i\log p_i$, corresponding to the EXP-3 algorithm), the log-barrier ($\Psi(p)=-\sum_{i=1}^d\log p_i$), and the $\theta$-Tsallis entropy ($\Psi(p)=-\sum_{i=1}^d p_i^\theta$, $0<\theta<1$). \citet{pmlr-v89-zimmert19a} showed that the $1/2$-Tsallis-INF algorithm (Algorithm~\ref{alg: FTRL}), that is, FTRL with $\Psi(p)=-4\sum_{i=1}^d \sqrt{p_i}$, achieves the BOBW guarantee, and provided intuitive arguments indicating that other regularizers cannot do so.
We consider the Follow-The-Regularized-Leader (FTRL) framework. 
At each round, the algorithm selects an arm according to a probability vector $p_t$ defined by:
\begin{equation}\label{ptdefi}
    p_t=\arg\min_{p\in\Delta^{d-1}}\;\eta_t\langle p,\hat{L}_t\rangle+\Psi(p),
\end{equation}
where $\eta_t$ denotes the learning rate and $\Psi(p)$ is a convex regularizer.

In basic scenarios, $\Psi(p)$ is often chosen such that its Hessian takes the form
$\nabla^2\Psi(p)=\operatorname{diag}\{p_1^\beta,\ldots,p_d^\beta\}$,
for some constant $\beta$. Representative instances include the negentropy 
($\Psi(p)=\sum_{i=1}^d p_i\log p_i$, which yields the EXP-3 algorithm), 
the log-barrier ($\Psi(p)=-\sum_{i=1}^d \log p_i$), 
and the $\theta$-Tsallis entropy ($\Psi(p)=-\sum_{i=1}^d p_i^\theta$, with $0<\theta<1$). 
\citet{pmlr-v89-zimmert19a} established that the $1/2$-Tsallis-INF algorithm (Algorithm~\ref{alg: FTRL}), namely FTRL with $\Psi(p)=-4\sum_{i=1}^d \sqrt{p_i}$, satisfies the BOBW guarantee, and further provided heuristic evidence suggesting that alternative regularizers fail to achieve this property. 
Also, the BOBW property enjoyed by 
\(1/2\)-Tsallis-INF and its variants is fairly general. It plays an important role not only in the MAB setting considered in this paper, but also in problems such as linear bandits \citep{zhao2025heavytailedlinearbanditsadversarial}, semi-bandits \citep{zimmert2019beating}, and MDPs \citep{chen2026bestofbothworldsheavytailedmarkovdecision}.
% do so.
% without prior knowledge of each gap $\Delta_i$.
In this paper, we focus on $1/2$-Tsallis-INF. By Lemma \ref{pti-2}, the solution of \eq{ptdefi} takes the form
\begin{equation}\label{pti}
    p_{t,i}=4\left(\eta_t\underline{\hat{L}}_{t, i}+2p_{t,i_{t}^*}^{-\frac{1}{2}}\right)^{-2}.
\end{equation}

\begin{algorithm}[htbp]
\caption{$1/2$-Tsallis-INF}
\label{alg: FTRL}
\begin{algorithmic}
   \STATE {\bfseries Input:} $0<\alpha < 1$
   \STATE {\bfseries Initialize:} $\hat{L}_1 = \mathbf{0}, \eta_t = \alpha/\sqrt{t}$
   \FOR{$t=1, 2, \dots$}
   \STATE choose $p_t=\argmin\limits_{p\in\Delta^{d-1}} \; \frac{\alpha}{\sqrt{t}}\agp{p,\hat{L}_t}-4\sum_{i=1}^d \sqrt{p_i}$\;
   \STATE play $I_t\sim p_t$,
   % \;
   % \STATE 
   and observe $\ell_{t,I_t}$\;
   \STATE construct estimator $\hat{\ell}_{t,i}=\frac{\ell_{t,i}\1_{\{I_t=i\}}}{p_{t,i}}$,
   % \;
   % \STATE 
   and update $\hat L_{t+1} = \hat L_{t}+\hat\ell_t$\;
   \ENDFOR
\end{algorithmic}
\end{algorithm}

\section{Main Results}
\label{sec:main}
%这一块把g_t先写出来

In this section, 
we present our main theoretical results. We derive both upper and lower bounds on the convergence rate of \(\Err_t\) for the \(1/2\)-Tsallis-INF algorithm.
\subsection{Upper Bound}
\label{sec:main-upper}
We begin with the following upper bound:
\begin{Thm}\label{et}
    For Algorithm \ref{alg: FTRL}, if $i_*$ is unique and $\alpha\in(0,1)$, then for any $t\ge 1$ and $q\in(0,2-\frac{\alpha^2 \mu_{i_*}}{4})$, there exists $C_{\alpha,q,\rho}>0$ depending only on $\alpha$, $q$ and $\rho:=2-\frac{\alpha^2 \mu_{i_*}}{4}-q$ such that
    \[
    \Err_t\leq C_{\alpha,q,\rho}\frac{d^{3q}(\log(d))^{2q}}{\Delta^{4q}}t^{-q}.
    \]
\end{Thm}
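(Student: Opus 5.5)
By \eqref{eq:etdt} it suffices to bound $\P(D_t\ge 0)$. I would do this with a Lyapunov (exponential/power-moment) function of the gap process $D_t$, chosen so that it is nearly a supermartingale after an explicit polynomial time-normalization. The heuristic, suggested by a diffusion toy model, is as follows. When $D_t\ll 0$, write $x=-D_t$; the competitor $j$ attaining $N_t$ has $\underline{\hat L}_{t,j}\approx x$, and by \eqref{pti}
\[
p_{t,j}\approx 4(\eta_t x+2)^{-2}\approx 4/(\eta_t^2x^2)=4t/(\alpha^2x^2).
\]
The increment of $D_t$ then decomposes as follows:
\begin{itemize}
\item the optimal arm contributes a nearly deterministic drift, roughly $+\ell_{t,i_*}$ per round;
\item the competitor contributes $-\ell_{t,j}A_{t,j}/p_{t,j}$, with mean $-\mu_j$ and conditional second moment at most $1/p_{t,j}\approx \alpha^2x^2/(4t)$.
\end{itemize}
So $-D_t$ has drift about $\Delta t$-linear growth and multiplicative-type noise of variance $\alpha^2x^2/(4t)$. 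For a test function $V(x)=x^{-\kappa}$ applied to $x=-D_t$ along the deterministic profile $x\approx \Delta s$, Itô's formula gives
\[
\mathrm dV\approx V\Big(-\kappa\,\tfrac{\Delta}{x}+\tfrac{\kappa(\kappa+1)}{2}\cdot\tfrac{\alpha^2\mu_{i_*}}{4 t}\Big)\mathrm dt.
\]
Together with $x\sim\Delta t$, this yields polynomial decay $t^{-\kappa}$. The limit is set by the jump structure: a single pull of arm $j$ moves $D_t$ by $-\ell/p_{t,j}$, of order $x^2/t$. Balancing these jumps gives the exponent $2$, and the $\alpha^2\mu_{i_*}/4$ loss comes from the second-order term.

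Concretely, I would proceed in four steps.
\begin{enumerate}
\item \textbf{Burn-in.} Define a stopping/good event on which $D_s\le -c\Delta s$ for $s$ in a burn-in window. For small $s\lesssim s_0:=C d^3(\log d)^2/\Delta^4$, simply absorb the time into the constant. This explains the factor $(d^3\log^2 d/\Delta^4)^q$: it is $s_0^q$, coming from the time after which all $p_{s,i}$ for $i\neq i_*$ are small enough, via a standard regret-type bound $\sum_s p_{s,i}\lesssim \log s/(\alpha^2\Delta_i^2)$-style control.
\item \textbf{Lyapunov function.} Set $\Phi_s=s^{q}\,g(D_s/s)$, or equivalently $\Phi_s=\big(s/(c s-D_s)\big)^{-q}$-type, chosen so that $\Phi_s\ge c$ on $\{D_s\ge0\}$.
\item \textbf{One-step supermartingale inequality.} Show $\E_s[\Phi_{s+1}]\le \Phi_s(1+Cs^{-1-\rho'})$. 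This requires expanding $g$ to second order for the optimal-arm increment, and handling the importance-weighted jump of the competitor exactly, using $\E[(1+Y/p)^{-q}]$-type computations with $p_{s,j}$ from \eqref{pti}.
\item \textbf{Conclusion.} Apply Markov's inequality: $\P(D_t\ge0)\le \E\Phi_t/ (c\,t^q)$, with $\E\Phi_t$ bounded by the burn-in constant.
\end{enumerate}

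The main obstacle is step 3 near the boundary $D_s\approx0$, and the fact that the minimizing competitor $j$ can switch, so $D_s$ is not a smooth function of a single coordinate. I would handle the switching by noting that $N_s$ only decreases through jumps of whichever arm is pulled, and then bound the sum over all $i\ne i_*$ using $\sum_i p_{s,i}$. Heavy jumps, where $1/p_{s,j}$ is huge, are exactly where the exponent cap $q<2-\alpha^2\mu_{i_*}/4$ must enter. There I would compute the jump contribution precisely: the probability $p_{s,j}$ of the jump times the change in $\Phi$. A pull multiplies the ratio $x/s$ by a bounded factor, and the constraint needed for summability is exactly $q<2-\alpha^2\mu_{i_*}/4$ with slack $\rho$.
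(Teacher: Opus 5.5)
Your skeleton (reduce to $\P(D_t\ge 0)$ via \eqref{eq:etdt}, build a time-weighted Lyapunov function, prove a supermartingale inequality, finish with Markov, absorb small $t$ into $N^q$) matches the paper's. However, the mechanism you credit with the exponent is the wrong one, so the Lyapunov function is never pinned down in a form that could work.

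\textbf{Where the exponent actually comes from.} The rate is set on the \emph{bad} side $D_t\ge 0$, where it is the \emph{optimal} arm that is starved. There $\underline{\hat L}_{t,i_*}=D_t^+$, so $p_{t,i_*}\approx 4t/(\alpha^2D_t^2)$, and $\hat\ell_{t,i_*}$ has conditional second moment $\mu_{i_*}/p_{t,i_*}\approx\mu_{i_*}\alpha^2D_t^2/(4t)$. The paper uses $g_t(D_t)=t^{-\theta}(D_t+\Delta t)^2$ there, a Chebyshev-type control of the nearly martingale $D_t+\Delta t$. Over a block, the quadratic term costs $\tfrac{\alpha^2\mu_{i_*}}{4}\tfrac{L}{t}$ and the factor $(1+L/t)^{-\theta}$ gains $\theta\tfrac{L}{t}$, which forces $\theta>\alpha^2\mu_{i_*}/4$. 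No power above $2$ is possible, because $\E_t[\hat\ell_{t,i_*}^m]\sim p_{t,i_*}^{1-m}$ blows up for $m>2$. This branch is glued at $0$ to $\Delta^2t^qe^{\lambda y}$, so that $g_t\ge\Delta^2t^q$ on $\{D_t\ge0\}$. The negative-branch drift computation constrains $q$ only through $t\ge 2qL/(\lambda\Delta)$, so the cap on $q$ comes entirely from the positive side.

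\textbf{Why your heuristic cannot produce the statement.} You analyze only $D_t\ll 0$ with the competitor's noise, which causes three problems.
\begin{enumerate}
\item Your own It\^o computation gives decay $t^{-\kappa+\kappa(\kappa+1)\alpha^2\mu/8}$. For $\kappa=3$ and small $\alpha$ this is faster than $t^{-2}$, contradicting Theorem~\ref{lower}. So the appeal to ``balancing jumps'' is not a derivation.
\item The $\mu_{i_*}$ in your formula does not follow from that computation; the competitor's second moment involves $\mu_j$.
\item $(-D_t)^{-\kappa}$ is singular at $D_t=0$, so it cannot feed Markov's inequality for $\P(D_t\ge0)$.
\end{enumerate}
Your Steps 2 and 4 are also inconsistent: Step 2 asks for $\Phi_s\ge c$ on $\{D_s\ge 0\}$, while Step 4 needs $\Phi_t\ge ct^q$ there.

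\textbf{The one-step bound fails.} A bound $\E_s[\Phi_{s+1}]\le\Phi_s(1+Cs^{-1-\rho'})$ cannot hold. Note first that $N_s$ is nondecreasing, not decreasing. When two competitors are (nearly) tied at $N_s$, $N_{s+1}-N_s$ is at most their gap whichever arm is pulled. So $D$ drifts up at rate about $\mu_{i_*}$, while your time weight grows by the non-summable factor $(1+1/s)^q$. The paper therefore works with blocks of length $L\asymp d\log d/\Delta^2$. It proves $\E[N_{t+L}-N_t\mid\mathscr F_{t-1}]\ge(\mu_{i_*}+\Delta)L-o(L)$ (Lemma~\ref{ntgrow}) and an MGF bound (Lemma~\ref{tt2}), by restricting to the near-minimal competitors $\mathcal A_t$ and using Freedman's inequality.

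\textbf{Missing pieces.} The block estimates need $D_t\ge-\sqrt{dt}/\alpha$, since deeper in the good region the competitors' importance weights are unbounded. The paper handles the complement with the excursion decomposition of Lemma~\ref{ht}, using that each re-entry happens at $D\le-\sqrt{ds}/(2\alpha)$, where $g$ is tiny. Finally, the factor $(d^3\log^2 d/\Delta^4)^q$ is $N^q$, with $N$ forced by $4dL\le\sqrt{dN}/(4\alpha)$ together with the initial moment bound from Lemma~\ref{dt}. It does not come from a regret-type bound on $\sum_s p_{s,i}$.
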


This result shows that, when the learning rate is chosen as \(\alpha/\sqrt{t}\), the failure probability of the importance-weighted estimator of \(1/2\)-Tsallis-INF to identify the optimal arm decays at rate \(t^{-2+\alpha^2\mu_{i_*}/4+\rho}\), where \(\rho>0\) can be taken arbitrarily small. 

% As \(\alpha\) decreases to $0$, this rate approaches \(t^{-2}\), matching the lower bound Theorem \ref{lower} we establish later. This shows that, in this sense, our result is tight. 

The dependence of the polynomial exponent on the learning-rate coefficient \(\alpha\) may seem counterintuitive at first. 
However, it is consistent with the quantitative trade-off between regret minimization and best-arm identification. 
Informally, classical lower bounds relating cumulative regret and simple regret imply that if a fixed policy has regret of order \(C_\pi\log t\), then its BAI-type failure probability cannot decay faster than $t^{-\Theta(C_{\pi})}$; see, e.g., Exercise 33.5(3) in \citet{lattimore2020bandit}. 
Thus, constants that are often secondary in logarithmic regret bounds become leading-order quantities after exponentiation. 
For \(1/2\)-Tsallis-INF, this constant becomes larger as \(\alpha\) decreases \citep{zimmert2021tsallis}. 
This is compatible with our result: as \(\alpha\) decreases to $0$, this rate approaches \(t^{-2}\), 
%matching
which nearly matches
the lower bound Theorem \ref{lower} we establish later. 
% In this sense, our result is tight.
% Hence, the dependence on \(\alpha\) should be viewed as a manifestation of the quantitative RM--BAI trade-off rather than merely as a proof artifact.

% In particular, when \(\mu_{i_*}=0\), that is, when the loss of the optimal arm is always $0$, the decay rate improves to \(t^{-2+\rho}\), independently of the learning-rate parameter. It's worth noting that though this case is simpler, it's still nontrivial because losses of other arms may also be $0$.

To the best of our knowledge, this is the first result that directly analyzes the probabilistic behavior of the unmodified importance-weighted estimator. Such a result has long been considered difficult to obtain, precisely because although the importance-weighted estimator \(\hat{L}_{t,i}\) is unbiased for the cumulative loss, its variance is too large for standard concentration inequalities or even the law of large numbers to be applicable. More specifically, if FTRL achieves the optimal logarithmic regret in stochastic bandits, then one would intuitively expect the probability \(p_{t,i}\) of selecting a suboptimal arm \(i\) at round \(t\) to decay at rate \(t^{-1}\). However, the importance-weighted estimator satisfies
\(
\hat{\ell}_{t,i}=\frac{\ell_{t,i}A_{t,i}}{p_{t,i}},
\)
which implies that its variance is of order \(t\). Consequently, the standard deviation of
\(
\hat{L}_{t,i}=\sum_{s=1}^{t-1}\hat{\ell}_{s,i}
\)
is also of order \(t\), already on the same scale as its expectation.

To circumvent this difficulty, we design a Lyapunov function:
\begin{equation}\label{gt2}
    g_t(y):=
\begin{cases}
\Delta^2 t^q e^{\lambda y}, & y<0,\\[4pt]
t^{-\theta}(y+\Delta t)^2, & y\ge 0,
\end{cases}
\qquad
q=2-\theta,
\end{equation}
where \(\lambda > 0\) is a parameter that will be determined in the subsequent proof. Then we show that:
\begin{Thm}\label{lya}
    Under the same conditions as in Theorem~\ref{et}, there exist $\lambda>0$ and constants $C_{\alpha,q,\rho},C'_{\alpha,q,\rho}>0$ depending only on $\alpha$, $q$ and $\rho$.
    Let $N=\left\lfloor C'_{\alpha,q,\rho}d^3(\log(d))^2/\Delta^4\right\rfloor+1$. Then, for every $t\ge N$,
    \[
    \E\left[g_t(D_t)
    %\1_{\{D_t\ge-\sqrt{t}\}}
    \right]\leq C_{\alpha,q,\rho}\frac{d^{3q}(\log(d))^{2q}}{\Delta^{4q-2}},
    \]
    where $D_t=\hat{L}_{t,i_*}-N_t$ and $N_t=\min _{i\in\mathcal{I},i\ne i_*} \hat{L}_{t,i}$.
\end{Thm}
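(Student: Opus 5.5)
We prove Theorem~\ref{lya} by a one-step drift inequality for the Lyapunov function $g_t$ followed by induction in $t$. The target is a recursion of the form
\[
\E\bigl[g_{t+1}(D_{t+1})\bigr]\le \Bigl(1+\frac{c_t}{t}\Bigr)\E\bigl[g_t(D_t)\bigr]+B_t ,
\]
valid for $t\ge N$. Here $\prod_{s\le t}(1+c_s/s)$ should stay bounded up to the desired constant, or more precisely the recursion should close on the scale $t^{q}$ inherited from $t^{-\theta}(\Delta t)^2=\Delta^2t^{q}$. The base case $t=N$ is handled crudely: $D_N\le \hat L_{N,i_*}\le \sum_{s<N}1/p_{s,i_*}$, and $p_{s,i_*}$ is bounded below by the closed form \eq{pti}, so $\E[g_N(D_N)]$ is polynomial in $N$. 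Since $N\asymp d^3(\log d)^2/\Delta^4$, this produces exactly the factor $d^{3q}(\log d)^{2q}/\Delta^{4q-2}$.

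The drift computation is split according to the sign of $D_t$. On each round only one coordinate of $\hat L$ moves, so $D_{t+1}-D_t$ is either $+\ell_{t,i_*}/p_{t,i_*}$, or $-\ell_{t,j}/p_{t,j}$ when the current minimizing suboptimal arm $j$ is played, or $0$. The minimum over suboptimal arms can only decrease $N_t$, and that works in our favour.

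\textbf{Region $D_t\ge0$ (quadratic branch).} The term $(y+\Delta t)^2$ is designed so that its increment, conditioned on $\mathscr F_t$, is at most $2(D_t+\Delta t)\E_t[\Delta D]+\E_t[(\Delta D)^2]+2\Delta(D_t+\Delta t)$. We have $\E_t[\Delta D]\le \mu_{i_*}-\mu_j=-\Delta_j\le-\Delta$, which cancels the $+\Delta$ from the shift. The variance term $\E_t[(\Delta D)^2]\le 1/p_{t,i_*}+1/p_{t,j}$ is the main cost. When $D_t\ge0$, arm $i_*$ is not the empirical leader, yet \eq{pti} still gives $p_{t,j}\gtrsim (\eta_t D_t+2p_{t,i^*_t}^{-1/2})^{-2}$. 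This turns $1/p_{t,j}$ into roughly $\alpha^2D_t^2/t+O(d)$. The resulting multiplicative factor $\bigl(1+\alpha^2\mu_{i_*}/(4t)\bigr)$ per step (via the $i_*$ variance term after the careful constant count) against the decay $t^{-\theta}$ is precisely where the condition $q<2-\alpha^2\mu_{i_*}/4$ enters.

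\textbf{Region $D_t<0$ (exponential branch).} Here $i_*$ leads, $p_{t,i_*}\ge 1/d$, and arm $j$ has $p_{t,j}\approx 4(\eta_t|D_t|+2)^{-2}$. For the exponential, $\E_t[e^{\lambda\Delta D}]$ is bounded as follows. The upward jump $\ell_{t,i_*}/p_{t,i_*}\le d$ is bounded and has probability $p_{t,i_*}$. The downward jump is huge (size $1/p_{t,j}$) but rare, and it decreases $e^{\lambda y}$, so we may drop it except for the loss of its mean contribution. This yields $\E_t[e^{\lambda \Delta D}]\le 1-\lambda\Delta' +O(\lambda^2 d)$ with $\lambda$ small of order $\Delta/d$. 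The crossing from $y<0$ to $y\ge0$ is absorbed by the continuity scale of $g_t$ at $0$ (both branches equal $\Delta^2t^q$). Likewise, a downward jump from $y\ge0$ to $y<0$ only lowers $g$, since $e^{\lambda y}\le1$.

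The main obstacle is the quadratic branch. Near $D_t\approx0$ the variance $1/p_{t,j}$ is of order $t$, i.e.\ the same order as the drift times $\Delta t$, so the constants must be tracked exactly rather than up to $C$. I would first try to bound $1/p_{t,j}$ using \eq{pti} together with $p_{t,i^*_t}\ge1/d$, and then use $t\ge N$ to absorb the additive $d$ and $\log d$ errors into the $\rho$ slack. Summing the recursion then gives $\E[g_t(D_t)]\le C\,d^{3q}(\log d)^{2q}/\Delta^{4q-2}$.
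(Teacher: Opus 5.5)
Your one-step drift claim fails for $d\ge 3$, and this is the central gap. You assert $\E[D_{t+1}-D_t\mid\mathscr F_{t-1}]\le\mu_{i_*}-\mu_j\le-\Delta$ and say the minimum over suboptimal arms ``works in our favour''. The inequality actually goes the other way. If $j$ attains $N_t$ and is played, then $N_{t+1}-N_t=\min\{\hat\ell_{t,j},\,\hat L_{t,k}-\hat L_{t,j}\}$, where $k$ is the runner-up among suboptimal arms. So the minimum \emph{caps} the decrease of $D$, and $\E[D_{t+1}-D_t\mid\mathscr F_{t-1}]\ge\mu_{i_*}-\mu_j$.

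At a tie $\hat L_{t,j}=\hat L_{t,k}=N_t$ one has $N_{t+1}=N_t$ surely, so the one-step drift of $D$ is $+\mu_{i_*}$; near-ties within the jump size $O(d)$ behave similarly. With $D_t=0$, i.e.\ $x=\Delta t$, the quadratic branch then gives a conditional ratio at least $(1+1/t)^{-\theta}\bigl(1+(\mu_{i_*}+\Delta)/(\Delta t)\bigr)^2>1$ for large $t$, because $2(\mu_{i_*}+\Delta)/\Delta\ge 2>\theta$. The exponential branch loses its $-\lambda\Delta$ term as well. Hence $\E[g_{t+1}(D_{t+1})\mid\mathscr F_{t-1}]\le g_t(D_t)$ cannot hold pointwise, and no one-step recursion closes.

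The missing idea is to average the growth of $N_t$ over blocks of length $L\asymp d\log(d)/\Delta^2$. Lemmas~\ref{ntgrow} and \ref{tt2} give $\E[N_{t+L}-N_t\mid\mathscr F_{t-1}]\ge(\mu_{i_*}+\Delta)L-C(\sqrt{dL\log d}+\log d)$. The argument uses the fact that the minimizer at time $t+L$ lies in the set $\mathcal A_t$ of arms within $4dL$ of $N_t$. The importance-weighted increments of these arms stay bounded by $4d$ during the block, so Freedman and MGF bounds apply, and a maximal inequality handles the minimum. This yields the $L$-step supermartingale of Lemma~\ref{formalsuper}. The block condition $4dL\le\sqrt{dN}/(4\alpha)$ is exactly what forces $N\asymp d^3(\log d)^2/\Delta^4$. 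Your one-step scheme has no mechanism producing these factors, so the claimed match with $d^{3q}(\log d)^{2q}$ is not derived.

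Two further steps also fail as written.

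First, the exponential-branch bound $1-\lambda\Delta'+O(\lambda^2 d)$ cannot hold on all of $\{D_t<0\}$. The gain from the downward jump is at most $p_{t,j}\,\E\bigl[1-e^{-\lambda\ell_{t,j}/p_{t,j}}\bigr]\le\min\{p_{t,j},\lambda\mu_j\}$. Once $p_{t,j}\le 4t/(\alpha^2D_t^2)$ drops below $\lambda\mu_{i_*}$, this gain no longer offsets the cost $\ge\lambda\mu_{i_*}$ of the upward jump, and the drift becomes positive. You cannot keep the ``mean contribution'' of a jump that saturates the exponential. The paper proves the drift only on $\{D_t\ge-\sqrt{dt}/\alpha\}$. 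It controls the complement through $g_t(D_t)\le\Delta^2t^qe^{-\lambda\sqrt{dt}/\alpha}$ and the stopping-time decomposition of Lemma~\ref{ht}. A summable additive term $B_t$ in your recursion could play this role, but you have to identify it.

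Second, the base case does not work as described. The lower bound on $p_{s,i_*}$ from \eq{pti} depends on $D_s^+$, so using it is circular. Replacing $D_N$ by $\hat L_{N,i_*}$ is also too lossy: $\E[\hat L_{N,i_*}^2]\ge\mu_{i_*}^2(N-1)^2$, so you get at best about $(\mu_{i_*}+\Delta)^2N^q$. That is a factor up to $\Delta^{-2}$ worse than the claimed $d^{3q}(\log d)^{2q}/\Delta^{4q-2}$. The paper instead uses Lemma~\ref{dt}, $\E[(D_u^+)^2]=\E[\underline{\hat L}_{u,i_*}^2]\le C_\alpha du$, which captures the cancellation between $\hat L_{u,i_*}$ and $N_u$, together with $\Delta^2N\ge d$.

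Two smaller points. When $D_t\ge 0$, the minimizing suboptimal arm is the empirical leader with $p_{t,j}\ge 1/d$. The large variance comes from $1/p_{t,i_*}\le(\alpha D_t/(2\sqrt t)+\sqrt d)^2$, not from $1/p_{t,j}$. Also, crossings of $0$ require the comparison in Lemma~\ref{lem:comparison_gn}, not just continuity of $g_t$ at $0$.
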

We sketch its proof in Section \ref{pfsklya} and its full proof is deferred to Appendix \ref{pflya}.
For $t\ge N$, Theorem~\ref{et} follows from Theorem~\ref{lya}; for $1\le t<N$, we use the trivial bound $\Err_t\le1$. To see this, note that $g_t(y)\ge \Delta^2 t^{q}$ for any $y\ge 0$. Then, for $t\ge N$, Markov's inequality gives
\begin{equation}\label{etgt}
    \Err_t=\P(i_t^*\neq i_*)\leq \P(D_t\ge 0)\leq \Delta^{-2}t^{-q}\E\left[g_t(D_t)
    % \1_{\{D_t\ge-\sqrt{t}\}}
    \right].
\end{equation}%这个结果也是改进，待补充
For $1\le t<N$, the trivial estimate $\Err_t\le1\le(N/t)^q$ applies. Since $N^q\le C_{\alpha,q,\rho}\allowbreak d^{3q}\allowbreak(\log(d))^{2q}/\Delta^{4q}$, combining this estimate with the bound for $t\ge N$ proves Theorem~\ref{et} for every $t\ge1$.
Therefore, \(\Err_t\) decays at least at the rate \(t^{-q}\).

We will explain in detail how we construct \(g_t\) in Section~\ref{construct}. There, we study two simplified models, one of which approximates \(D_t\) by the trajectory of a continuous-time diffusion process. By applying It\^o's formula, we derive the PDE that \(g_t\) should satisfy, which in turn leads us to its explicit form. Judging from the final result, this simplified model captures an essential part of the problem. We also believe that such simplified models would be useful for studying other properties of FTRL algorithms in bandit settings.

\subsection{Lower Bound}
\label{sec:main-lower}
We next state a lower bound:
\begin{Thm}\label{lower}
    For Algorithm \ref{alg: FTRL}, if $i_*$ is unique, $\mu_{i_*}>0$ and $0<\alpha<1$, then for any $\varepsilon>0$, there exist $C $ and $C'>0$ that are independent of $t$ such that for any $t\ge C$, we have
    \[
    \Err_t\ge C' t^{-2-\varepsilon}.
    \]
\end{Thm}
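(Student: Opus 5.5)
We plan to exhibit an explicit "bad" event of probability at least $C't^{-2-\varepsilon}$ on which $D_t>0$. Since $\Err_t\ge\P(D_t>0)$ by \eq{eq:etdt}, this gives the bound. The mechanism is the one behind the exponent $2$ in the toy models. A suboptimal arm is played with probability only of order $1/s$ at round $s$. Hence if the optimal arm $i_*$ is pulled at a round where its sampling probability is bounded away from $1$, or if a suboptimal arm is never pulled for a long stretch, the importance weight produces an $O(t)$-sized jump in $D_t$. The cleanest candidate is the following. Near time $t$, the optimal arm's estimate $\hat L_{t,i_*}$ accumulates $\sum_s \ell_{s,i_*}/p_{s,i_*}$. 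The competing arm $j$ with gap $\Delta_j$ is observed only rarely, and if it is not pulled at all on an interval $[t/2,t)$, then $\hat L_{t,j}$ stays frozen. Meanwhile $\hat L_{t,i_*}$ grows by about $\mu_{i_*}t/2>0$; this is where $\mu_{i_*}>0$ is used. The frozen arm then overtakes $i_*$ provided $\hat L_{t/2,j}-\hat L_{t/2,i_*}$ was at most about $\mu_{i_*}t/2$.

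The key steps, in order, are as follows.

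\emph{Step 1.} Using the upper-bound machinery (Theorem~\ref{lya} and the regret-type control of $p_t$), we show that with probability at least $1/2$, at time $t/2$ we have $D_{t/2}\le -c\,t$ for some $c$ and every suboptimal $\underline{\hat L}_{t/2,j}\le C t$. By \eq{pti}, the latter gives $p_{s,j}\ge c'/s$ as long as $\hat L_{s,j}$ does not increase.

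\emph{Step 2.} We need the gap to close during $[t/2,t)$ while arm $j$ is silent. The problem is that $\hat L_{s,j}-\hat L_{s,i_*}$ at $t/2$ is of order $\Delta_j t/2$ while $i_*$ only gains $\mu_{i_*}t/2$. So instead we freeze arm $j$ over a window $[\delta t, t)$ with small $\delta$. Then $\hat L_{\delta t,j}\approx\mu_j\delta t$ while $\hat L_{t,i_*}\approx\mu_{i_*}t>\mu_j\delta t$ once $\delta<\mu_{i_*}/\mu_j$. The probability that arm $j$ is never played on $[\delta t,t)$ is $\prod_s(1-p_{s,j})$.

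\emph{Step 3.} While $j$ is not pulled, $\underline{\hat L}_{s,j}$ shrinks relative to $M_s$. Hence $p_{s,j}$ grows, behaving like $4/(\eta_s(\mu_{i_*}s-\mu_j\delta t)+2)^2$, which is $\asymp 4/(\alpha^2\mu_{i_*}^2 s)$ only roughly. We compute $\sum_{s=\delta t}^t p_{s,j}$ carefully. We expect it to give $\prod(1-p_{s,j})\gtrsim t^{-2-\varepsilon}$ once $\delta$ is tuned, with the $\varepsilon$ absorbing $\delta$-dependent and concentration losses. The pulls of the other suboptimal arms are handled similarly or simply intersected with a positive-probability event.

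\emph{Step 4.} We use concentration for $\hat L_{s,i_*}$, since $p_{s,i_*}\ge 1/2$ and hence the increments are bounded, to show that on the freezing event $\hat L_{t,i_*}$ indeed exceeds $\hat L_{t,j}$ with conditional probability bounded below.

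The main obstacle is Step 3. The sampling probability $p_{s,j}$ is itself a function of the evolving gap, so the freezing probability is a self-referential product. Matching the exponent $2$ up to $\varepsilon$ requires accurately tracking $p_{s,j}$ through \eq{pti} along the frozen trajectory, including the normalization term $p_{s,i_t^*}^{-1/2}$, and choosing the window and $\delta$ so that the total exponent is at most $2+\varepsilon$. We would first try writing $x_s=\eta_s\underline{\hat L}_{s,j}$ as a deterministic ODE along the event, $\dot x=-\alpha\mu_{i_*}/(2\sqrt s)\cdot\ldots$, and then bounding $\int p\,ds$ explicitly.
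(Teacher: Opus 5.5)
\textbf{There is a genuine gap.} The event you build has super-polynomially small probability, so no choice of $\delta$ can give $t^{-2-\varepsilon}$.

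In your scenario, arm $j$ overtakes $i_*$ at $s^*\approx\mu_j\delta t/\mu_{i_*}<t$. From then on $j$ is the empirical minimiser, so $\underline{\hat L}_{s,j}=0$ and Lemma~\ref{4-2} gives $p_{s,j}\ge 1/d$; in fact $p_{s,j}\to1$ as the margin grows. Keeping $j$ unpulled on $[s^*,t)$ therefore costs at most $(1-1/d)^{t-s^*}$.

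Tuning $\delta$ so that $s^*\approx t$ does not help, because the rounds just before the crossing are already fatal. Once $\underline{\hat L}_{s,j}\le\sqrt s/\alpha$, Lemma~\ref{4-2} gives $p_{s,j}\ge 4/(1+2\sqrt d)^2$. While $i_*$ leads, $M_s$ increases by at most $\hat\ell_{s,i_*}\le 1/p_{s,i_*}\le d$ per round, so this stretch lasts at least about $\sqrt s/(\alpha d)$ rounds. The freezing probability is then $\exp(-c\sqrt t)$.

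The rate in your Step 3, $4/(\eta_s(\mu_{i_*}s-\mu_j\delta t)+2)^2\asymp 4/(\alpha^2\mu_{i_*}^2 s)$, is the sampling probability of $i_*$ after it has lost the lead, not that of $j$. Had it been the freezing rate, $\prod_{s=\delta t}^{t}(1-p_{s,j})$ would be of order $\delta^{4/(\alpha^2\mu_{i_*}^2)}$, independent of $t$. That would contradict Theorem~\ref{et}, which is a sign the accounting is off.

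More generally, any route that flips the sign of $D$ at time scale $s$ must push $D$ through $[-\sqrt s/\alpha,0]$. There its increments are $O(d)$ and its drift is about $-\Delta$, so the cost is $e^{-c\sqrt s}$. And freezing a suboptimal arm can never keep $D>0$ over a long window. Your Step 4 also fails as stated: $p_{s,i_*}\ge 1/2$ does not hold for $d>2$, and after the crossing $p_{s,i_*}$ is small, so the increments of $\hat L_{s,i_*}$ are unbounded.

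\textbf{The missing idea.} Pay for the sign change only once, at a fixed time $T$ independent of $t$ (Lemma~\ref{start}). Then sustain $D_s>0$ through the importance weights of the optimal arm itself.

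When $D_s\in[Bs,3Bs]$, we have $p_{s,i_*}\asymp 1/(\alpha^2B^2 s)$. A pull of $i_*$ with $\ell_{s,i_*}\ge\mu_{i_*}/2$ occurs with probability at least $\mu_{i_*}/2$; this is where $\mu_{i_*}>0$ is really used. Such a pull makes $\hat L_{s,i_*}$ jump by order $\mu_{i_*}\alpha^2B^2 s$.

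Since $N$ grows by at most $4d$ per round, $D$ then stays positive while $i_*$ sits unpulled, until time has been multiplied by $a_B\asymp\mu_{i_*}\alpha^2 B$. The next pull lands in the right window (where again $D_s/s\in[B,3B]$) with conditional probability $q_B\asymp\mu_{i_*}/(\alpha^2B^2)$; this is Lemma~\ref{base}.

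Chaining about $\log_{a_B}t$ such renewals gives $\Err_t\gtrsim t^{\log q_B/\log a_B}$, and $\log q_B/\log a_B\to-2$ as $B\to\infty$.
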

We leave its proof in Appendix \ref{pflower}. Here, we do not spell out the explicit form of \(C,C'\), because their dependence on parameters such as \(d\), \(\Delta\), and \(\varepsilon\) is too complicated. This lower bound shows that, for the \(1/2\)-Tsallis-INF algorithm, \(\Err_t\) cannot decay polynomially at a rate faster than \(t^{-2}\), which is nearly tight by letting $\alpha\to 0$ in Theorem \ref{et}.
% matches the upper bound in Theorem~\ref{et}.

\begin{Rema}
    % Although our results focus on the \(1/2\)-Tsallis-INF algorithm, our proofs---for both the upper and lower bounds---can be extended rather straightforwardly to the general \(\theta\)-Tsallis-INF algorithm.
    Although our results focus on the \(1/2\)-Tsallis-INF algorithm, our proofs of both the upper and lower bounds suggest that analogous extensions to the general \(\theta\)-Tsallis-INF algorithm should be possible.
\end{Rema}

\section{Construction of the Lyapunov Function}\label{construct}
In this section, we explain how to derive the Lyapunov function $g_t$ in \eq{gt2}, which is used to establish the upper bounds in Section~\ref{sec:main-upper}, through two simplified toy models in the two-arm setting.
In this simplified setting, there is only one non-optimal arm, and \(D_t\) coincides with the difference between the cumulative losses of the optimal and non-optimal arms.
This makes the behavior of \(D_t\) easier to analyze.

% More precisely, 
According to \eq{etgt}, to obtain the final result in Theorem~\ref{et}, we need two ingredients: the expectation of \(g_t(D_t)\) should be controlled, and \(g_t(y)\) should grow sufficiently fast in \(t\) on the positive half-line. 
% Hence, In other word, 
In particular, we aim to find the nonnegative function \(g_t(y)\) satisfying the following two properties:
\begin{enumerate}[label=(P\arabic*), ref=P\arabic*]
    \item \label{item:lyapunov-growth}
    there exists some \(q>0\) such that, uniformly for all \(y\ge 0\),
    \(
    g_t(y)=\Omega(t^q);
    \)
    \item \label{item:lyapunov-moment}
    for all \(t \ge 1\), \(\E[g_t(D_t)]\) is uniformly bounded, where
    \(
    D_t=\hat{L}_{t,i_*}-\min_{i\in\mathcal{I},\, i\neq i_*}\hat{L}_{t,i}.
    \)
\end{enumerate}
Because the quantity of interest is $D_t$, we focus on its stochastic dynamics. The two toy models introduced below are simplified versions of this dynamics.

\subsection{Toy Models}
To motivate the construction, we introduce two simplified models of $D_t$ in this subsection.

The key simplification is to focus on the importance-weighted estimator \(\hat L_{t,i_*}\) of the optimal arm, while suppressing the effects of the complicated dynamics of the other one. 
% To this end, We first consider the case where 
With \(d=2\) and \(i_*=1\), we have
\(
D_t=\hat L_{t,1}-\hat L_{t,2}.
\)
To remove the effects caused by arm $2$, we replace \(\hat L_{t,2}\) by its mean \((t-1)\mu_2\). 
After this replacement, we obtain a new process \(\tilde D_t\), whose dynamics can be expressed as
\begin{equation}\label{original}
\widetilde{D}_{t+1}=\widetilde{D}_t+\hat \ell_{t,1}-\mu_2
% \hat L_{t+1,1}=\hat L_{t,1}+\hat \ell_{t,1}
,\quad
\hat \ell_{t,1}=\frac{\ell_{t,1}\1_{\{I_t=1\}}}{p_{t,1}},\text{ and by \eq{pti}, }p_{t,1}=\Bigl(\frac{\eta_t \widetilde{D}_t^+}{2} + p_{t,i_t^*}^{-1/2}\Bigr)^{-2},
\end{equation}
% where we used $\widetilde{D}_t$ to distinguish it from the true $D_t$.
% Since \(D_t=\hat L_{t,1}-\hat L_{t,2}\), a natural first approximation is to replace \(\hat L_{t,2}\) by its mean \((t-1)\mu_2\), so that \(D_t\) is approximated by \(\hat L_{t,1}-(t-1)\mu_2\). This simplification deliberately removes the effects caused by the complicated dynamics of arm \(2\), allowing us to focus on the behavior of arm \(1\) and leading to the following toy models of $D_t$.
Based on this preliminary simplification, 
% in \eq{original}, 
we further propose two toy models:
\paragraph{Toy model 1.}
We further simplify the update rule \eq{original} of \(\widetilde D_t\) so that the resulting process has a cleaner Markovian structure. 
First, we replace the term \(p_{t,i_t^*}^{-1/2}\) appearing in \(p_{t,1}\), which is not determined by \(\widetilde D_t\), by \(1\). 
This is justified since it is of constant order, as \(1/2\le p_{t,i_t^*}\le 1\). 
Second, since the randomness of \(\ell_{t,1}\) is also not determined by \(\widetilde D_t\), we replace it by its mean \(\mu_1\). Therefore, we first consider the discrete-time process
\begin{equation}\label{toy1}
    Y_{t+1}=Y_t-\mu_2+\mu_1\frac{\1_{\{A_{t+1}=1\}}}{p_t}=Y_t-\Delta+\mu_1\left(\frac{\1_{\{A_{t+1}=1\}}}{p_t}-1\right),
\end{equation}
where $\Delta:=\mu_2-\mu_1$ and
\(
p_t=\left(1+\frac{\alpha Y_t^+}{2\sqrt t}\right)^{-2},
\mathbb P(A_{t+1}=1\mid \mathscr{F}_t)=p_t.
\)
% Compared to \eq{original}, first, we  replace the factor \(p_{t,i_t^*}^{-1/2}\) in the exact expression of \(p_{t,1}\) by $1$, since it is of constant order by noting that $\frac{1}{2}\leq p_{t,i_t^*}\leq 1$. Then, we replace the random loss $\ell_{t,1}$ by its mean \(\mu_1\).

% This model is obtained from the dynamics of \(D_t\) by the following simplifications: 
% \begin{enumerate}
%     \item we replace the factor \(p_{t,i_t^*}^{-1/2}\) in the exact expression of \(p_{t,1}\) by a constant, since it is of constant order;
%     % we replace the exact expression of \(p_{t,1}\) by the simpler form \(\bigl(1+\alpha Y_t^+/2\sqrt t\bigr)^{-2}\), retaining only its leading dependence on the positive part of the state;
%     % \item we replace the factor \(p_{t,i_t^*}^{-1/2}\) by a constant, since it is of constant order;
%     \item we replace the random loss $\ell_{t,1}$ by its mean \(\mu_1\).
% \end{enumerate}

\paragraph{Toy model 2.}
Recall that Property~(\ref{item:lyapunov-moment}) requires \(g_t(D_t)\) to be a supermartingale, and It\^o's formula is a standard tool for analyzing such a property.
To apply It\^o's formula, we further pass from Toy model~1 to a continuous-time approximation. Indeed, in \eq{toy1}, let $\zeta_{t+1}:=\mu_1\left(\frac{\1_{\{A_{t+1}=1\}}}{p_t}-1\right)$, then one can see that
\(
\mathbb E[\zeta_{t+1}\mid \mathscr{F}_t]=0, \mathrm{Var}(\zeta_{t+1}\mid \mathscr{F}_t)
=\mu_1^2\frac{1-p_t}{p_t}
\lesssim
\left(1+\frac{\alpha Y_t^+}{2\sqrt t}\right)^2.
\)
Replacing the discrete martingale increment \(\zeta_{t+1}\) by a Gaussian fluctuation with the same drift and variance of the same order, and then passing to a continuous-time approximation, we are led to the continuous-time diffusion
\begin{equation}\label{toy2}
    \d Y_t=-\Delta\,\d t+\left(1+\frac{\alpha Y_t^+}{2\sqrt t}\right)\d B_t,
\text{ where 
$\Delta=\mu_2-\mu_1$ and $B$ is a Brownian motion.}
\end{equation}

\subsection{Explicit Form of the Lyapunov Function}
In this subsection, we study Toy model 2 in order to identify the correct form of the Lyapunov function. We seek a nonnegative family \(g_t\) satisfying Property~(\ref{item:lyapunov-growth}) such that \(g_t(Y_t)\) is a supermartingale, which would in turn imply Property~(\ref{item:lyapunov-moment}).

If \(f(t,Y_t)\) is a supermartingale, then It\^o's formula suggests that \(f\) should satisfy
\begin{equation}\label{eq:toy2-pde}
\partial_t f(t,y)-\Delta\,\partial_y f(t,y)
+\frac12\left(1+\frac{\alpha y^+}{2\sqrt t}\right)^2 \partial^2_{yy}f(t,y)
\le 0.
\end{equation}
We now remove the first-order term by introducing
\(
x:=y+\Delta t\), and $h(t,x)=f(t,y)$. Then \eq{eq:toy2-pde} becomes
% \begin{equation}\label{eq:toy2-pde-h}
% \partial_t h(t,x)
% +\frac12\left(1+\frac{\alpha (x-\Delta t)^+}{2\sqrt t}\right)^2 \partial_{xx}h(t,x)
% \le 0.
% \end{equation}
\[
\partial_t h(t,x)
+\frac12\left(1+\frac{\alpha (x-\Delta t)^+}{2\sqrt t}\right)^2 \partial^2_{xx}h(t,x)
\le 0.
\]
A standard idea in solving or approximating parabolic equations is to separating time and space. Hence, we now assume that
% look for solutions in separated form, namely, as a product of a function of time and a function of space. Motivated by this classical separation-of-variables ansatz, we now assume that
\(
h(t,x)=u(t)v(x).
\)
% Although we do not expect to solve the above inequality exactly, this ansatz is very useful for identifying the correct scaling of the Lyapunov function.
Then, we have
\begin{equation}\label{sep}
    u'(t)v(x)
+\frac12\left(1+\frac{\alpha (x-\Delta t)^+}{2\sqrt t}\right)^2 u(t)v''(x)
\le 0.
\end{equation}
% We first consider the positive half-line.
Next, we distinguish the cases according to the sign of \(x-\Delta t\). The following calculation is
heuristic and only motivates the Lyapunov choice.

\paragraph{The positive half-line.}
When \(x\ge \Delta t\), the coefficient in \eq{sep} is
\(
\left(1+\frac{\alpha(x-\Delta t)}{2\sqrt t}\right)^2.
\)
For large positive \(x\), its dominant part is of order \(x^2/t\). Accordingly, 
%the leading-order equation
\eq{sep} becomes
% \[
% \partial_t h(t,x)+\frac{\alpha^2 x^2}{8t}\,\partial_{xx}h(t,x)\lesssim 0.
% \]
% Substituting \(h(t,x)=u(t)v(x)\) yields
% \[
% u'(t)v(x)+\frac{\alpha^2 x^2}{8t}u(t)v''(x)\lesssim 0,
% \]
% or equivalently,
\[
\frac{t\,u'(t)}{u(t)}+\frac{\alpha^2}{8}\frac{x^2v''(x)}{v(x)}\lesssim 0.
\]
Since the left-hand side is a sum of a function of \(t\) and a function of \(x\), we are naturally led to require
\(
t\,u'(t)=-\theta u(t)\), and \(
\frac{\alpha^2}{8}x^2v''(x)\leq  \theta v(x).
\)
The first relation gives
\(
u(t)=t^{-\theta}.
\)
The second one's 
%is an Euler-type inequality,whose 
natural power-law solutions are \(v(x)=x^m\), where $\theta\ge \frac{\alpha^2m(m-1)}{8}.$ Therefore, on the positive half-line, the separated ansatz leads to, for \(
\theta\ge \frac{\alpha^2m(m-1)}{8},
\)
\[
h_+(t,x)=t^{-\theta}x^m,
\qquad\text{equivalently}\qquad
f_+(t,y)=t^{-\theta}(y+\Delta t)^m.
\]

\paragraph{The negative half-line.}
We proceed in the same way. When \(x<\Delta t\), we have \((x-\Delta t)^+=0\), and \eq{sep} reduces to
% \[
% \partial_t h(t,x)+\frac12\,\partial_{xx}h(t,x)\le 0.
% \]
% Substituting again \(h(t,x)=u(t)v(x)\), we obtain
% \begin{equation}\label{uv2}
%     u'(t)v(x)+\frac12 u(t)v''(x)\le 0,
% \end{equation}
% that is,
\begin{equation}\label{uv2}
    \frac{u'(t)}{u(t)}+\frac12\frac{v''(x)}{v(x)}\le 0.
\end{equation}
As in the usual separation-of-variables argument, we first identify the space-dependent factor by taking
\(
\frac12\frac{v''(x)}{v(x)}=\kappa
\)
for some \(\kappa\in\R\). Since we require \(g_t>0\) from the outset, \(v\) should not be oscillatory; hence we take \(\kappa>0\). 
Moreover, recalling \eq{etgt}, \(g_t(D_t)\) should not be too far from \(\1_{\{D_t\ge0\}}\) heuristically. 
Thus, when \(y=x-\Delta t\) is small, \(v(x)\) should not be too large, and we do not use the solution \(v(x)=e^{-\lambda x}\), where \(\lambda=\sqrt{2\kappa}\). 
We therefore take
\(
    v(x)=e^{\lambda x}.
\)

% This gives
% \(
% v''(x)=2\kappa v(x),
% \)
% and one of its exponential solution is
% \(
% v(x)=e^{\lambda x}\), where \(\lambda=\sqrt{2\kappa}.
% \) 
Recall that \(y=x-\Delta t\). Then \(v(x)=e^{\lambda(y+\Delta t)}\). 
If we still chose \(u(t)\) similar to the positive half-line case, then \(f(t,y)\) would fail to have the desired smoothness in \(t\) at the interface \(y=0\). 
To match both the value of the positive branch and its \(t\)-scaling at the interface \(y=0\), and hence to ensure the smoothness of \(f\) across the interface, we therefore take
\[
    f_-(t,y)=\Delta^m t^q e^{\lambda y},
    \qquad q=m-\theta.
\]
Moreover, by choosing \(\lambda>0\) sufficiently small, this branch also satisfies the required condition \eq{uv2}. 
Since this observation is not closely related to the precise choice of \(\lambda\) in the formal proof, we omit the details here.
% Returning to the original variable \(y=x-\Delta t\), this corresponds to an exponential profile in \(y\). To match both the value of the positive branch and its \(t\)-scaling at the interface \(y=0\), and hence to ensure the smoothness of \(f\) across the interface, we therefore take
% \[
% f_-(t,y)=\Delta^m t^q e^{\lambda y},
% \qquad q=m-\theta.
% \]
% Moreover, by choosing \(\lambda>0\) sufficiently small, it also satisfies the required \eq{uv2}. 
Therefore, Toy model 2 suggests defining the Lyapunov ansatz by
\[
g_t(y):=
\begin{cases}
\Delta^m t^q e^{\lambda y}, & y<0,\\[4pt]
t^{-\theta}(y+\Delta t)^m, & y\ge 0,
\end{cases}
\qquad
\theta\ge \frac{\alpha^2m(m-1)}{8},
\qquad
q=m-\theta.
\]
In particular, such a choice of \(g_t\) satisfies Property~(\ref{item:lyapunov-growth}). It remains to explain why $m$ will be fixed to $2$ and we elaborate this point in the following subsection.

\subsection{\texorpdfstring{Choice of $m$}{cm}}
In this subsection, we explain why \(m\) will be chosen to be \(2\). 

We first show that \(m\) cannot be larger than \(2\). This obstruction is not visible from Toy model 2, since that model oversimplifies increment $\zeta_{t+1}$ in Toy model 1 by retaining only its first two moments and replacing it with a Gaussian variable. 
Informally, one can see for any $m>2$, we have $\mathbb E[\zeta_{t+1}^m\mid \mathscr{F}_t]
\sim (\mathbb E[\zeta_{t+1}^2\mid \mathscr{F}_t])^{m-1},$ which is very different from those of a Gaussian variable. As we now show, when \(m>2\), the ratio \(\E_t[g_{t+1}(Y_{t+1})]/g_t(Y_t)\) can become arbitrarily large, so \(g_t(Y_t)\) cannot be a supermartingale.

In Toy model 1, by \eq{toy1}, for any \(y>0\), on the event \(\{Y_t=y\}\),
\[
\frac{\E_t[g_{t+1}(Y_{t+1})]}{g_t(y)}
\ge
p_t\,
\frac{(t+1)^{-\theta}\bigl(y-\mu_2+\mu_1/p_t+\Delta(t+1)\bigr)^m}
{t^{-\theta}(y+\Delta t)^m}.
\]

Now fix \(t\) and let $\frac{y}{t}$ be large  enough, which implies that
\(
p_t=\left(1+\frac{\alpha y}{2\sqrt t}\right)^{-2}
\sim \frac{t}{\alpha^2 y^2},
\)
then we have
\(
\qquad y-\mu_2+\frac{\mu_1}{p_t}+\Delta(t+1)\gtrsim \frac{\mu_1}{p_t}.
\)
% \[
% y-\mu_2+\frac{\mu_1}{p_t}+\Delta(t+1)\sim \frac{\mu_1}{p_t}.
% \]
Hence
\(
\frac{\E_t[g_{t+1}(Y_{t+1})]}{g_t(y)}
\gtrsim
p_t\,
\frac{(\mu_1/p_t)^m}{y^m}
=
\mu_1^m\, p_t^{\,1-m} y^{-m}\sim \frac{\alpha^{2m-2}}{t^{m-1}}\, y^{m-2}.
\)
% Using \(p_t\sim t/(\alpha^2 y^2)\), we further obtain
% \[
% p_t^{\,1-m} y^{-m}
% \sim
% \left(\frac{\alpha^2 y^2}{t}\right)^{m-1} y^{-m}
% =
% \frac{\alpha^{2m-2}}{t^{m-1}}\, y^{m-2}.
% \]
Therefore,
\[
\frac{\E_t[g_{t+1}(Y_{t+1})]}{g_t(y)}
\to+\infty
\qquad\text{as }y\to+\infty
\]
for every \(m>2\). In particular, for any fixed \(t\), this ratio can be made arbitrarily large by taking \(Y_t\) sufficiently large. This shows that \(g_t(Y_t)\) cannot be a supermartingale when \(m>2\).

Next, we explain why we do not choose \(m<2\). On the one hand, the choice \(m=2\) leads to a particularly simple form of \(g_t\), which makes the subsequent analysis more tractable. On the other hand, recall that Property~(\ref{item:lyapunov-growth}) yields a convergence rate of order \(t^{-q}\) for \(\Err_t\), so it is desirable to make \(q\) as large as possible. From the previous discussion, we have
\(
q\le m-\theta \le m-\frac{\alpha^2 m(m-1)}{8}.
\)
For fixed \(\alpha>0\), the right-hand side is a quadratic function of \(m\), and it attains its maximum at
\(
m^*=\frac12+\frac{4}{\alpha^2}.
\)
Therefore, when \(\alpha^2<\frac{8}{3}\), this maximizer lies to the right of \(2\). Equivalently, the function
$
m\mapsto m-\frac{\alpha^2 m(m-1)}{8}
$
is strictly increasing on \((-\infty,2]\). In particular, among all admissible choices \(m\le 2\), the choice \(m=2\) yields the largest possible value of \(q\). For this reason, together with the fact that \(m>2\) is impossible, we choose $m=2$.

\section{Proof Sketch of
%Theorem \ref{lya}
the Upper Bound
}\label{pfsklya}
In this section, we provide a proof sketch of Theorem~\ref{lya}, whose full proof can be found in Appendix~\ref{pflya}. 

As mentioned in the toy models in Section~\ref{construct}, one can prove through careful calculations that \(g_t(Y_t)\) is a supermartingale. However, in the actual bandit model, the problem becomes much more complicated. 

The main difference is that, in the toy models, we only considered the case \(d=2\) and $i_*=1$, and directly replaced the second term in
\(
D_t:=\hat L_{t,i_*}-\min_{i\in\mathcal I,\; i\neq i_*}\hat L_{t,i},
\)
namely
\(
N_t=\min_{i\in\mathcal I,\; i\neq i_*}\hat L_{t,i},
\)
by \((t-1)\mu_2\). This approximation is, however, too crude, for two main reasons. 

First, when \(d>2\), \(N_t\) does not grow steadily at every round. Indeed, if there exist two distinct arms \(i,i'\neq i_*\) such that
\(
N_t=\hat L_{t,i}=\hat L_{t,i'},
\)
then, since the player selects only one arm in each round, at most one of \(\hat L_{t,i}\) and \(\hat L_{t,i'}\) can increase at that round. As a result, one necessarily has
\(
N_{t+1}=N_t.
\)

Second, when \(D_t<0\) with large magnitude, \(p_{t,i}\) is very small for every \(i\neq i_*\), then the variance of the importance-weighted estimator \(\hat L_{t,i}\) becomes too large, and consequently its deviation from its mean can no longer be controlled. In this regime, it is also difficult to show that \(N_t\) grows linearly. 

To overcome these two difficulties, we introduce the following result:

\begin{Lem}\label{ntgrow}
     There exists $C>0$ such that for any integer $L\ge 1$, if $4dL\le \sqrt{dt}/(4\alpha)$ and $D_t\ge-\sqrt{dt}/\alpha$, then
    \[
    \mathbb{E}\left[N_{t+L}-N_t \,\middle|\, \mathscr{F}_{t-1}\right]
    \geq\left(\mu_{i_*}+\Delta\right)L-C\left(\sqrt{4dL\log(d)}+\log(d)\right).
    \]
\end{Lem}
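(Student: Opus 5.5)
The idea is to lower bound $N_{t+L}-N_t$ by a minimum over the suboptimal arms of their own increments over the window $[t,t+L)$, each capped at a level that already exceeds the target. For the arms that can realistically attain the minimum, the sampling probabilities stay of order $1/d$, so their increments concentrate from below. Write $X_i:=\hat L_{t+L,i}-\hat L_{t,i}=\sum_{s=t}^{t+L-1}\hat\ell_{s,i}\ge0$ for $i\ne i_*$. Since $\hat L_{t,i}\ge N_t$ for every $i\ne i_*$,
\[
N_{t+L}-N_t=\min_{i\ne i_*}\bigl(\hat L_{t,i}-N_t+X_i\bigr).
\]
Fix the threshold $K:=\sqrt{dt}/(2\alpha)$. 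Any arm that crosses $N_t+K$ stays above it, because $\hat L_{\cdot,i}$ is nondecreasing, and $K\ge 8dL\ge(\mu_{i_*}+\Delta)L$ under the hypothesis $4dL\le\sqrt{dt}/(4\alpha)$. Hence it suffices to bound $\E[\min_{i\ne i_*}\min(K,\tilde X_i)\mid\mathscr F_{t-1}]$ from below. Here $\tilde X_i$ is the increment of arm $i$ stopped at $\tau_i$, the first round $s\ge t$ with $\hat L_{s,i}-N_t>K$; after $\tau_i$ we replace the arm's increments by a dummy that keeps it above $K$.

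The first key step is a uniform lower bound $p_{s,i}\ge 1/(4d)$ for $t\le s<\tau_i$. Three facts are available. The minimum $M_s$ is nondecreasing, since losses are nonnegative. The hypothesis $D_t\ge-\sqrt{dt}/\alpha$ gives $M_t\ge N_t-\sqrt{dt}/\alpha$. And before $\tau_i$ we have $\hat L_{s,i}-N_t\le K$. Together these yield
\[
\underline{\hat L}_{s,i}=\hat L_{s,i}-M_s\le \hat L_{s,i}-M_t\le K+\sqrt{dt}/\alpha\le 2\sqrt{dt}/\alpha .
\]
Using $\eta_s\le\alpha/\sqrt t$ and $p_{s,i_s^*}\ge1/d$ in \eq{pti}, we get $p_{s,i}\ge 4\,(2\sqrt d+2\sqrt d)^{-2}=1/(4d)$. (If the crude constants do not close, I would shrink $K$ by a constant factor; the hypothesis has enough slack for this.)

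The second step is concentration from below. On $\{s<\tau_i\}$ the increments $\hat\ell_{s,i}$ are nonnegative, have conditional mean $\mu_i\ge\mu_{i_*}+\Delta$, and have conditional second moment $\E_s[\hat\ell_{s,i}^2]\le 1/p_{s,i}\le 4d$. For nonnegative martingale increments, the lower tail is sub-Gaussian without any range term; this follows from $e^{-\lambda x}\le 1-\lambda x+\lambda^2x^2/2$ for $x\ge0$ and a supermartingale argument. We therefore obtain, for each $i\ne i_*$ and $x>0$,
\[
\P\Bigl(\min(K,\tilde X_i)\le(\mu_{i_*}+\Delta)L-x \,\Bigm|\,\mathscr F_{t-1}\Bigr)\le \exp\!\bigl(-x^2/(8dL)\bigr).
\]
Taking a union bound over the at most $d$ suboptimal arms and integrating the tail, we get that $\E[(\mu_{i_*}+\Delta)L-\min_i\min(K,\tilde X_i)]^+$ is at most $\sqrt{8dL\log d}$ plus an $O(1)$ remainder of size $O(\sqrt{dL/\log d})$. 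This is absorbed into $C(\sqrt{4dL\log d}+\log d)$. The $\log(d)$ term is there to cover small $L$ and $d=2$.

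I expect the main obstacle to be the stopping and truncation bookkeeping. The probabilities $p_{s,i}$ depend on the whole vector $\hat L_s$, and the bound $p_{s,i}\ge1/(4d)$ holds only while arm $i$ stays below the threshold. So the sub-Gaussian martingale argument has to be run for the stopped process. One must then check that replacing the post-$\tau_i$ increments by the constant $K$ still lower bounds the true contribution $\hat L_{t,i}-N_t+X_i$. It also has to be verified that conditioning on $\mathscr F_{t-1}$, rather than on $\mathscr F_t$, is harmless. This holds because $\hat L_t$, and hence $D_t$ and $N_t$, are $\mathscr F_{t-1}$-measurable.
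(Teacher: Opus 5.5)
Your argument is correct, but it localizes differently from the paper.

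\textbf{The paper's route.} The localization there is deterministic. Lemma~\ref{lem:block_deterministic_control}(1) gives $N_{t+L}-N_t\le 4dL$, so the minimizer at time $t+L$ lies in $\mathcal{A}_t=\{i\ne i_*:\hat L_{t,i}-N_t\le 4dL\}$ (Lemma~\ref{ntlowerpre}). An induction along the block that tracks $(-D_{t+r})^+$ then shows $\hat\ell_{t+r,i}\le 4d$ for all $i\in\mathcal{A}_t$ and $r<L$ (Lemma~\ref{wisub}). The paper finishes with the one-sided Freedman inequality with range $b=1$ and the maximal bound of Lemma~\ref{lem:max_one_sided_tail_moments}. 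The additive $\log(d)$ term comes from that range parameter.

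\textbf{What your route buys.} Stopping each arm at level $N_t+K$ removes the need for $\mathcal{A}_t$ and for any control of $N$ over the block. Your bound on $p_{s,i}$ via monotonicity of $M_s$ is shorter than the paper's induction. The constants close with room to spare, since $\underline{\hat L}_{s,i}\le \tfrac32\sqrt{dt}/\alpha$, so you never need to shrink $K$. The lower-tail inequality for nonnegative increments has no range term. You therefore get $C\sqrt{dL\log d}$ outright, and you use the block hypothesis only to ensure $K\ge(\mu_{i_*}+\Delta)L$.

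\textbf{Making the stopping bookkeeping concrete.} Note that $\{s<\tau_i\}\in\mathscr{F}_{s-1}$, because $\hat L_{s,i}$ is $\mathscr{F}_{s-1}$-measurable. Then take $\xi_{s,i}:=\hat\ell_{s,i}\1_{\{s<\tau_i\}}+\mu_i\1_{\{s\ge\tau_i\}}$. The pointwise bound $\hat L_{t+L,i}-N_t\ge\min(K,\tilde X_i)$ holds for any dummy. If $\tau_i\le t+L$ the left side exceeds $K$; otherwise $\tilde X_i=X_i$. So the dummy does not need to ``keep the arm above $K$''. Read literally, it should not: a jump of size $K$ would break the second-moment bound $4d$ that your sub-Gaussian step relies on.

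\textbf{What the paper's route buys.} Its deterministic bound $|\mu_i-\hat\ell_{t+r,i}|\le 4d$ on $\mathcal{A}_t$ is reused directly in Lemma~\ref{tt2}, for $\E[(T_t^+)^2\mid\mathscr{F}_{t-1}]$ and the MGF of $T_t$. Your construction also yields these, since $T_t^+\le\bigl((\mu_{i_*}+\Delta)L-\min_i\min(K,\tilde X_i)\bigr)^+$ pointwise and your lower-tail MGF bound holds for every $\gamma\ge0$. They would, however, have to be rederived separately.
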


This lemma shows that, when \(D_t\) is not too negative, we can guarantee that \(N_t\) grows by a constant amount after sufficiently many rounds. The term $C\left(\sqrt{4dL\log(d)}+\log(d)\right)$ is $o(L)$ once $L$ is chosen of order $d\log(d)/\Delta^2$. Its proof is deferred in Appendix \ref{growthofnt}, where we will also introduce some other results describing the growth of $N_t$.

Therefore, among the two difficulties discussed above, the first can be handled by choosing \(L\) sufficiently large, while the second can be circumvented by distinguishing according to the comparison between $D_t$ and $-\sqrt{dt}/\alpha$.
If $D_t\ge -\sqrt{dt}/\alpha$, we can show that for any sufficiently large \(L\), and sufficiently small $\lambda$ (appearing in the definition of $g_t$ in \eq{gt2}),  once \(t\) exceeds some sufficiently large threshold \(N\),
\begin{equation}\label{lsuper}
    \E\!\left[g_{t+L}(D_{t+L})\,\middle|\,\mathscr{F}_{t-1}\right]\le g_t(D_t)
\qquad\text{on the event }\{D_t\ge -\sqrt{dt}/\alpha\}.
\end{equation}
A formal version of \eq{lsuper} can be found in Lemma \ref{formalsuper}. Otherwise, $D_t<-\sqrt{dt}/\alpha$, then fortunately, $g_t(D_t)$ itself is already small. One can then show that $\E\left[g_t(D_t)\right]$ is bounded by combining these two cases as described in Theorem \ref{lya}. All of the details can be found in Appendix \ref{pflya}.
% However, unlike in the toy model, our result is only available when \(D_t\ge -\sqrt{t}\). As a consequence, we can only show that for any sufficiently large \(L\), and sufficiently small $\lambda$ (required in the definition of $g_t$ in \eq{gt2}),  once \(t\) exceeds some sufficiently large threshold \(N\),
% \begin{equation}\label{lsuper}
%     \E\!\left[g_{t+L}(D_{t+L})\,\middle|\,\mathscr{F}_{t-1}\right]\le g_t(D_t)
% \qquad\text{on the event }\{D_t\ge -\sqrt{t}\}.
% \end{equation}

% outside the event \(\{D_t\ge -\sqrt{t}\}\), the monotonicity of \(g_t\) implies that
% \[
% g_t(D_t)\le g_t(-\sqrt{t}),
% \]
% which is already small enough. Therefore, \eq{lsuper} is already sufficient to imply that \(\E[g_t(D_t)]\) remains bounded in \(t\). 
% All of the details can be found in Appendix \ref{pflya}.
% More specifically, one can show that for every \(t\ge 1\),
% \[
% \E[g_t(D_t)]
% \leq
% \sup_{0\leq r\leq L-1}\sum_{j=1}^{\infty} g_{N+r+jL}(-\sqrt{N+r+jL})
% +\sup_{s\le N+L}\E[g_s(D_s)].
% \]%待检查
% A more careful estimate of the right-hand side then yields the final result.

\section{Proof Sketch of
%Theorem \ref{lower}
the Lower Bound
}\label{pfsklower}
In this section, we sketch the proof of Theorem \ref{lower}. The proof is based on the following lemma, whose proof is deferred to Appendix \ref{pfbase}.
\begin{Lem}\label{base}
    % If $0<\alpha<1$ and $\mu_{i_*}>0$, then for any $B>\frac{Cd}{\mu_{i_*}\alpha^2}$ and $t\ge Cd^2$ where $C>0$ are constants, let $v_B=\frac{\mu_{i_*}\alpha^2 B^2}{16}$ and $V_B=4\alpha^2B^2$. If $v_B\leq \frac{D_t}{t}\leq V_B$, then there exists a \((\mathscr F_t)_{t\ge 1}\)-stopping time $\tau(t)$ such that 
    Assume that \(0<\alpha<1\),  \(\mu_{i_*}>0\), 
and \(C>0\) is some universal constant.
%There exists a universal constant \(C>0\) such that the following holds. 
For any
\(B>\frac{Cd}{\mu_{i_*}\alpha^2}\), define \(v_B:=\frac{\mu_{i_*}\alpha^2 B^2}{16}\),  \(V_B:=4\alpha^2 B^2\). 
% If \( v_B\le \frac{D_t}{t}\le V_B\), 
Then for any \(    t\ge Cd^2\),
there exists an \((\mathscr F_s)_{s\ge 1}\)-stopping time \(\tau(t)\) satisfying the following properties. 
    \begin{enumerate}[(1)]
        \item $\tau(t)+1 \ge a_B t$, where $a_B= \frac{\mu_{i_*}\alpha^2 B}{48}>1$;
        \item On the event $\left\{\tau(t)<+\infty\right\}$, we have (i) $v_B\leq \frac{D_{t}}{t}\leq V_B$, (ii) for any $t\leq t'\leq \tau(t)$, $D_{t'}>0$, and (iii) $v_B\leq \frac{D_{\tau(t)+1}}{\tau(t)+1}\leq V_B$;
        \item $ \P\left(\tau(t)<+\infty\,\mid\,\mathscr{F}_{t-1}\right)  
        % \1_{ \{ v_B\le \frac{D_t}{t}\le V_B \} } 
        \ge q_B\1_{ \left\{ v_B\le \frac{D_t}{t}\le V_B \right\} }$, where $q_B= C'\frac{\mu_{i_*}}{\alpha^2 B^2}\in(0,1)$ and $C'>0$ is a constant.
    \end{enumerate}
\end{Lem}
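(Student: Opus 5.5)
The plan is to build $\tau(t)$ from the two mechanisms that let the optimal arm stay ``buried''. While $D_s>0$ is large, arm $i_*$ is almost never sampled. When it finally is sampled, the importance weight $1/p_{s,i_*}$ is so large that $D$ jumps back up to the same linear scale.

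\textbf{Step 1: sampling probability of $i_*$.} Work on the event $E_t=\{v_B\le D_t/t\le V_B\}$, which is $\mathscr F_{t-1}$-measurable. Since $D_s>0$ gives $\underline{\hat L}_{s,i_*}\ge D_s$, \eq{pti} with $p_{s,i_t^*}\ge 1/2$ yields
\[
c\,\frac{s}{\alpha^2 (D_s+\sqrt s)^2}\;\le\; p_{s,i_*}\;\le\; \frac{16 s}{\alpha^2 D_s^2}.
\]
For the matching lower bound I will use $\underline{\hat L}_{s,i_*}=D_s$ whenever the minimizer among the other arms is the global one.

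\textbf{Step 2: the waiting phase and the definition of $\tau(t)$.} Set $T_1=\lceil a_Bt\rceil-1$ and $T_2=2a_Bt$. Define $\tau(t)$ as the first $s\in[T_1,T_2]$ with $I_s=i_*$ and $\ell_{s,i_*}\ge \mu_{i_*}/2$, provided that
\begin{itemize}
\item $E_t$ holds,
\item $i_*$ was not played on $[t,T_1)$, and
\item $N_s-N_t\le 2(s-t)+C\sqrt{dt\log d}$ for all $s\le\tau$.
\end{itemize}
Otherwise put $\tau(t)=+\infty$. This is a stopping time, property (1) holds by construction, and (2)(i) holds because $\tau<\infty$ forces $E_t$.

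For (2)(ii), note that $\hat L_{\cdot,i_*}$ is frozen before $\tau$. Hence
\[
D_s\ge v_Bt-4a_Bt-C\sqrt{dt\log d}\ge v_Bt/2>0,
\]
using that $a_B/v_B=\tfrac{1}{3B}$ is small once $B>Cd/(\mu_{i_*}\alpha^2)$, and that $t\ge Cd^2$. The growth control on $N$ follows from the competing arms being played with probability bounded below, together with a Freedman bound. I expect to obtain this from an upper-tail companion of Lemma~\ref{ntgrow} proved in the same appendix, and it holds with probability $1-o(1)$. On this event $p_{s,i_*}\le 64/(\alpha^2 v_B^2 t)$, so
\[
\P\bigl(\text{no pull of } i_* \text{ on }[t,T_1)\bigr)\ge 1-\frac{64 a_B}{\alpha^2 v_B^2}\ge \frac12 .
\]

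\textbf{Step 3: the pull and the landing interval.} At a pull $s\in[T_1,T_2]$ with $\ell\ge\mu_{i_*}/2$, the jump is $\ell/p_{s,i_*}$. By Step 1 it lies between $\mu_{i_*}\alpha^2D_s^2/(32s)$ and $C\alpha^2(D_s+\sqrt s)^2/s$. With $D_s\in[v_Bt/2,\,V_Bt]$ and $s\asymp a_Bt$, I will verify that
\[
D_{s+1}/(s+1)\in[v_B,V_B],
\]
which is (2)(iii). The constants $1/16$, $4$ and $1/48$ in $v_B$, $V_B$, $a_B$ are chosen precisely so that these inequalities close, and this is the main arithmetic obstacle. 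In particular, the upper bound needs $D_s\le V_Bt$ together with $s\ge a_Bt$, so that $\alpha^2D_s^2/s\lesssim \alpha^2V_B^2t/a_B\le V_B s$. If the constants do not close on the first try, I will shrink the window $[T_1,T_2]$.

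\textbf{Step 4: the probability bound (3).} Multiply:
\begin{itemize}
\item no pull of $i_*$ on $[t,T_1)$, probability at least $1/2$;
\item the $N$-growth event, probability $1-o(1)$;
\item at least one pull of $i_*$ in $[T_1,T_2]$, probability at least $\sum_s p_{s,i_*}\bigl(1-\sum_s p_{s,i_*}\bigr)\gtrsim a_Bt\cdot\dfrac{a_Bt}{\alpha^2 V_B^2t^2}$;
\item that pull having $\ell\ge\mu_{i_*}/2$, probability at least $\mu_{i_*}/2$, since $\ell\in[0,1]$.
\end{itemize}
Tracking $a_B,V_B$ as functions of $B$ gives a bound of the form $C'\mu_{i_*}/(\alpha^2B^2)$. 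I expect that matching the exact power of $B$ will require choosing $T_2$ more carefully, and this, together with the landing estimate in Step 3, is where I anticipate the real work.
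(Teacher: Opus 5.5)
\textbf{There is a genuine gap in Steps 2--3.} Your waiting window $[T_1,T_2]=[a_Bt,2a_Bt]$ depends on $t$ alone, and with such a window property (2)(iii) is false.

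Before the pull, $\hat L_{\cdot,i_*}$ is frozen and $N$ moves only slowly, so $D_s\approx D_t$. The ratio $r=D_s/s$ at the pull time therefore inherits the whole spread of $D_t/t\in[v_B,V_B]$. Since $v_B/a_B=3B$ and $V_B/a_B=192B/\mu_{i_*}$, $r$ ranges over roughly $[\tfrac32 B,\,192B/\mu_{i_*}]$. The jump, however, is quadratic in $r$: by Lemma~\ref{4-2}, $\ell/p_{s,i_*}\ge \ell\alpha^2D_s^2/(4s)$.

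Take a path with $D_t/t$ near $V_B$ and $\tau(t)<\infty$. Even the smallest loss you admit, $\ell=\mu_{i_*}/2$, gives $D_{s+1}/(s+1)\gtrsim 1152\,\alpha^2B^2/\mu_{i_*}\gg V_B=4\alpha^2B^2$. Your own check $\alpha^2V_B^2t/a_B\le V_Bs$ with $s\ge a_Bt$ reduces to $\alpha^2V_B\le a_B^2$, i.e.\ $9216\le\mu_{i_*}^2$, which is false.

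Shrinking or shifting the window cannot fix this. For any window $[c_1t,c_2t]$, $r$ spreads by a factor at least $V_B/v_B=64/\mu_{i_*}$. The quadratic jump squares that spread, while $[v_B,V_B]$ has spread only $64/\mu_{i_*}$. Building the landing condition into $\tau$ instead would break (3): for $D_t/t$ near $V_B$, every pull with $\ell\ge\mu_{i_*}/2$ overshoots, and if losses are deterministic every pull does.

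\textbf{The missing idea is to let the window depend on $D_t$,} which is $\mathscr F_{t-1}$-measurable. Require the first pull of $i_*$ after $t$ to fall in $(\lceil D_t/(3B)\rceil,\lfloor D_t/(2B)\rfloor]$. On this window $Bs\le D_s\le 3Bs$ no matter where $D_t/t$ sits:
\begin{itemize}
\item the upper bound holds because $D$ is nonincreasing while $i_*$ is unplayed;
\item the lower bound comes from $D_s\ge D_t-4d(s-t)$ together with $B\ge 8d$.
\end{itemize}
Hence $1/(3\alpha^2B^2s)\le p_{s,i_*}\le 4/(\alpha^2B^2s)$. This normalization delivers the three properties:
\begin{itemize}
\item (iii) follows directly, since $D_{s+1}\in[(B+\mu_{i_*}\alpha^2B^2/8)s,\,(3B+3\alpha^2B^2)s]\subset[v_B(s+1),V_B(s+1)]$.
\item (1) follows from $D_t/(3B)\ge v_Bt/(3B)=a_Bt$.
\item (3) follows from a survival probability $\prod_s(1-4/(\alpha^2B^2s))\ge e^{-9}$, times $\sum_k 1/(3\alpha^2B^2k)\gtrsim \log(3/2)/(\alpha^2B^2)$, times $\mu_{i_*}/2$. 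This gives the stated $\mu_{i_*}/(\alpha^2B^2)$; your Step 4 would give $\mu_{i_*}^3$.
\end{itemize}

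\textbf{Your probabilistic control of $N$ is not needed and, as written, not justified.} You multiply probabilities of dependent events, and a ``$1-o(1)$'' bound would have to beat $q_B$ uniformly in $t\ge Cd^2$. A deterministic bound is available instead. When $D_s\ge 0$, the competitor attaining $N_s$ is the global minimizer, so its probability is at least $1/d$ and $N_{s+1}-N_s\le 4d$ (Lemma~\ref{l=1}).

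Finally, $p_{s,i_t^*}\ge 1/2$ holds only for $d=2$; in general you have only $p_{s,i_t^*}\ge 1/d$.
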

The first property characterizes the growth rate of the stopping time \(\tau(t)\) as a function of \(t\).
The second property ensures that, once \(\{\tau(t)<+\infty\}\) occurs, both \(\frac{D_t}{t}\) and \(\frac{D_{\tau(t)+1}}{\tau(t)+1}\) lie in the interval \([v_B,V_B]\), and \(D_t\) remains positive throughout the intervening period.
The last property shows that, as long as \(\frac{D_t}{t}\in [v_B,V_B]\), the stopping time \(\tau(t)\) is finite with positive probability.

With Lemma~\ref{base}, we could define a sequence of stopping times $(\tau_k)_{k\ge 0}$ iteratively.
Here, we focus only on the case where all these stopping times are finite; the reason for this will be explained below.
In this case, the definition can be informally written as
\[
\tau_0:=\inf\left\{t\ge 1\,:\, v_B\leq \frac{D_{t+1}}{t+1}\leq V_B\right\}, \tau_{k+1}=\tau(\tau_k+1).
\]
For the formal definition,  see \eq{taut}.
In particular, as long as we can find a time $t$ 
% (satisfying the conditions in Lemma~\ref{base}) 
satisfying \(\frac{D_{t+1}}{t+1} \in [v_B,V_B]\), which can be guaranteed by Lemma~\ref{start},
then we can set $\tau_0$ as $t-1$ and then define $\tau_k$ iteratively. By Lemma~\ref{base}, we can obtain \(\P(\tau_k<+\infty)\gtrsim q_B^k\).

% In particular, as long as we can find a time $t$ 
% % (satisfying the conditions in Lemma~\ref{base}) 
% satisfying \(\frac{D_{t+1}}{t+1} \in [v_B,V_B]\), which can be guaranteed by Lemma~\ref{start},
% then we can set $\tau_0$ as $t-1$ and then define $\tau_1 = \tau(\tau_0+1)$.
% By Lemma~\ref{base}~(3), $\P(\tau_1 < \infty) \ge q_B$.
% Then by Lemma~\ref{base}~(2), $\P\big(v_B \le \frac{D_{\tau_1+1}}{\tau_1+1} \le V_B\big) \ge \P(\tau_1 < +\infty) \ge q_B$.
% Similarly, from \(\tau_1\) to \(\tau_2\), we have \(\P\big(v_B \le \frac{D_{\tau_2+1}}{\tau_2+1} \le V_B \big)\ge \P(\tau_2<+\infty)\ge q_B \P\big(v_B \le \frac{D_{\tau_1+1}}{\tau_1+1} \le V_B\big)\ge q_B^2\).
% Repeating this argument yields \(\P(\tau_k<+\infty)\ge q_B^k\).

We now explain why the event \(\{\tau_k<+\infty\}\) is important.
On this event, Lemma~\ref{base}~(1) implies that \(\tau_k\ge a_B(\tau_{k-1}+1)-1 \ge a_B\tau_{k-1}\ge \cdots \ge a_B^k\tau_0\).
Moreover, by Lemma~\ref{base}~(2), for any \(\tau_0+1\le t \le \tau_k+1\), we have \(D_t>0\).
Combining these two facts, we obtain that \(D_t>0\) for every integer \(t\in[\tau_0+1,\,a_B^k\tau_0+1]\).

% In particular, since \(\tau_0+1\le \lceil a_B^k\tau_0\rceil\le \tau_k+1\), it follows that \(D_{\lceil a_B^k\tau_0\rceil}>0\).
By \eq{eq:etdt}, to derive a lower bound for $\Err_t$, it suffices to derive a lower bound for $\P(D_t>0)$. Setting \(k=\lceil\log_{a_B}(t/\tau_0)\rceil\) ensures that \(t\) belongs to the interval \([\tau_0+1,\,a_B^k\tau_0+1]\) described above.
% For sufficiently large $t$, letting $k = \lceil\log_{a_B}(t/\tau_0)\rceil $ yields $k \ge 2$ and $\tau_0 + 1 \le  \lceil a_B^{k-1} \tau_0 \rceil \le  t \le a_B^k \tau_0 \le \tau_k + 1$ and thus $D_t > 0$.
Therefore, 
\begin{equation}\label{intui}
    \Err_t\ge \P(D_t>0)\ge \P(\tau_{\lceil\log_{a_B}(t/\tau_0)\rceil}<+\infty)
    \gtrsim (q_B)^{\log_{a_B}(t/\tau_0)}  \gtrsim t^{\frac{\log(q_B)}{\log(a_B)}}.
\end{equation}
% where we used Lemma \ref{base} (3) in the third inequality.
Finally, it suffices to note that when $B\to+\infty$, $\frac{\log(q_B)}{\log(a_B)}\to -2$. A rigorous version of the argument above can be found in Appendix \ref{pflower}.

% This lemma shows that once \(D_t/t\) enters the interval \([v_B,V_B]\), then with probability at least \(q_B\), the ratio \(D_t/t\) will return to this interval after at least \((a_B-1) t\) time steps, while \(D_t\) remains nonnegative throughout the intervening period. 
% This provides the basis for the induction argument. Informally, according to Lemma \ref{base}, one can define a sequence of stopping times $(\tau_k)_{k\ge 0}$ by (see the formal definition in \eq{taut}):
% \[
% \tau_0:=\inf\left\{t\ge 1\,:\, v_B\leq \frac{D_{t+1}}{t+1}\leq V_B\right\}, \tau_{k+1}=\tau(\tau_k+1).
% \]
% By Lemma \ref{base} (1), one can see that $\tau_k\ge a_B^k$ and by (2), if $\tau_k<+\infty$, then for any $t\leq a_B^k$, we have $D_t>0$. Hence, by \eq{eq:etdt}, intuitively
% \begin{equation}\label{intui}
%     \Err_t\ge \P(D_t>0)\ge \P(\tau_{\lceil\log_{a_B}(t)\rceil}<+\infty)\gtrsim (q_B)^{\log_{a_B}(t)}=t^{\frac{\log(q_B)}{\log(a_B)}},
% \end{equation}
% where we used Lemma \ref{base} (3) in the third inequality. Finally, it suffices to note that when $B\to+\infty$, $\frac{\log(q_B)}{\log(a_B)}\to -2$. A rigorous version of the argument above can be found in Appendix \ref{pflower}.

\section{Concluding Remarks}\label{sec:cr}
We analyze the best-arm identification (BAI) probelm for the classical best-of-both-worlds algorithm $1/2$-Tsallis-INF \citep{zimmert2021tsallis}. 
We show that, when $\eta_t=\alpha/\sqrt{t}$, the probability that the empirical best arm differs from the true optimal arm decays as $t^{-2+\alpha^2\mu_{i_*}/4+\rho}$ for any $\rho>0$. 
To our knowledge, this provides the first BAI guarantee for an FTRL method without explicit exploration. 
We also establish a corresponding lower bound, indicating that the rate $t^{-2}$ is essentially tight. 
On the technical side, to overcome the difficulty caused by the high variance, we introduce two new toy models and a Lyapunov-function-based approach, which may be useful for analyzing $1/2$-Tsallis-INF in other BOBW regimes and more general Tsallis-entropy FTRL algorithms under importance-weighted bandit feedback.
Moreover, our lower bound does not capture the dependence on $\alpha$. 
An interesting direction for future work is to establish a fully matching lower bound.

\newpage
\bibliography{ref}

\begin{thebibliography}{24}
\providecommand{\natexlab}[1]{#1}
\providecommand{\url}[1]{\texttt{#1}}
\expandafter\ifx\csname urlstyle\endcsname\relax
  \providecommand{\doi}[1]{doi: #1}\else
  \providecommand{\doi}{doi: \begingroup \urlstyle{rm}\Url}\fi

\bibitem[Audibert et~al.(2010)Audibert, Bubeck, and Munos]{Audibert10}
Jean{-}Yves Audibert, S{\'{e}}bastien Bubeck, and R{\'{e}}mi Munos.
\newblock Best arm identification in multi-armed bandits.
\newblock In \emph{Conference on Learning Theory}, pages 41--53, 2010.
\newblock URL \url{http://colt2010.haifa.il.ibm.com/papers/COLT2010proceedings.pdf\#page=49}.

\bibitem[Auer et~al.(2002)Auer, Cesa-Bianchi, and Fischer]{auer2002finite}
Peter Auer, Nicolo Cesa-Bianchi, and Paul Fischer.
\newblock Finite-time analysis of the multiarmed bandit problem.
\newblock \emph{Machine learning}, 47\penalty0 (2):\penalty0 235--256, 2002.

\bibitem[Bandyopadhyay et~al.(2024)Bandyopadhyay, Juneja, and Agrawal]{DBLP:conf/nips/Bandyopadhyay0A24}
Agniv Bandyopadhyay, Sandeep Juneja, and Shubhada Agrawal.
\newblock Optimal top-two method for best arm identification and fluid analysis.
\newblock In Amir Globersons, Lester Mackey, Danielle Belgrave, Angela Fan, Ulrich Paquet, Jakub~M. Tomczak, and Cheng Zhang, editors, \emph{Advances in Neural Information Processing Systems 38: Annual Conference on Neural Information Processing Systems 2024, NeurIPS 2024, Vancouver, BC, Canada, December 10 - 15, 2024}, 2024.
\newblock URL \url{http://papers.nips.cc/paper\_files/paper/2024/hash/7aebeabf3c031ab5395db836e0b73473-Abstract-Conference.html}.

\bibitem[Bubeck and Slivkins(2012)]{pmlr-v23-bubeck12b}
Sébastien Bubeck and Aleksandrs Slivkins.
\newblock The best of both worlds: Stochastic and adversarial bandits.
\newblock In Shie Mannor, Nathan Srebro, and Robert~C. Williamson, editors, \emph{Proceedings of the 25th Annual Conference on Learning Theory}, volume~23 of \emph{Proceedings of Machine Learning Research}, pages 42.1--42.23, Edinburgh, Scotland, 25--27 Jun 2012. PMLR.

\bibitem[Chen et~al.(2026)Chen, Liu, Huang, Du, and Huang]{chen2026bestofbothworldsheavytailedmarkovdecision}
Yu~Chen, Yuhao Liu, Jiatai Huang, Yihan Du, and Longbo Huang.
\newblock Best-of-both-worlds for heavy-tailed markov decision processes, 2026.
\newblock URL \url{https://arxiv.org/abs/2602.01295}.

\bibitem[Even-Dar et~al.(2002)Even-Dar, Mannor, and Mansour]{even2002pac}
Eyal Even-Dar, Shie Mannor, and Yishay Mansour.
\newblock Pac bounds for multi-armed bandit and markov decision processes.
\newblock In \emph{International Conference on Computational Learning Theory}, pages 255--270. Springer, 2002.

\bibitem[Jamieson and Nowak(2014)]{jamiesonsurvey}
Kevin~G. Jamieson and Robert~D. Nowak.
\newblock Best-arm identification algorithms for multi-armed bandits in the fixed confidence setting.
\newblock In \emph{48th Annual Conference on Information Sciences and Systems, {CISS} 2014, Princeton, NJ, USA, March 19-21, 2014}, pages 1--6. {IEEE}, 2014.

\bibitem[Komiyama et~al.(2022)Komiyama, Tsuchiya, and Honda]{komiyama2022minimax}
Junpei Komiyama, Taira Tsuchiya, and Junya Honda.
\newblock Minimax optimal algorithms for fixed-budget best arm identification.
\newblock In \emph{Advances in Neural Information Processing Systems}, 2022.
\newblock URL \url{https://openreview.net/forum?id=TIQfmR7IF6H}.

\bibitem[Komiyama et~al.(2026)Komiyama, Jang, and Honda]{komiyama2026rateoptimaldesignanytimebest}
Junpei Komiyama, Kyoungseok Jang, and Junya Honda.
\newblock Rate-optimal design for anytime best arm identification, 2026.
\newblock URL \url{https://arxiv.org/abs/2510.23199}.

\bibitem[Lai and Robbins(1985)]{lai1985asymptotically}
Tze~Leung Lai and Herbert Robbins.
\newblock Asymptotically efficient adaptive allocation rules.
\newblock \emph{Advances in applied mathematics}, 6\penalty0 (1):\penalty0 4--22, 1985.

\bibitem[Lattimore and Szepesv{\'a}ri(2020)]{lattimore2020bandit}
T.~Lattimore and C.~Szepesv{\'a}ri.
\newblock \emph{Bandit Algorithms}.
\newblock Cambridge University Press, 2020.
\newblock ISBN 9781108486828.

\bibitem[Mannor and Tsitsiklis(2004)]{mannor2004sample}
Shie Mannor and John~N Tsitsiklis.
\newblock The sample complexity of exploration in the multi-armed bandit problem.
\newblock \emph{Journal of Machine Learning Research}, 5\penalty0 (Jun):\penalty0 623--648, 2004.

\bibitem[Robbins(1952)]{robbins1952some}
Herbert Robbins.
\newblock Some aspects of the sequential design of experiments.
\newblock 1952.

\bibitem[Stoltz et~al.(2011)Stoltz, Bubeck, and Munos]{stoltz:hal-00609550}
Gilles Stoltz, S{\'e}bastien Bubeck, and R{\'e}mi Munos.
\newblock {Pure exploration in finitely-armed and continuous-armed bandits}.
\newblock \emph{{Theoretical Computer Science}}, 412\penalty0 (19):\penalty0 1832--1852, April 2011.
\newblock \doi{10.1016/j.tcs.2010.12.059}.
\newblock URL \url{https://hec.hal.science/hal-00609550}.

\bibitem[Thompson(1933)]{thompson1933likelihood}
William~R Thompson.
\newblock On the likelihood that one unknown probability exceeds another in view of the evidence of two samples.
\newblock \emph{Biometrika}, 25\penalty0 (3/4):\penalty0 285--294, 1933.

\bibitem[Wang et~al.(2023)Wang, Tzeng, and Proutiere]{wang2023best}
Po-An Wang, Ruo-Chun Tzeng, and Alexandre Proutiere.
\newblock Best arm identification with fixed budget: A large deviation perspective.
\newblock In \emph{Thirty-seventh Conference on Neural Information Processing Systems}, 2023.
\newblock URL \url{https://openreview.net/forum?id=gYetLsNO8x}.

\bibitem[You et~al.(2023)You, Qin, Wang, and Yang]{DBLP:conf/colt/YouQWY23}
Wei You, Chao Qin, Zihao Wang, and Shuoguang Yang.
\newblock Information-directed selection for top-two algorithms.
\newblock In Gergely Neu and Lorenzo Rosasco, editors, \emph{The Thirty Sixth Annual Conference on Learning Theory, {COLT} 2023, 12-15 July 2023, Bangalore, India}, volume 195 of \emph{Proceedings of Machine Learning Research}, pages 2850--2851. {PMLR}, 2023.
\newblock URL \url{https://proceedings.mlr.press/v195/you23a.html}.

\bibitem[Zhan et~al.(2026)Zhan, Han, and Zhang]{zhan2026lastiterateanalysesftrl12tsallis}
Jingxin Zhan, Yuze Han, and Zhihua Zhang.
\newblock Last-iterate analyses of ftrl with the 1/2-tsallis entropy in stochastic bandits, 2026.
\newblock URL \url{https://arxiv.org/abs/2510.22819}.

\bibitem[Zhao et~al.(2025)Zhao, Ito, and Li]{zhao2025heavytailedlinearbanditsadversarial}
Canzhe Zhao, Shinji Ito, and Shuai Li.
\newblock Heavy-tailed linear bandits: Adversarial robustness, best-of-both-worlds, and beyond, 2025.
\newblock URL \url{https://arxiv.org/abs/2508.13679}.

\bibitem[Zhao et~al.(2023)Zhao, Stephens, Szepesv{\'{a}}ri, and Jun]{DBLP:conf/icml/ZhaoSSJ23}
Yao Zhao, Connor Stephens, Csaba Szepesv{\'{a}}ri, and Kwang{-}Sung Jun.
\newblock Revisiting simple regret: Fast rates for returning a good arm.
\newblock In Andreas Krause, Emma Brunskill, Kyunghyun Cho, Barbara Engelhardt, Sivan Sabato, and Jonathan Scarlett, editors, \emph{International Conference on Machine Learning, {ICML} 2023, 23-29 July 2023, Honolulu, Hawaii, {USA}}, volume 202 of \emph{Proceedings of Machine Learning Research}, pages 42110--42158. {PMLR}, 2023.
\newblock URL \url{https://proceedings.mlr.press/v202/zhao23g.html}.

\bibitem[Zhong et~al.(2021)Zhong, Cheung, and Tan]{zhong2021achieving}
Zixin Zhong, Wang~Chi Cheung, and Vincent~YF Tan.
\newblock Achieving the pareto frontier of regret minimization and best arm identification in multi-armed bandits.
\newblock \emph{arXiv preprint arXiv:2110.08627}, 2021.

\bibitem[Zimmert and Seldin(2019)]{pmlr-v89-zimmert19a}
Julian Zimmert and Yevgeny Seldin.
\newblock An optimal algorithm for stochastic and adversarial bandits.
\newblock In Kamalika Chaudhuri and Masashi Sugiyama, editors, \emph{Proceedings of the Twenty-Second International Conference on Artificial Intelligence and Statistics}, volume~89 of \emph{Proceedings of Machine Learning Research}, pages 467--475. PMLR, 16--18 Apr 2019.

\bibitem[Zimmert and Seldin(2021)]{zimmert2021tsallis}
Julian Zimmert and Yevgeny Seldin.
\newblock Tsallis-inf: An optimal algorithm for stochastic and adversarial bandits.
\newblock \emph{Journal of Machine Learning Research}, 22\penalty0 (28):\penalty0 1--49, 2021.

\bibitem[Zimmert et~al.(2019)Zimmert, Luo, and Wei]{zimmert2019beating}
Julian Zimmert, Haipeng Luo, and Chen-Yu Wei.
\newblock Beating stochastic and adversarial semi-bandits optimally and simultaneously.
\newblock In \emph{International Conference on Machine Learning}, pages 7683--7692. PMLR, 2019.

\end{thebibliography}
\bibliographystyle{plainnat}
\newpage
\appendix
\newpage
\renewcommand{\appendixpagename}{\centering \LARGE Appendix}
\appendixpage

\startcontents[section]
\printcontents[section]{l}{1}{\setcounter{tocdepth}{2}}
\newpage
\section{Proof of Theorem \ref{lya}}\label{pflya}
In this section, we provide a full proof of Theorem \ref{lya}. All of the verifications of constants are deferred to Appendix \ref{verify1}.

First, we will show \eq{lsuper}, or more formally:
\begin{Lem}\label{formalsuper}
    Recall that $D_t=\hat{L}_{t,i_*}-N_t$, where $N_t=\min _{i\in\mathcal{I}, i\ne i_*} \hat{L}_{t,i}$.
    There exist constants $C>0$ and $C_{\alpha,q,\rho},C'_{\alpha,q,\rho}>0$ depending only on $\alpha$, $q$ and $\rho$ such that if $0<\alpha<1$, $\lambda=C\frac{\Delta}{d}$ (required in the definition of $g_t$ in \eq{gt2}), $L= \lfloor C_{\alpha,q,\rho}
%\frac{K^2\log(d)}{\Delta^2}
\frac{d\log(d)}{\Delta^2}
\rfloor+1$ and $t\ge N:=
\lfloor C'_{\alpha,q,\rho}
%K^2L^2
\frac{d^3\log(d)^2}{\Delta^4}\rfloor+1,
$ then provided $D_t\ge -\sqrt{dt}/\alpha$, we have
    \[
    \E\!\left[g_{t+L}(D_{t+L})\,\middle|\,\mathscr{F}_{t-1}\right]\le g_t(D_t).
    \]
\end{Lem}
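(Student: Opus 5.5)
We will prove the $L$-step supermartingale inequality by a case analysis on the value of $D_t$, following the heuristics of Toy models 1 and 2 but working over a block of $L$ rounds, so that Lemma \ref{ntgrow} supplies the linear growth of $N_t$. Write $D_{t+L}-D_t=(\hat L_{t+L,i_*}-\hat L_{t,i_*})-(N_{t+L}-N_t)=:X-Y$. Conditioned on $\mathscr{F}_{t-1}$, the increment $X$ is a sum of $L$ importance-weighted losses of arm $i_*$. Its conditional mean is $\mu_{i_*}L$. Since $p_{s,i_*}\ge$ a quantity controlled by $D_s^+$ through \eq{pti}, its conditional variance is at most $\mu_{i_*}\sum_s \E[(1-p_{s,i_*})/p_{s,i_*}]\lesssim L(1+\eta_t D_t^+ + \eta_t L/\min p)^2$. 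By Lemma \ref{ntgrow} (applicable since $4dL\le\sqrt{dt}/(4\alpha)$ once $t\ge N$ and $D_t\ge-\sqrt{dt}/\alpha$), $\E[Y\mid\mathscr{F}_{t-1}]\ge(\mu_{i_*}+\Delta)L-o(L)$, with the error at most $\Delta L/4$ by the choice $L\asymp d\log d/\Delta^2$. Hence the drift of $D$ over the block is at most $-\frac34\Delta L$. The increments $Y$ are nonnegative, so $N$ only pushes $D$ down, which is favorable for both branches of $g$ as long as we bound expectations of convex functions carefully.

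\emph{Case 1: $D_t\ge \Delta t/2$ or generally $D_t\ge 0$ (positive branch).} Here we use $g_{t+L}(y)\le (t+L)^{-\theta}(y^+ + \Delta(t+L))^2$, valid on both branches since $\Delta^2 t^q e^{\lambda y}\le \Delta^2t^{q}\le t^{-\theta}(\Delta t)^2$ for $y<0$. We then expand the square: with $Z=D_t+\Delta t$, $\E[(D_{t+L}+\Delta(t+L))^2]\le Z^2+2Z(\Delta L-\tfrac34\Delta L\cdot\tfrac43\ldots)$. More precisely the cross term gives $-2Z\cdot\frac{\Delta L}{2}$-type savings up to the $Y$ error, and the second moment gives $\mathrm{Var}(X)+O(L^2)$. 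The key comparison is the variance against the time decay $(1+L/t)^{-\theta}\approx 1-\theta L/t$. The variance of $X$ is at most $\mu_{i_*}L(1+\alpha D_t^+/(2\sqrt t))^2\approx \mu_{i_*}\alpha^2 L Z^2/(4t)$ at leading order. We therefore need $\mu_{i_*}\alpha^2/4<\theta$, which is exactly $q<2-\alpha^2\mu_{i_*}/4$. The slack $\rho$ absorbs the lower-order terms ($L^2$, cross terms, and the $O(1)$ part of the variance, which are dominated by the $\Delta L Z$ drift savings when $Z\ge\Delta t$ and $t\ge N$). Here $p_{s,i_*}$ at later times in the block must be compared with $p_{t,i_*}$; since $D$ moves by at most $O(\text{block jumps})$, we control this by bounding $p_{s,i_*}^{-1}\le (\eta_s D_s^+/2+\sqrt2)^2$ and $D_s\le D_t+X_{[t,s)}$, giving an extra term handled through a crude moment bound.

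\emph{Case 2: $-\sqrt{dt}/\alpha\le D_t<0$ (exponential branch).} Here $p_{s,i_*}\ge 1/2$-ish, so the single-step jumps of $X$ are bounded by $2$ as long as $D$ stays negative. We bound $\E[g_{t+L}(D_{t+L})]$ by splitting on whether $D_{t+L}<0$. On that event we use $\E e^{\lambda(X-Y)}$: a Bernstein/Hoeffding-type mgf bound on $X-\mu_{i_*}L$ with $\lambda=C\Delta/d$ small, plus the fact that $Y$ has mean at least $(\mu_{i_*}+\Delta)L-\Delta L/4$ and $Y\ge0$, yields $\E e^{\lambda(X-Y)}\le e^{-\lambda\Delta L/4}$. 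This beats $(1+L/t)^q\le e^{qL/t}$ once $t\ge N\gg qd/\Delta^2$. The exponential moment of $-Y$ needs more than its mean. I plan to use concavity: $\E e^{-\lambda Y}\le 1-\lambda\E Y+\lambda^2\E Y^2/2$, with $\E Y^2$ controlled because $Y\le$ the sum of increments of a single fixed arm, whose second moment is bounded via $p_{s,i}$ not too small when $D_t\ge-\sqrt{dt}/\alpha$. This is why $\lambda\propto\Delta/d$. On the event $D_{t+L}\ge0$, the positive branch is at most $(t+L)^{-\theta}(D_{t+L}^++\Delta(t+L))^2$, and $D_{t+L}^+\le X^{\text{excess}}$. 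A Chebyshev/Markov bound on the probability of crossing from $D_t$ to $0$, combined with the smooth matching $g_t(0^-)=g_t(0^+)$, keeps this contribution below the exponential savings.

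The main obstacle I expect is Case 1: controlling the variance of $X$ uniformly over the block when $D_s$ can grow inside the block, which changes $p_{s,i_*}$. This matters because the heavy tail noted in the choice of $m$ shows that $m=2$ is borderline. I would first try a stopping argument inside the block, conditioning stepwise on $\mathscr{F}_{s}$ and proving a one-step inequality $\E_s[(D_{s+1}+\Delta(s+1))^2]\le(1+(\alpha^2\mu_{i_*}/4+\rho/2)/s)(D_s+\Delta s)^2+(\text{drift terms})$, then telescoping over $L$ steps before applying Lemma \ref{ntgrow} to the accumulated drift.
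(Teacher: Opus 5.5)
There is a genuine gap: both cases mishandle the interface $y=0$, and Case~2 lacks an exponential bound for the growth of $N$.

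\emph{Case~1 does not close.} First, the clipped bound $g_{t+L}(y)\le (t+L)^{-\theta}(y^++\Delta(t+L))^2$ discards exactly what is needed near the interface. At $D_t=0$ the quantity you would bound is already at least $(t+L)^{-\theta}\Delta^2(t+L)^2=\Delta^2(t+L)^q>\Delta^2t^q=g_t(0)$, so no estimate built on it can give the inequality. More generally, for $0\le D_t\lesssim\Delta L$ the loss is about $2x\,\E[D_{t+L}^-\mid\mathscr F_{t-1}]\gtrsim x\Delta L$, where $x=D_t+\Delta t$. That far exceeds the available slack $\rho Lx^2/t\approx\rho\Delta Lx$.

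Second, the ``$\Delta L Z$ drift savings'' you use to absorb lower-order terms do not exist. Because the argument is shifted by $\Delta(t+L)$, the conditional mean of $(D_{t+L}+\Delta(t+L))-(D_t+\Delta t)$ is $\E[T_t\mid\mathscr F_{t-1}]$, where $T_t=(\mu_{i_*}+\Delta)L-(N_{t+L}-N_t)$. Lemma~\ref{ntgrow} bounds this only \emph{from above} by a positive error. The sole contraction is $(1+L/t)^{-\theta}$ against the variance factor $\mu_{i_*}\alpha^2/4$. Every other term, including $2\E[T_t^+\mid\mathscr F_{t-1}]/x$, must therefore fall below a small multiple of $\rho L/t$. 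This forces $L\gtrsim d\log(d)/(\rho^2\Delta^2)$; an error of $\Delta L/4$ is not enough.

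The missing idea is the comparison between branches, Lemma~\ref{lem:comparison_gn}(2). For $z\in[-R,0]$ and $t\ge(R+2/\lambda)/\Delta$, the exponential branch lies below the quadratic continued to negative arguments. Take $R=4dL$ and use the deterministic bound $D_{t+L}\ge D_t-4dL$ from Lemma~\ref{lem:block_deterministic_control}. This gives $g_{t+L}(D_{t+L})\le(t+L)^{-\theta}(x+R_t+T_t^+)^2$, with $R_t=\hat L_{t+L,i_*}-\hat L_{t,i_*}-L\mu_{i_*}$. Expand this once at block level with $x$ frozen:
\begin{itemize}
\item the linear term in $R_t$ vanishes;
\item the cross term $R_tT_t^+$ is handled by Cauchy--Schwarz;
\item $\E[R_t^2\mid\mathscr F_{t-1}]$ follows from the one-step recursion $\E[s_{t+1}\mid\mathscr F_{t-1}]\le(1+C_\alpha/t)s_t+\alpha^2/4$, where $s_t=(\alpha D_t^+/(2\sqrt t)+\sqrt d)^2$.
\end{itemize}
This also settles your within-block worry without telescoping the quadratic step by step. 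Telescoping would leave drift terms weighted by a varying $D_s+\Delta s$, which Lemma~\ref{ntgrow} cannot handle.

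\emph{Case~2 also has a gap.} With the prescribed $\lambda=C\Delta/d$ and $L\asymp d\log(d)/\Delta^2$, $\lambda L$ is of order $\log(d)/\Delta$, so $\lambda Y$ is not small. The expansion $\E e^{-\lambda Y}\le 1-\lambda\E Y+\lambda^2\E Y^2/2$ then cannot produce the factor $e^{-\lambda(\mu_{i_*}+\Delta)L(1-o(1))}$ you need. You need a genuine exponential-moment bound for $T_t$. The paper gets it from Lemma~\ref{ntlowerpre}: $T_t\le\max_{i\in\mathcal A_t}(-W_i)$, so $e^{\gamma T_t}\le\sum_{i\in\mathcal A_t}e^{-\gamma W_i}$. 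Each $W_i$ is a martingale with increments bounded by $4d$, which gives $\E[e^{2\lambda T_t}\mid\mathscr F_{t-1}]\le d\,e^{16\lambda^2dL}$. After Cauchy--Schwarz decouples $R_t$ from $T_t$, the factor $\sqrt d$ is absorbed by $L\ge2\log(d)/(\lambda\Delta)$.

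Relatedly, on the block $p_{s,i_*}$ is only $\ge 1/(4d)$, not about $1/2$. This must hold even after $D$ crosses zero inside the block, and it does, since $D_{t+r}\le D_t+4dr\le 4dL\le\sqrt{dt}/(4\alpha)$.

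Finally, a Chebyshev bound on the crossing event cannot work. For $D_t\approx-2dL$ the target $g_t(D_t)=\Delta^2t^qe^{-2\lambda dL}$ is exponentially small in $\Delta L$. Chebyshev only makes $\P(D_{t+L}\ge0\mid\mathscr F_{t-1})$ polynomially small in $L$. Use instead the other half of the comparison, Lemma~\ref{lem:comparison_gn}(1): for $y\ge0$ and $t\ge 2/(\Delta\lambda)$, $t^{-\theta}(\Delta t+y)^2\le\Delta^2t^qe^{\lambda y}$. Then $g_{t+L}(D_{t+L})\le\Delta^2(t+L)^qe^{\lambda D_{t+L}}$ everywhere, and no split on the sign of $D_{t+L}$ is needed.
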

\begin{proof}
    Denote 
    % $x=D_t+\Delta t$,
    $\rho=2-\frac{\alpha^2 \mu_{i_*}}{4}-q>0$. In the following, we will compute
    \[
    \E\!\left[g_{t+L}(D_{t+L})\,\middle|\,\mathscr{F}_{t-1}\right]/ g_t(D_t)
    \]
    to show the desired result. 
    The computation is divided into two parts according to whether $D_t \ge 0$.

    Before presenting the detailed computation, we first introduce some auxiliary variables \begin{equation}\label{xut}
        \begin{aligned}
            U_t&:=\left(\Delta(t+L)+D_{t+L}\right)-\left(\Delta t+D_t\right) = R_t + T_t, \\
            R_t&=\hat{L}_{t+L,i_*}-\hat L_{t,i_*}-L\mu_{i_*},\\ T_t&=\left(\mu_{i_*}+\Delta\right) L-(N_{t+L}-N_t).
        \end{aligned}
        % =\left(\Delta(t+L)+D_{t+L}\right)-x
        ,
    \end{equation}
    Instead of considering the difference between $D_{t+L}$ and $D_t$, we consider the difference between $\Delta(t+L)+D_{t+L}$ and $\Delta t+D_t$ to balance the means.
    This definition directly follows from the expression of $g_t$ in \eq{gt2} on the positive half-line and is also helpful for the computation on the negative half-line.
    % In the following, we will compute 
    % \[
    % \E\!\left[g_{t+L}(x+U_t)\,\middle|\,\mathscr{F}_{t-1}\right]/g_t(x).
    % \]

    Recall that $\hat{L}_t = \sum_{s=1}^{t-1} \hat{\ell}_s$.
    From the unbiasedness of $\hat{\ell}_s$ defined in  \eq{eq:loss-vector-est}, we have $\E\!\left[R_t\,\middle|\,\mathscr{F}_{t-1}\right] = 0$.
    The term $T_t$ contains $N_{t+L}-N_t$ and is bounded using the lemmas in Appendix~\ref{growthofnt}.

    We distinguish two cases to prove the results and we will verify that the choices of $\lambda, L$ and $N$ satisfy all following the constrains in Appendix \ref{verify1}.
    \paragraph{The positive half-line:}

    First, we consider the case when $D_t\ge 0$. Since the $g_t(y)$ in \eq{gt2} depends on $y$ only through $y + \Delta t$ on the positive half-line, we let $x=D_t+\Delta t$ to simplify the expression such that
    \[
    g_{t}(D_{t})=t^{-\theta}x^2.
    \]
    Since (see Appendix \ref{verify1})
\begin{equation}\label{eq:lya-time-block}
    t\ge N,
    \qquad
    4dL\le\frac{\sqrt{dN}}{4\alpha}\le\frac{\sqrt{dt}}{4\alpha},
\end{equation}
and \(D_t\ge 0\), by Lemma \ref{lem:block_deterministic_control} (1), we have
    \[
    D_{t+L}\ge D_t-4dL\ge -4dL.
    \]
    Note that (see Appendix \ref{verify1})
\begin{equation}\label{eq:lya-time-comparison}
    t+L\ge N\ge \frac{4dL+\frac{2}{\lambda}}{\Delta}.
\end{equation}
    then by Lemma \ref{lem:comparison_gn} (2) with $t$ replaced with $t+L$ and $R=4dL$, when $D_{t+L} < 0$, we have
    \[
    g_{t+L}(D_{t+L})\leq (t+L)^{-\theta}(\Delta(t+L)+D_{t+L})^2=(t+L)^{-\theta}(x+U_t)^2;
    \]
    when $D_{t+L} \ge 0$, this inequality becomes an equality.
    
    Moreover, \eq{eq:lya-time-comparison} implies
    \[
    x+U_t=x+R_t+T_t=\Delta(t+L)+D_{t+L}\ge\frac{2}{\lambda}>0.
    \]
    Since $T_t^+\ge T_t$, it follows that
    \[
    (x+U_t)^2=(x+R_t+T_t)^2\le(x+R_t+T_t^+)^2.
    \]
    Therefore, since $x=D_t+\Delta t$ is $\mathscr{F}_{t-1}$-measurable, taking the conditional expectation yields
    \begin{equation}\label{ratiopo}
    \begin{aligned}
    \frac{\E\!\left[g_{t+L}(D_{t+L})\,\middle|\,\mathscr{F}_{t-1}\right]}{g_t(D_t)}
    \le&\left(1+\frac Lt\right)^{-\theta}\Bigg[1+
    \frac{2\E\!\left[T_t^+\,\middle|\,\mathscr{F}_{t-1}\right]}{x}\\
    &+\frac{\E\!\left[R_t^2\,\middle|\,\mathscr{F}_{t-1}\right]
    +\E\!\left[(T_t^+)^2\,\middle|\,\mathscr{F}_{t-1}\right]
    +2\E\!\left[R_tT_t^+\,\middle|\,\mathscr{F}_{t-1}\right]}{x^2}\Bigg].
    \end{aligned}
    \end{equation}

    By Lemma~\ref{tt2} (1) and the parameter choice verified in Appendix~\ref{verify1},
    \begin{equation}\label{eq:lya-L-ut1}
    L\ge \frac{4C'd\log(d)}{\Delta^2},
    \qquad C'\ge \max\left\{\frac{1024C^2}{\rho^2},\frac{4C}{\rho}\right\},
    \end{equation}
    so, using $x\ge\Delta t$,
    \begin{equation}\label{ut1}
    \E\!\left[T_t^+\,\middle|\,\mathscr{F}_{t-1}\right]
    \le\frac{\rho L\Delta}{16}
    \le\frac{\rho Lx}{16t}.
    \end{equation}

    By Corollary~\ref{rt2}, since
    \begin{equation}\label{eq:lya-time-rt2}
    t\ge N\ge C_{\alpha,q,\rho}\frac{dL}{\Delta^2},
    \end{equation}
    and $D_t=x-\Delta t\ge0$,
    \begin{equation}\label{term1}
    \E\!\left[R_t^2\,\middle|\,\mathscr{F}_{t-1}\right]
    \le\left(\frac{\mu_{i_*}\alpha^2}{4}+\frac{\rho}{4}\right)\frac{Lx^2}{t}.
    \end{equation}
    Lemma~\ref{tt2} (1) also gives
    \[
    \E\!\left[(T_t^+)^2\,\middle|\,\mathscr{F}_{t-1}\right]
    \le C\left(4dL\log(d)+(\log(d))^2\right).
    \]
    Thus, since $x\ge\Delta t$ and
    \begin{equation}\label{eq:lya-time-term2}
    t\ge N\ge\frac{C}{\rho^2\Delta^2}
    \left(4d\log(d)+\frac{(\log(d))^2}{L}\right),
    \end{equation}
    we have
    \begin{equation}\label{term2}
    \E\!\left[(T_t^+)^2\,\middle|\,\mathscr{F}_{t-1}\right]
    \le\frac{\rho^2}{256}\frac{Lx^2}{t}
    \le\frac{\rho Lx^2}{8t},
    \end{equation}
    and, by Cauchy--Schwarz,
    \begin{equation}\label{term3}
    2\left|\E\!\left[R_tT_t^+\,\middle|\,\mathscr{F}_{t-1}\right]\right|
    \le2\sqrt{\E\!\left[R_t^2\,\middle|\,\mathscr{F}_{t-1}\right]
    \E\!\left[(T_t^+)^2\,\middle|\,\mathscr{F}_{t-1}\right]}
    \le\frac{\rho Lx^2}{8t}.
    \end{equation}
    Combining \eq{term1}--\eq{term3},
    \begin{equation}\label{ut2}
    \E\!\left[R_t^2\,\middle|\,\mathscr{F}_{t-1}\right]
    +\E\!\left[(T_t^+)^2\,\middle|\,\mathscr{F}_{t-1}\right]
    +2\E\!\left[R_tT_t^+\,\middle|\,\mathscr{F}_{t-1}\right]
    \le\left(\frac{\mu_{i_*}\alpha^2}{4}+\frac{\rho}{2}\right)\frac{Lx^2}{t}.
    \end{equation}
    
    Finally, it suffices to bound $\left(1+\frac{L}{t}\right)^{-\theta}$. In fact, by Lemma \ref{lem:one_plus_u_negative_theta}, since (see Appendix \ref{verify1}) \begin{equation}\label{eq:lya-time-one-plus}
    t\ge N\ge \frac{24L}{\rho}
    \ge \frac{4\theta(\theta+1)L}{\rho},
\end{equation} we have
    \begin{equation}\label{1lt}
        \left(1+\frac{L}{t}\right)^{-\theta}\le 1-\frac{\theta L}{t} +\frac{\theta(\theta+1)}{2}\frac{L^2}{t^2}\leq 1-\frac{\theta L}{t}+\frac{\rho L}{8t}.
    \end{equation}
    Then by \eq{ratiopo}, \eq{ut1}, \eq{ut2} and \eq{1lt}, we have
    \[
    \begin{aligned}
        \frac{\E\!\left[g_{t+L}(D_{t+L})\,\middle|\,\mathscr{F}_{t-1}\right]}{g_{t}(D_{t})}\leq& \left(1-(\theta-\frac{\rho}{8})\frac{L}{t}\right)\left(
    1+(\frac{\mu_{i_*}\alpha^2}{4}+\frac{5\rho}{8}) \frac{L}{t}
    \right)\\
    \leq &1-\left(\theta-\frac{\mu_{i_*}\alpha^2}{4}-\frac{3\rho}{4}\right)\frac{L}{t}<1,
    \end{aligned}
    \]
    where we used that
    \[
    \theta=2-q=\frac{\mu_{i_*}\alpha^2}{4}+\rho.
    \]
\paragraph{The negative half-line:} 

Then we consider the negative half-line, where $-\sqrt{dt}/\alpha\leq D_t\leq 0$ and
\[
g_{t}(D_{t})=\Delta^2 t^qe^{\lambda D_t}.
\]
 By Lemma \ref{lem:comparison_gn} (1), since \(t\ge N\) and \eq{eq:lya-time-comparison} holds, we have
\[
g_{t+L}(D_{t+L})\leq \Delta^2(t+L)^q e^{\lambda D_{t+L}}.
\]
Therefore, 
\begin{equation}\label{ratione}
    \begin{aligned}
        \frac{\E\!\left[g_{t+L}(D_{t+L})\,\middle|\,\mathscr{F}_{t-1}\right]}{g_{t}(D_{t})}\leq& \left(1+\frac{L}{t}\right)^q \E\!\left[e^{\lambda (D_{t+L}-D_t)}\,\middle|\,\mathscr{F}_{t-1}\right]\\= &\left(1+\frac{L}{t}\right)^q e^{-\lambda\Delta L}\E\!\left[e^{\lambda U_t}\,\middle|\,\mathscr{F}_{t-1}\right]
        \\= &\left(1+\frac{L}{t}\right)^q e^{-\lambda\Delta L}\E\!\left[e^{\lambda (T_t+R_t)}\,\middle|\,\mathscr{F}_{t-1}\right]\\
        \leq &\left(1+\frac{L}{t}\right)^q e^{-\lambda\Delta L}\sqrt{\E\!\left[e^{2\lambda T_t}\,\middle|\,\mathscr{F}_{t-1}\right]\E\!\left[e^{2\lambda R_t}\,\middle|\,\mathscr{F}_{t-1}\right]}.
    \end{aligned}
\end{equation}
where in the third line we used that \[
D_{t+L}-D_t=U_t-\Delta L =T_t+R_t-\Delta L,
\]
and in the third line we applied Cauchy–Schwarz inequality.

Then, since \eq{eq:lya-time-block} holds and $D_t\ge-\sqrt{dt}/\alpha$, by Lemma \ref{tt2} (2), we have
\begin{equation}\label{2ltt}
    \E\left[
    \exp\left(2\lambda T_t\right)
    \,\middle|\,\mathscr{F}_{t-1}\right]\leq d\exp\left(16\lambda^2dL\right).
\end{equation}
Similarly, since (see Appendix \ref{verify1}) \begin{equation}\label{eq:lya-lambda-small}
    \lambda\le \frac{1}{8d},
\end{equation} by Lemma \ref{negexp}, we have
\begin{equation}\label{2lrt}
    \E\left[\exp\left(2\lambda R_t\right)\,\middle|\,\mathscr{F}_{t-1}\right]\leq e^{16dL\lambda^2}.
\end{equation}
Hence, combining \eq{2ltt} and \eq{2lrt} with \eq{ratione}, one can see that
\[
\frac{\E\!\left[g_{t+L}(D_{t+L})\,\middle|\,\mathscr{F}_{t-1}\right]}{g_t(D_t)}
\le\left(1+\frac Lt\right)^q
\exp\left(-\lambda\Delta L+\frac12\log(d)+16\lambda^2dL\right).
\]
Then since (see Appendix \ref{verify1}) \begin{equation}\label{eq:lya-neg-exp-params}
    L\ge \frac{2\log(d)}{\lambda\Delta},
    \qquad
    \lambda\le \frac{\Delta}{64d},
\end{equation}
% where $C$ is a positive constant,
we have
\[
\frac{\E\!\left[g_{t+L}(D_{t+L})\,\middle|\,\mathscr{F}_{t-1}\right]}{g_t(D_t)}
\le\left(1+\frac Lt\right)^q\exp\left(-\lambda\Delta L/2\right).
\]
Hence, since $1+a\leq e^a$ for any $a\in \R$, we have
\[
\frac{\E\!\left[g_{t+L}(D_{t+L})\,\middle|\,\mathscr{F}_{t-1}\right]}{g_t(D_t)}
\le\exp\left(-\lambda\Delta L/2+\frac{qL}{t}\right),
\]
which is less than $1$ since (see Appendix \ref{verify1}) \begin{equation}\label{eq:lya-time-neg-final}
    t\ge N\ge \frac{2qL}{\lambda\Delta}.
\end{equation} 

% Finally, since $K\asymp d$ if $0<\alpha<1$, then there exist constants $C>0$ and $C_{\alpha,q,\rho},C'_{\alpha,q}>0$ depending only on $\alpha$ and $q$ such that when choosing $\lambda=C\frac{\Delta}{d^2}$, $L= \lfloor C_{\alpha,q,\rho}
% %\frac{K^2\log(d)}{\Delta^2}
% \frac{d^2\log(d)}{\Delta^2}
% \rfloor+1$ and $t\ge 
% C'_{\alpha,q}
% %K^2L^2
% \frac{d^6\log(d)^2}{\Delta^4},
% $
% the constraints about $t,\lambda$ and $L$ above can all be satisfied. This completes the proof for this lemma.

Combining the two cases proves this lemma. We verify that the choices of $\lambda, L$ and $N$ satisfy all the constrains above in Appendix \ref{verify1}.
\end{proof}

In the following, we choose the same $\lambda$ and $L$ defined in Lemma \ref{formalsuper}. Then, we present how to show that $\E\left[g_t(D_t)\right]$ is uniformly bounded for $t\ge N$. Following the definition of $N$ in Lemma \ref{formalsuper}, for those $t\ge N$, we will apply Lemma \ref{formalsuper} to give an upper bound for
\[
\sup_{t\ge N} \E\left[g_t(D_t)\1_{\left\{D_t\ge-\sqrt{dt}/\alpha\right\}}\right]
\]
via a stopping-time argument. For the case when $D_t<-\sqrt{dt}/\alpha$, we will simply use the monotonicity of $g_t$. Theorem~\ref{lya} concerns only $t\ge N$; the proof of Theorem~\ref{et} uses $\Err_t\le1$ directly when $1\le t<N$.

Fix a residue class $r\in\{0,1,\dots,L-1\}$, and define the grid
\begin{equation}\label{sj}
    s_j:=N+r+(j-1)L,\qquad j\ge 1,
\end{equation}
where $N$ was defined in Lemma \ref{formalsuper}. We deal with each grid separately and let
\[
\mathscr{G}_j:=\mathscr{F}_{s_j-1},
\qquad
X_j:=D_{s_j},
\qquad
\mathcal{M}_j:=\left[-\frac{\sqrt{ds_j}}{\alpha},+\infty\right),
\]
and
\begin{equation}\label{hjdef}
    H_j:=g_{s_j}(D_{s_j}),
\qquad j\ge 1.
\end{equation}
Then $(X_j)_{j\ge 1}$ is adapted to $(\mathscr{G}_j)_{j\ge 1}$, and $(H_j)_{j\ge 1}$ is a sequence of nonnegative random variables. 
The grid spacing $L$ in \eq{sj} allows us to apply Lemma~\ref{formalsuper}.
In particular,
under the condition of Lemma \ref{formalsuper}, if $X_j\in\mathcal{M}_j$, we have
\[
\E\left[H_{j+1}\,\middle|\,\mathscr{G}_j\right]\leq H_j.
\]
Moreover, we emphasize that the above definition omits the dependence on the residue class $r$, because the subsequent derivation is independent of $r$.
Every $t\ge N$ can be written uniquely as $t=N+r+(j-1)L$ with $0\le r<L$ and $j\ge1$. Therefore,
\[
\sup_{t\ge N} \E\left[g_t(D_t)\1_{\left\{D_t\ge-\sqrt{dt}/\alpha\right\}}\right]
= \sup_{r\in\{0,1,\dots,L-1\}} \sup_{j \ge 1} \E[H_j\1_{\{X_j\in\mathcal{M}_j\}}].
\]

However, it is difficult to control \(\sup_{j \ge 1} \E[H_j\1_{\{X_j\in\mathcal{M}_j\}}]\) directly.
Thus, we resort to a stopping-time argument.
Define recursively a sequence of stopping times \((\sigma_k,\tau_k)_{k\ge 0}\) by
\begin{equation}\label{sigma0}
    \sigma_0:=1,
\qquad
\tau_k:=\inf\{j\ge \sigma_k:\, X_j\notin \mathcal{M}_j\},
\end{equation}
and, for \(k\ge 1\),
\begin{equation}\label{sigmak}
    \sigma_k:=\inf\{j>\tau_{k-1}:\, X_j\in \mathcal{M}_j\},
\end{equation}
again with the convention \(\inf\varnothing=+\infty\). 
Intuitively, these stopping times partition all positive integers according to whether $X_j\in \mathcal{M}_j $ holds.
Then, by Lemma \ref{ht}, we can convert the union bound over the grid into a series over the sequence of stopping times $(\sigma_k)_{k \ge0}$, at which the condition
\(X_j \in \mathcal{M}_j\) is satisfied, as shown below
\begin{equation}\label{ehj}
    \sup_{j \ge 1} \mathbb E[H_j\1_{\{X_j\in\mathcal{M}_j\}}]
\le
\sum_{k=0}^{+\infty}\mathbb E\!\left[H_{\sigma_k}\1_{\{\sigma_k<+\infty\}}\right].
\end{equation}
% Note that
%     \[
%     t\ge N\ge \frac{KL+\frac{2}{\lambda}}{\Delta},
%     \]
By Lemma \ref{lem:comparison_gn} (1), since $s_{\sigma_k}\ge N$ and \eq{eq:lya-time-comparison} holds,
then for any $k\ge 1$, on the event where $\sigma_k<+\infty$, we have
\[
H_{\sigma_k}\leq \Delta^2 s_{\sigma_k}^q e^{\lambda D_{s_{\sigma_k}}}.
\]
By the definition of $\sigma_k$ in \eq{sigmak}, we have
\[
D_{s_{\sigma_k-1}}<-\frac{\sqrt{d s_{\sigma_k-1}}}{\alpha}<0.
\]
Then, since 
%$N\ge (3KL)^2$
\eq{eq:lya-time-block} holds
and $s_{\sigma_k}=s_{\sigma_k-1}+L$, by Lemma \ref{lem:block_deterministic_control} (2), we have
\[
D_{s_{\sigma_k}}
\le D_{s_{\sigma_k-1}}+4dL
\le-\frac{3\sqrt{d s_{\sigma_k-1}}}{4\alpha}
\le-\frac{\sqrt{d s_{\sigma_k}}}{2\alpha},
\]
where the last inequality uses $s_{\sigma_k}=s_{\sigma_k-1}+L\le2s_{\sigma_k-1}$, which follows from $N\ge L$. Hence,
\[
H_{\sigma_k}\leq \Delta^2s_{\sigma_k}^q
e^{-\lambda\sqrt{d s_{\sigma_k}}/(2\alpha)}.
\]
Therefore,
\[
\mathbb E[H_j\1_{\{X_j\in\mathcal{M}_j\}}]
\le
\mathbb E\!\left[H_{\sigma_0}\right]+
\sum_{k=1}^{+\infty}\Delta^2s_{\sigma_k}^q
e^{-\lambda\sqrt{d s_{\sigma_k}}/(2\alpha)}.
\]
For the latter term, by the definition of $s_j$ in \eq{sj}, it is bounded above by
\[
\sum_{j=2}^{+\infty}\Delta^2s_j^q e^{-\lambda\sqrt{ds_j}/(2\alpha)}
\le C_q\Delta^2\left(\frac{\alpha}{\lambda\sqrt d}\right)^{2q}
\left(1+\frac{\alpha^2}{\lambda^2dL}\right),
\]
where we applied Lemma \ref{lem:standalone-sum-bound} in the last inequality and $C_q$ is a constant depending only on $q$.

Then, by the definition of $\sigma_0$ and $s_j$ in \eq{sigma0} and \eq{sj}, respectively, and by \eq{ehj},
\begin{equation}\label{ehjnew}
\sup_{j\ge1}\E\left[H_j\1_{\{X_j\in\mathcal{M}_j\}}\right]
\le \E[H_1]
+C_q\Delta^2\left(\frac{\alpha}{\lambda\sqrt d}\right)^{2q}
\left(1+\frac{\alpha^2}{\lambda^2dL}\right).
\end{equation}
Since every $t\ge N$ can be written as $t=N+r+(j-1)L$ with $0\le r<L$ and $j\ge1$, the definition of $H_j$ gives
\begin{align}
\sup_{t\ge N}\E\left[g_t(D_t)\1_{\{D_t\ge-\sqrt{dt}/\alpha\}}\right]
&\le \max_{0\le r<L}\E\left[g_{N+r}(D_{N+r})\right]\nonumber\\
&\quad+C_q\Delta^2\left(\frac{\alpha}{\lambda\sqrt d}\right)^{2q}
\left(1+\frac{\alpha^2}{\lambda^2dL}\right).
\label{egtdt1}
\end{align}

If $D_t<-\sqrt{dt}/\alpha$, monotonicity of the negative branch gives
\[
g_t(D_t)
\le \Delta^2t^q\exp\left(-\frac{\lambda\sqrt{dt}}{\alpha}\right)
\le \Delta^2\left(\frac{2q\alpha}{e\lambda\sqrt d}\right)^{2q}.
\]
Hence
\begin{equation}\label{eq:lesssqrt}
\sup_{t\ge N}\E\left[g_t(D_t)\1_{\{D_t<-\sqrt{dt}/\alpha\}}\right]
\le \Delta^2\left(\frac{2q\alpha}{e\lambda\sqrt d}\right)^{2q}.
\end{equation}
Combining \eq{egtdt1} and \eq{eq:lesssqrt},
\begin{equation}\label{eq:last}
\begin{aligned}
\sup_{t\ge N}\E[g_t(D_t)]
\le&\max_{0\le r<L}\E[g_{N+r}(D_{N+r})]\\
&+C'_q\Delta^2\left(\frac{\alpha}{\lambda\sqrt d}\right)^{2q}
\left(1+\frac{\alpha^2}{\lambda^2dL}\right).
\end{aligned}
\end{equation}

It remains to bound $\E[g_{N+r}(D_{N+r})]$ for $0\le r<L$. Set $u=N+r$. The argument based on Lemma~\ref{dt} gives
\begin{equation}\label{egtdt2}
\E[g_u(D_u)]
\le2\Delta^2u^q+2C_\alpha d u^{q-1}
\le C_{\alpha,q}\left(\Delta^2N^q+dN^{q-1}\right).
\end{equation}
Here $N\le u<N+L\le2N$. The last inequality is valid for every $q>0$: when $q<1$, $u^{q-1}\le N^{q-1}$, whereas for $q\ge1$, $u^{q-1}\le2^{q-1}N^{q-1}$. Since the parameter choice also ensures $\Delta^2N\ge d$, the second term in \eq{egtdt2} is bounded by the first. Therefore \eq{eq:last} yields the unsimplified bound
\[
\sup_{t\ge N}\E[g_t(D_t)]
\le C_{\alpha,q,\rho}\left[
\Delta^2N^q
+\Delta^2\left(\frac{\alpha}{\lambda\sqrt d}\right)^{2q}
+\frac{\Delta^2}{L}\left(\frac{\alpha}{\lambda\sqrt d}\right)^{2q+2}
\right].
\]
Substituting the choices of $\lambda,L,N$ and absorbing the last two terms into the first gives
\[
\E[g_t(D_t)]
\le C_{\alpha,q,\rho}\frac{d^{3q}(\log(d))^{2q}}{\Delta^{4q-2}},
\qquad t\ge N,
\]
which proves Theorem~\ref{lya}.

\section{Proof of Theorem \ref{lower}}\label{pflower}
In this section, we provide a full proof of Theorem \ref{lower}. 
% All of the verifications of constants are deferred to Appendix \ref{verify2}.

In order to formalize the definition of the sequence of stopping times in the proof sketch in Section \ref{pfsklower}, we need to find a start point. In other words, we should show that there exists a $T$ satisfying the conditions on $t$ in Lemma~\ref{base}.
\begin{Lem}\label{start}
Assume that \(0<\alpha<1\),  \(\mu_{i_*}>0\). 
% and 
% % \(C>0\) is constant that is large enough
% $C>144$. For any
% \(B>\frac{Cd}{\mu_{i_*}\alpha^2}\), define \(v_B:=\frac{\mu_{i_*}\alpha^2 B^2}{16}\),  \(V_B:=4\alpha^2 B^2\). 
Following the same definition of $C$, $B$, $v_B$ and $V_B$ in Lemma \ref{base}, if $C\ge 144$, then there exists $T\ge Cd^2$ such that
\begin{equation}\label{aimT}
    \P\left(v_B\leq \frac{D_T}{T}\leq V_B\right)>0.
\end{equation}
\end{Lem}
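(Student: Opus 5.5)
The plan is to exhibit an explicit finite history of arm pulls and loss values that has positive probability and ends in the window. Two facts make this possible. First, for the $1/2$-Tsallis-INF update \eq{pti}, every $p_{t,i}$ is strictly positive, so any finite sequence of arm choices has positive probability given the loss realizations. Second, since $\mu_{i_*}>0$, there is $c>0$ with $\P(\ell_{i_*}\ge c)>0$. Moreover, for each $j\ne i_*$ we have $\P(\ell_j\le \mu_j)>0$. It therefore suffices to construct a deterministic "pull schedule" together with loss events of positive probability, such that along it $D_T/T\in[v_B,V_B]$ for some $T\ge Cd^2$. The probability of the resulting cylinder event is then a finite product of positive numbers.

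\textbf{Waiting moves.} First I let the process run until some large time $t_0\ge Cd^2$ along a path where $D_{t_0}$ is moderate; for instance, alternate pulls so that all estimates stay bounded, which is possible on a positive-probability path. Let $x_t:=D_t/t$. I will use two kinds of moves. In a \emph{waiting} move, a fixed suboptimal arm $j$ (one attaining $N_t$) is pulled with $\ell_{t,j}\le\mu_j$. When $D_t>0$, arm $j$ is close to the leader, so by \eq{pti} its probability $p_{t,j}$ is bounded below by a constant. Hence $N_t$ increases by $O(1)$, $D_t$ changes by $O(1)$, and $t$ increases by $1$. Consequently $x_t$ moves down by $O((1+x_t)/t)$ per step, so for large $t$ waiting moves let me decrease $x$ with arbitrarily fine granularity.

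\textbf{Jump moves.} In a \emph{jump} move, $i_*$ is pulled with $\ell_{t,i_*}\in[c,c']$, where $[c,c']$ is a short interval of positive probability. By \eq{pti}, $p_{t,i_*}^{-1}=\bigl(\eta_tD_t^+/2+p_{t,i_t^*}^{-1/2}\bigr)^2$ with $p_{t,i_t^*}\in[1/2,1]$. So the jump sends $x$ to
\[
F_t(x)\approx x+\ell\,\tfrac{1}{t}\Bigl(\tfrac{\alpha x\sqrt t}{2}+O(1)\Bigr)^2\approx x+\tfrac{\ell\alpha^2x^2}{4},
\]
up to $O(1/\sqrt t)$ errors, with explicit bounds coming from $1/2\le p_{t,i_t^*}\le1$. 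The limit map $x\mapsto x+\ell\alpha^2x^2/4$ is continuous and increasing. It is $<v_B$ for small $x$ and $\ge v_B$ at $x=v_B$. By the intermediate value theorem there is an open set of $x'\in(0,v_B)$ whose image lies strictly inside $(v_B,V_B)$; here I use $V_B>v_B$, which holds since $\mu_{i_*}\le1$. By continuity in $\ell$, this stays true for all $\ell\in[c,c']$ if $c'-c$ is small.

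\textbf{The schedule.} I first raise $x_t$ into the region above $x'$ using a few jump moves, each of which increases $x$ by at least $\ell\alpha^2x^2/4$. Starting from $x$ bounded away from zero requires one initial pull of $i_*$ at a time when $D_t\approx0$, which already gives $x$ of order $1/t$. Then I use waiting moves to bring $x_t$ down into the target neighborhood of $x'$, and finally one jump move lands $x_T\in[v_B,V_B]$. Any jumps that overshoot $v_B$ early are harmless, because waiting moves bring $x$ back down continuously. Throughout, the error terms are $O(1/\sqrt t)$, so it suffices to take $t_0$ large, which also gives $T\ge Cd^2$.

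\textbf{Main obstacle.} The delicate step is making the "continuous" heuristics rigorous: bounding uniformly the $O(1/\sqrt t)$ errors in the jump map coming from the unknown factor $p_{t,i_t^*}^{-1/2}\in[1,\sqrt2]$, and verifying that waiting moves keep $D_t>0$ and decrease $x$ in steps small enough to hit the required neighborhood. I would handle this by working with explicit upper and lower envelopes $F_t^\pm$ of the jump map (corresponding to the factor values $1$ and $\sqrt2$). I would then choose the target $x'$ so that both envelopes map it into $[v_B,V_B]$. The hypothesis $C\ge144$ together with $B>Cd/(\mu_{i_*}\alpha^2)$ should give enough room between $v_B$ and $V_B$ for this.
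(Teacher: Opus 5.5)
Your plan works, but it is more roundabout than the paper's, and the step you flag as the main obstacle can be avoided entirely.

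\textbf{The paper's route.} The paper uses two phases. First, it pulls $i_*$ in rounds $1,\dots,T'-1$ with $\ell_{s,i_*}\ge\mu_{i_*}/2$. Comparison with the deterministic sequence of Lemma~\ref{lem:zm-superlinear} then gives $D_{T'}/T'\ge 2V_B$, and $T'\ge Cd^2$ comes for free by taking $T'$ large. Second, it avoids $i_*$. During this phase $D_t$ is nonincreasing and, by Lemma~\ref{l=1}, drops by at most $4d$ per round. Hence at $t_0=\lceil D_{T'}/V_B\rceil$ one has $D_{t_0}/t_0\le V_B$ and $D_{t_0}\ge (V_B/2-4d)t_0\ge v_Bt_0$.

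\textbf{Comparison.} Your waiting moves are a special case of this descent. The window is wide ($V_B/v_B=64/\mu_{i_*}\ge 64$), while a waiting step lowers $x$ by at most $(x+d)/(t+1)$. So the first time $x_t$ drops below $V_B$, it is already in $[v_B,V_B]$. You can therefore drop several pieces of your plan: the final landing jump, the IVT choice of $x'$, the short loss interval $[c,c']$, the envelopes $F_t^\pm$, and the preliminary phase bringing $D$ near $0$ at a large $t_0$. Your extra machinery only lets you land by a jump rather than by drift, which the statement does not need.

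\textbf{Details to fix if you keep your version.}
\begin{itemize}
\item In general $p_{t,i_t^*}\in[1/d,1]$ (Lemma~\ref{4-2}), not $[1/2,1]$; the latter is the $d=2$ toy-model bound. This only changes constants in your $O(1/\sqrt t)$ errors.
\item Raising $x$ from order $1/t$ does not take ``a few'' jumps driven by the quadratic term, which contributes only $O(1/t^2)$ there. It needs a long but finite run of consecutive $i_*$ pulls, relying on superlinear growth as in Lemma~\ref{lem:zm-superlinear}.
\item ``All estimates stay bounded'' is inaccurate, since the estimates grow, but that phase is dispensable anyway.
\item Your schedule is adaptive: which arm attains $N_t$, and how long you wait, depend on realized losses. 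So the landing time is random, and you need the countable-union step the paper uses: positive probability that some $t\ge Cd^2$ works implies some fixed $T$ works.
\item Positivity of the path probability comes from the tower property with almost surely positive, history-dependent conditional probabilities, as in Lemma~\ref{lem:pfposi}. It is not a literal product of fixed positive numbers.
\end{itemize}
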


\begin{proof}[Proof of Lemma \ref{start}]
    By the union bound, it suffices to show that
\[
\P\left(\text{there exists $t\ge Cd^2$ such that }v_B\leq \frac{D_t}{t}\leq V_B\right)>0.
\]

To show this, we will make $D_{T'}$ sufficiently large by selecting $i_*$ during rounds $1,\ldots,T'-1$, and then prevent further increments of $\hat L_{t,i_*}$ by not selecting $i_*$ during rounds $T',\ldots,T''$.
Recall that $D_t$ defined in Section~\ref{Preliminaries} is the gap between the cumulative loss of the best arm $i^*$ and the minimal cumulative loss among the other arms.
On this event, we will show that $D_t/t$ enters $[v_B,V_B]$ at some $t\in(T',T'']$.

For any $t>Cd^2$, consider the following event
\[
E(t):=\left\{
I_1=\cdots= I_t=i_*,\text{ and }\ell_{1,i_*},\cdots,\ell_{t,i_*}\ge \mu_{i_*}/2
\right\}.
\]
On event $E(t)$, we always choose the best arm for the first $t$ rounds, and the observed losses have a non-zero lower bound.
From the definition of $\hat{L}_s$ and $N_s$ in Section~\ref{Preliminaries}, 
for any $1\leq s\leq t+1$,
only the $i_*$-element of $\hat{L}_s$ is non-zero and thus $N_s = 0$.
Moreover, for every $1\le s\le t$, the non-zero element satisfies
\[
\hat{L}_{s+1,i_*}=\hat{L}_{s,i_*}+\frac{\ell_{s,i_*}}{p_{s,i_*}}\ge \hat{L}_{s,i_*}+\frac{\mu_{i_*}}{2p_{s,i_*}}.
\]
Then by Lemma \ref{4-2} with $\eta_s = \alpha / \sqrt{s}$, we have
\[
\hat{L}_{s,i_*}+\left(\sqrt{d}+\frac{\alpha\hat{L}_{s,i_*}}{2\sqrt{s}}\right)^{2}\ge\hat{L}_{s+1,i_*}\ge \hat{L}_{s,i_*}+\frac{\mu_{i_*}}{2}\left(1+\frac{\alpha\hat{L}_{s,i_*}}{2\sqrt{s}}\right)^{2},
\]
% where we also applied $\underline{\hat{L}}_{s,i_*} = {\hat{L}}_{s,i_*} - \min_{i \in \mathcal{I}} {\hat{L}}_{s,i} = {\hat{L}}_{s,i_*}$. because only the $i_*$-element of $\hat{L}_s$ is non-zero.

Define two deterministic sequences $(w_t)_{t\ge1}$ and $(z_t)_{t\ge1}$ that satisfy that $w_1=z_1=0$ and for any $t\ge 1$,
\[
w_{t+1}=w_t+\left(\sqrt{d}+\frac{\alpha w_t}{2\sqrt{t}}\right)^{2}, z_{t+1}=z_t+\frac{\mu_{i_*}}{2}\left(1+\frac{\alpha z_t}{2\sqrt{t}}\right)^{2}.
\]
Then, by induction, on $E(t)$ we have, for every $1\le s\le t+1$,
\[
w_s\ge\hat{L}_{s,i_*}\ge z_s.
\]
Then, by Lemma \ref{lem:zm-superlinear}, there exists $T'>Cd^2+1$ such that on the event $E(T'-1)$
\[
\frac{D_{T'}}{T'}
= \frac{ \hat{L}_{T',i_*} - N_{T'} }{T'}
=\frac{\hat{L}_{T',i_*}}{T'}\ge \frac{z_{T'}}{T'}\ge 2V_B.
\]
Also, we have
\[
\hat{L}_{T',i_*}\leq w_{T'}<+\infty.
\]
Let $T''=T'+\lceil\frac{w_{T'}}{d}\rceil$ and we consider the following  event 
\begin{equation}\label{F}
    F:=E(T'-1)\cap\left\{
I_{T'},I_{T'+1},\ldots,I_{T''}\ne i_*
\right\}.
\end{equation}
On $F$, the optimal arm is selected during rounds $1,\ldots,T'-1$ and is not selected during rounds $T',\ldots,T''$.
Define 
\begin{equation}\label{det0}
    t_0:=\lceil\frac{D_{T'}}{V_B}\rceil.
\end{equation}
Since $D_{T'} / T' \ge 2V_B$, we have $t_0 > T'$.
Also, when $C\ge 16$, $v_B\ge d$, hence,
\[
T''\ge 1+\lceil\frac{w_{T'}}{d}\rceil\ge 1+\frac{D_{T'}}{v_B}\ge t_0.
\]
Then, we will show that on the event $F$, we have
\begin{equation}\label{t0}
    v_B\leq \frac{D_{t_0}}{t_0}\leq V_B.
\end{equation}

For the right-hand side of \eq{t0}, recall that $t_0\le T''$. Since $i_*$ is not selected during rounds $T',\ldots,t_0-1$ on $F$, we have $\hat L_{t_0,i_*}=\hat L_{T',i_*}$. Moreover, $N_{t_0}-N_{T'}\ge0$, and hence
\[
D_{t_0}-D_{T'} = 
% \hat{L}_{t_0,i_*} - N_{T_0} - (\hat{L}_{T',i_*} - N_{T'}) =
-(N_{t_0}-N_{T'})\leq 0.
\]
Hence, by the definition of $t_0$ in \eq{det0}, we have
\[
\frac{D_{t_0}}{t_0}\leq \frac{D_{T'}}{t_0}\leq V_B.
\]

 For the left-hand side of \eq{t0}, note that when $C\ge 144$, we have
\[
T'\ge Cd^2\ge 144d^2,
\]
and by the parameter choices in Appendix~\ref{verify2},
\[
V_B\ge B\ge 8d.
\]

% \begin{equation}\label{eq:lower-start-basic}
%     V_B\ge K,
% \end{equation}
% by the definition of $V_B$.

Moreover, since $D_{T'}/V_B\ge2T'\ge1$,
\[
t_0\le\frac{D_{T'}}{V_B}+1\le\frac{2D_{T'}}{V_B}\le\frac{D_{T'}}{4d}.
\]
Thus $t_0\le\lfloor D_{T'}/(4d)\rfloor$. Then, by Lemma \ref{l=1} and $D_{T'} / T' \ge 2V_B$ derived above, we have
% \begin{equation}\label{tpvb}
%     D_{t_0}\ge D_{T'}-K(t_0-T')\ge D_{T'}-Kt_0\ge v_B t_0,
% \end{equation}
\[
D_{t_0}\ge D_{T'}-4d(t_0-T')\ge D_{T'}-4dt_0\ge \left(\frac{V_B}{2}-4d\right)t_0\ge v_B t_0,
\]
where in the last inequality we used that when $C\ge 144$,
% $2(K+v_B)\leq V_B$. 
% \begin{equation}\label{eq:lower-start-vB}
%     2(K+v_B)\le V_B.
% \end{equation}
\[
    V_B-2v_B
    =
    4\alpha^2B^2-\frac{\mu_{i_*}\alpha^2B^2}{8}
    \ge
    \frac{31}{8}\alpha^2B^2\ge 8d.
\]
Combing the above analysis yields \eq{t0} and hence, we have
\[
\P\left(\text{there exists $t\ge Cd^2$ such that }v_B\leq \frac{D_t}{t}\leq V_B\right)\ge \P(F).
\]
    By Lemma~\ref{lem:pfposi}, $\P(F)>0$. Therefore, there indeed exists $T\ge Cd^2$ such that \eq{aimT} holds.
\end{proof}

Now, we apply Lemmas~\ref{base} and \ref{start} to prove Theorem~\ref{lower}, following the sketch of proof in Section \ref{pfsklower}. 

With $v_B$, $V_B$, and $\tau(\cdot)$ defined in Lemma~\ref{base} (increase $C$ to $144$ when necessary) and $T$ defined in Lemma~\ref{start},
the formal definition of the stopping times $(\tau_k)_{k \ge 0}$ is
\[
\tau_0:=\begin{cases}
T-1, & \text{if $v_B\leq \frac{D_T}{T}\leq V_B$},\\[4pt]
+\infty, & \text{else},
\end{cases}
\]
and for any $k\ge 0$,
\[
\tau_{k+1}:=
\begin{cases}
\tau(\tau_k+1), & \text{if $\tau_k< +\infty$},\\[4pt]
+\infty, & \text{else}.
\end{cases}
\]
We first inductively show that for any $k\ge 0$, $\tau_k$ is indeed a stopping time satisfying $\tau_k \ge T+k - 1$. 
It is clearly true for $k=0$ since $D_T$ is measurable in $\mathscr{F}_{T-1}$. Suppose that for some $k \ge 0$, $\tau_k$ is a stopping time satisfying $\tau_{k} \ge T+k-1$.
Then by Lemma~\ref{base}~(1), when $\tau_k < +\infty$, $\tau_{k+1} > \tau_k \ge T+k -1$ and thus $\tau_{k+1} \ge T+k$.
Then for any $t\ge T+k$, we have
\begin{equation}\label{decom}
    \{\tau_{k+1}=t\}=\bigcup_{T+k-1\leq t'<t}\{\tau(t'+1)=t,\tau_k=t'\}.
\end{equation}
By the assumption, $\{\tau_k=t'\}\in\mathscr{F}_{t'}\subset\mathscr{F}_{t}$. Also, by Lemma \ref{base}, with $t'+1 \ge T+k$, $\{\tau(t'+1)=t\}\in\mathscr{F}_t$. Putting them together shows that $\tau_{k+1}$ is also a stopping time.

Note that for any $k\ge 0$, by \eq{decom}, we have
\begin{align*}
    \P(\tau_{k+1}<+\infty)
    & = \sum_{t=T+k}^{+\infty} \P(\tau_{k+1} = t) 
    = \sum_{t=T+k}^{+\infty} \sum_{s=T+k-1}^{t-1} \P(\tau(s+1) = t, \tau_k = s) \\
    & = \sum_{s=T+k-1}^{+\infty} \sum_{t=s+1}^{+\infty} \P(\tau(s+1) = t, \tau_k = s)
    = \sum_{s=T+k-1}^{+\infty}
    \P(\tau(s+1)<+\infty,\tau_k=s),
\end{align*}
where for the last equality, we used $\tau(s+1) > s$ by Lemma~\ref{base}~(1).
Besides, since $\tau_k$ is a stopping time, by Lemma \ref{base}~(3), we have
\begin{align*}
    \P(\tau(s+1)<+\infty,\tau_k=s)
    & =\E\left[
    \P(\tau(s+1)<+\infty\mid\mathscr{F}_{s})
    \1_{\{\tau_k=s\}}\right] \\
    & \ge q_B \P\Big(\tau_k=s, v_B\le \frac{D_{s+1}}{s+1} \le V_B \Big)
    % \overset{?}{=}
    =q_B \P(\tau_k=s),
\end{align*}
On the event $\{\tau_k=s\}$, we have $v_B\le D_{s+1}/(s+1)\le V_B$: for $k=0$ this follows from the definition of $\tau_0$, while for $k\ge1$ it follows from $\tau_k=\tau(\tau_{k-1}+1)$ and Lemma~\ref{base}~(2)(iii).
Taking the summation over $s$ yields
\[
\P(\tau_{k+1}<+\infty)\ge q_B\P(\tau_{k}<+\infty).
\]
By iterating, we finally have for any $k\ge 1$
\begin{equation}\label{qb}
    \P(\tau_{k}<+\infty)\ge q_B^k\P(\tau_0<+\infty),
\end{equation}
where $\P(\tau_0<+\infty)>0$ by Lemma \ref{start}.

% By Lemma~\ref{base}~(3), $\P(\tau_1 < \infty) \ge q_B \P\big( v_B \le \frac{D_T}{T} \le V_B \big) =q_B \P(\tau_0 < +\infty)$.
% Then by Lemma~\ref{base}~(2), $\P\big(v_B \le \frac{D_{\tau_1+1}}{\tau_1+1} \le V_B\big) \ge \P(\tau_1 < +\infty) \ge q_B \P(\tau_0 < +\infty)$.
% Similarly, from \(\tau_1\) to \(\tau_2\), we have \(\P\big(v_B \le \frac{D_{\tau_2+1}}{\tau_2+1} \le V_B \big)\ge \P(\tau_2<+\infty)\ge q_B \P\big(v_B \le \frac{D_{\tau_1+1}}{\tau_1+1} \le V_B\big)\ge q_B^2 \P(\tau_0 < +\infty)\).
% Repeating this argument yields 
% \begin{equation}\label{qb}
%     \P(\tau_{k}<+\infty)\ge q_B^k\P(\tau_0<+\infty),
% \end{equation}
% where $\P(\tau_0<+\infty)>0$ by Lemma \ref{start}. 

% By the same argument before \eq{intui}, we also have
% \[
% \Err_t\ge \P(D_t>0)\ge \P(\tau_{\lceil\log_{a_B}(t)\rceil}<+\infty).
% \]
Then, following the argument before \eq{intui}, if $\tau_k<+\infty$, then for any $t$ such that $T\leq t\leq (T-1) a_B^k + 1$, we have $D_{t}>0$. Hence, for any $t\ge T$, we can set $k = \lceil\log_{a_B}(t/(T-1))\rceil$ such that $t \le  (T-1) a_B^k + 1$ and 
\[
\Err_t\ge
\P(D_t>0)\ge \P(\tau_{\lceil\log_{a_B}(t/(T-1))\rceil}<+\infty)\ge \P(\tau_{\lceil\log_{a_B}(t)\rceil}<+\infty),
\]
which, by \eq{qb}, is lower bounded by
the following quantity, where we use $0<q_B<1$ (see Appendix~\ref{verify2}) and $\lceil x\rceil\le x+1$:
\[ \P(\tau_0<+\infty)q_B^{\lceil\log_{a_B}(t)\rceil}\ge q_B\P(\tau_0<+\infty)t^{\frac{\log(q_B)}{\log(a_B)}}.
\]
Finally, it suffices to note that when $B\to+\infty$, $\frac{\log(q_B)}{\log(a_B)}\to -2$.

\section{Proof of Lemma \ref{base}}\label{pfbase}
In this section, we present a full proof of Lemma \ref{base}. We will first define the stopping time $\tau(t)$, then prove the three items accordingly. We will verify that $t$ and $B$ defined in the conditions satisfy all the constrains in our proof below in Appendix \ref{verify2}. 

\paragraph{Definition of $\tau(t)$}
To begin with, we provide the definition of $\tau(t)$. Let 
    \begin{equation}\label{m-+}
        m_{-}(t):=\left\lceil\frac{D_t}{3 B}\right\rceil, \quad m_{+}(t):=\left\lfloor\frac{D_t}{2B}\right\rfloor,
    \end{equation}
    and we consider the time when the player first pulls arm $i_{*}$ after the time step $t$:
    \begin{equation}\label{sigmat}
        \sigma(t):=\inf\left\{
t'\ge t\,:\, 
I_{t'}= i_*
% \hat{\ell}_{t,i}>r
\right\}.
    \end{equation}
    We consider the event
    \begin{equation}\label{G(t)}
        G(t):=\left\{v_B\leq\frac{D_t}{t}\leq V_B\right\}\cap H(t)\cap\{\ell_{\sigma(t),i_*}\ge\mu_{i_*}/2,\sigma(t)<+\infty\},
    \end{equation} where
    \[
    \begin{aligned}
    H(t):=\left\{m_{-}(t)<\sigma(t)\leq m_{+}(t)\right\}.
    %     &G_{-}(t):=
    %     \left\{
    % \sigma(t)>m_{-}(t)
    % \right\};
    % % \left\{
    % % \text{There doesn't exist $
    % % t'\in [t,m_{-}(t)]$ such that $\hat{\ell}_{t,i}>r$}
    % % \right\};
    % \\
    % &G_{+}(t):=
    % \left\{
    % \sigma(t)\leq m_{+}(t) \text{ and }\ell_{\sigma(t)}
    % \right\}.
    % % \left\{
    % % \text{There exists at least one  $t'\in(m_{-}(t),m_{+}(t)]$ such that $\hat{\ell}_{t,i}>r$}
    % % \right\}.
    \end{aligned}
    \]
    One should also note that since $B\ge \frac{Cd}{\alpha^2\mu_{i_*}}$ and $C$ is large enough, when $v_B\leq\frac{D_t}{t}\leq V_B$, we have 
    % $v_B = \frac{\mu_{i_*}\alpha^2 B^2}{16} \ge \frac{Cd}{16}B \ge 3B$ and 
    (see Appendix \ref{verify2})\begin{equation}\label{eq:lower-m-order}
    t<m_-(t)<m_+(t).
\end{equation} Then we define
    \begin{equation}\label{taut}
        \tau(t):=
\begin{cases}
+\infty, & \text{on the event $G^c(t)$},\\[4pt]
% \inf\left\{
% t'>m_{-}(t)\,:\, 
% I_{t'}= i_*
% % \hat{\ell}_{t,i}>r
% \right\}
\sigma(t)
, & \text{on the event $G(t)$}.
\end{cases}
    \end{equation}
We first verify that $\tau(t)$ is indeed a stopping time w.r.t. $(\mathscr{F}_{s})_{s\ge 1}$. Note that $\tau(t)\ge t$ by the definition \eq{taut}, then it suffices to check that for any $t'\ge t$, $ \left\{\tau(t)=t'\right\}\in\mathscr{F}_{t'}$. We have
\[
    \begin{aligned}
        \left\{\tau(t)=t'\right\}=&\left\{v_B\leq\frac{D_t}{t}\leq V_B\right\}\cap\left\{m_{-}(t)<t'\leq m_{+}(t)\right\}\\
    \cap& \left\{
    \text{There is no $s\in[t,t')$ such that $I_{s}= i_*$}
    \right\}\cap \left\{
    I_{t'}=i_* \text{ and }\ell_{t',i_*}\ge \mu_{i_*}/2
    \right\}.
    \end{aligned}
\]

Note that all of $D_t$, $m_{-}(t)$ and $m_{+}(t)$ are measurable in $\mathscr{F}_{t-1}$ and $I_t$ is measurable in $\mathscr{F}_t$, then it is clear that the first three events are also measurable in $\mathscr{F}_{t'}$. Then, since when $I_{t'}=i_*$,
\[
\ell_{t',i_*}=\ell_{t',I_{t'}}
\]
is also measurable in $\mathscr{F}_{t'}$. Thus, the fourth event is also measurable in $\mathscr{F}_{t'}$. Hence, $\tau(t)$ is indeed a stopping time w.r.t. $(\mathscr{F}_{s})_{s\ge 1}$.

\paragraph{Proof of item (1)}
Then, we prove item (1).
If $\tau(t) = +\infty$, this property naturally holds.
If $\tau(t) < +\infty$, then $G(t)$ happens, implying $v_B\leq\frac{D_t}{t}\leq V_B$.
Then when $B$ is large enough we have (see Appendix \ref{verify2})
\begin{equation}\label{eq:lower-aB}
    \frac{m_{-}(t)}{t}\ge \frac{v_B}{3B} \ge \frac{\mu_{i_*}\alpha^2 B}{48}=a_B>1,
\end{equation}
Then it suffices to note that $\tau(t) \ge m_{-}(t)$ by the definition \eq{taut}.

\paragraph{Proof of item (2)}
Next, we prove item (2). Suppose that $\tau(t)<+\infty$, then $G(t)$ happens. Thus, property~(i) naturally holds.

For property~(ii),
we first show that $D_t$ stays positive. Since when $B$ is large enough (see Appendix \ref{verify2}),
\begin{equation}\label{eq:lower-BgeK-mplus}
    B\ge 8d,
    \qquad
    \tau(t)\le m_+(t)\le \frac{D_t}{2B}\le \frac{D_t}{B+4d},
\end{equation}
then by Lemma \ref{l=1} and (see Appendix \ref{verify2})
\begin{equation}\label{eq:lower-time-lone}
    t\ge Cd^2\ge 144d^2,
\end{equation} we have for any $t\leq t'\leq \tau(t)$
\begin{equation}\label{dt'}
    D_{t'}\ge D_t-4d(t'-t)
    \ge (B+4d) \tau(t) -4d(t'-t)
    \ge B t' > 0.
\end{equation}

It remains to show property~(iii), that is
\[
v_B\leq \frac{D_{\tau(t)+1}}{\tau(t)+1}\leq V_B.
\]
To this end, we first show that, for any $t'$ such that $t\leq t'\leq\tau(t)$,
\begin{equation}\label{b}
    Bt'\leq D_{t'}, \text{ and hence, } p_{t',i_{*}}\leq \frac{4}{\alpha^2B^2t'},
\end{equation}
and for any $t'$ such that $m_{-}(t)\leq t'\leq\tau(t)$,
\begin{equation}\label{3b}
     D_{t'}\leq 3Bt', \text{ and hence, } p_{t',i_{*}}\ge  \frac{1}{3\alpha^2B^2t'}.
\end{equation}
The first part of \eq{b} directly follows from \eq{dt'}. For the second part, it suffices to note that
by Lemma \ref{4-2},
\[
p_{t',i_{*}}\leq \left(1+\frac{\alpha D_{t'}}{2\sqrt{t'}}\right)^{-2}\leq \left(1+\frac{\alpha B \sqrt{t'}}{2}\right)^{-2}\leq\frac{4}{\alpha^2B^2t'}.
\]

% For \eq{b}, by \eq{dt'}, one can see that when $t< t'\leq\tau(t)$, we have
% \begin{equation}%\label{smallt'}
%     D_{t'}\ge D_t-K(t'-t)\ge Bt',
% \end{equation}
% where we used $B\ge K$ in \eq{eq:lower-BgeK-mplus}, which implies
% \[
% t'\leq m_{+}(t)\leq \frac{D_t}{2B}\leq \frac{D_t}{B+K}.
% \]
For \eq{3b}, one should note that on the event $G(t)$, arm $i_*$ will not be pulled until the $\tau(t)$ time step, which implies that for any $k\in[t,\tau(t))$
\[
\hat{\ell}_{k,i_*}=0,
\]
then by Lemma \ref{simd}, we have
\begin{equation}\label{unpulled}
    D_{t'}=D_t+N_t-N_{t'}\leq D_t.
\end{equation}
Since $t'\ge m_{-}(t)$, then by the definition of $m_{-}(t)$ in \eq{m-+}, one can see that
\[
D_{t'}\leq D_t\leq 3B m_{-}(t)\leq 3Bt'.
\]
Then, similarly, Lemma \ref{4-2},
% since $B\ge \frac{Cd}{\mu_{i_{*}}\alpha^2}$ and $C$ is large enough,
we have
\begin{equation}\label{eq:ptlower}
    p_{t',i_{*}}\ge \left(\sqrt{d}+\frac{\alpha D_{t'}}{2\sqrt{t'}}\right)^{-2}\ge \left(\sqrt{d}+\frac{3\alpha B \sqrt{t'}}{2}\right)^{-2}\ge \frac{1}{3\alpha^2B^2t'},
\end{equation}
where the last inequality used that $B$ is large enough (see Appendix \ref{verify2}). This proves \eq{3b}. 

Then, by the definition of $G(t)$ in \eq{G(t)}, arm $i_*$ is pulled at the time step $\tau(t)$, hence, 
\[
D_{\tau(t)+1}=D_{\tau(t)}+\frac{\ell_{\tau(t),i_{*}}}{p_{\tau(t),i_{*}}}.
\]
On one hand, by \eq{3b} 
% and Lemma \ref{4-2},
% % since $B\ge \frac{Cd}{\mu_{i_{*}}\alpha^2}$ and $C$ is large enough,
% we have
% \begin{equation}\label{ptaulow}
%     p_{\tau(t),i_{*}}\ge \left(\sqrt{d}+\frac{\alpha D_{\tau(t)}}{2\sqrt{\tau(t)}}\right)^{-2}\ge \left(\sqrt{d}+\frac{3\alpha B \sqrt{\tau(t)}}{2}\right)^{-2}\ge \frac{1}{3\alpha^2B^2\tau(t)},
% \end{equation}
% hence
, with $V_B = 4 \alpha^2 B^2$,
\begin{equation}\label{eq:lower-return-upper}
    D_{\tau(t)+1}
    \le 3B\tau(t)+3\alpha^2B^2\tau(t)
    \le V_B \tau(t)
    \le V_B(\tau(t)+1).
\end{equation}
On the other hand, by \eq{b} 
% and Lemma \ref{4-2},
% \begin{equation}\label{ptauup}
%     p_{\tau(t),i_{*}}\leq \left(1+\frac{\alpha D_{\tau(t)}}{2\sqrt{\tau(t)}}\right)^{-2}\leq \left(1+\frac{\alpha B \sqrt{\tau(t)}}{2}\right)^{-2}\leq\frac{4}{\alpha^2B^2\tau(t)}.
% \end{equation}
noting that on $G(t)$, $\ell_{\tau(t),i_*}\ge \mu_{i_*}/2$, with $v_B = \frac{\mu_{i_*}\alpha^2 B^2}{16}$, we have
% \begin{equation}\label{eq:lower-return-lower}
%     D_{\tau(t)+1}
%     \ge B\tau(t)+\frac{\alpha^2B^2\tau(t)}{4}\frac{\mu_{i_*}}{2}
%     \ge 2v_B\tau(t)
%     \ge v_B(\tau(t)+1).
% \end{equation}
\[
D_{\tau(t)+1}
    \ge B\tau(t)+\frac{\alpha^2B^2\tau(t)}{4}\frac{\mu_{i_*}}{2}
    \ge 2v_B\tau(t)
    \ge v_B(\tau(t)+1).
\]
\paragraph{Proof of item (3)}
Finally, we will prove item (3). We will first decompose the probability according to the value of $\sigma(t)$. Then we control each terms by the tower property. 

With $\tau(t)$ defined in \eq{taut} and $G$ defined in \eq{G(t)},
we have $\{ \tau(t)<+\infty \} = G$.
It follows that
\begin{align*}
    \P(\tau(t)<+\infty\mid \mathscr{F}_{t-1})
    & = \P(G(t)\mid \mathscr{F}_{t-1}) \\
    & =\P(H(t)\cap\{\ell_{\sigma(t),i_*}\ge\mu_{i_*}/2\} \cap \{ v_B\le D_t/t\le V_B \} \mid \mathscr{F}_{t-1}) \\
    & = \P(H(t)\cap\{\ell_{\sigma(t),i_*}\ge\mu_{i_*}/2\}\mid \mathscr{F}_{t-1}) \1_{ \{ v_B\le \frac{D_t}{t}\le V_B \} },
\end{align*}
where the last equality is because $D_t$ is measurable in $\mathscr{F}_{t-1}$.
% Then by the definition of $\tau(t)$ in \eq{taut}, one can see that 
% % if $v_B\leq\frac{D_t}{t}\leq V_B$, then
% \[
% \P(\tau(t)<+\infty\mid \mathscr{F}_{t-1}) =\P(G(t)\mid \mathscr{F}_{t-1})=\P(H(t)\cap\{\ell_{\sigma(t),i_*}\ge\mu_{i_*}/2\}\mid \mathscr{F}_{t-1}).
% \]
In the subsequent proof, we always assume $ \{ v_B\le \frac{D_t}{t}\le V_B \}$ happens such that we could omit the indicator function to simplify the notation.
Note that
\[
\P(H(t)\cap\{\ell_{\sigma(t),i_*}\ge\mu_{i_*}/2\}\mid \mathscr{F}_{t-1})=\sum_{k=m_{-}(t)+1}^{m_{+}(t)}\P(\{\sigma(t)=k\}\cap\{\ell_{\sigma(t),i_*}\ge\mu_{i_*}/2\}\mid \mathscr{F}_{t-1}).
\]
By the definition of $\sigma(t)$ in \eq{sigmat}, for any $k\in(m_{-}(t),m_{+}(t)]$, we have
\[
\begin{aligned}
    &\P(\{\sigma(t)=k\}\cap\{\ell_{\sigma(t),i_*}\ge\mu_{i_*}/2\}\mid \mathscr{F}_{t-1})\\=&\E\left[\1_{\{\sigma(t)=k\}}\P(\ell_{\sigma(t),i_*}\ge\mu_{i_*}/2\mid \mathscr{F}_{k-1},I_k)\mid \mathscr{F}_{t-1}\right]\\   
    =&\E\left[\1_{\{\sigma(t)=k\}}\P(\ell_{k,i_*}\ge\mu_{i_*}/2)\mid \mathscr{F}_{t-1}\right].
\end{aligned}
\]
Then, since $0\leq \ell_{k,i_*}\leq 1$, by Lemma \ref{lem:half-mean-lower-tail}, we have
\[
\P(\ell_{k,i_*}\ge\mu_{i_*}/2)\ge \mu_{i_*}/2,
\]
hence, 
\[
\P(\{\sigma(t)=k\}\cap\{\ell_{\sigma(t),i_*}\ge\mu_{i_*}/2\}\mid \mathscr{F}_{t-1})\ge \frac{\mu_{i_*}}{2}\P(\sigma(t)=k\mid \mathscr{F}_{t-1}),
\]
and then
\begin{equation}\label{mid}
    \P(H(t)\cap\{\ell_{\sigma(t),i_*}\ge\mu_{i_*}/2\}\mid \mathscr{F}_{t-1})\ge\sum_{k=m_{-}(t)+1}^{m_{+}(t)}\P(\{\sigma(t)=k\}\mid \mathscr{F}_{t-1})\mu_{i_*}/2.
\end{equation}
% Besides, by the definition of $\sigma(t)$ and the tower property, we have
% \[
% \P(\{\sigma(t)=k\}\mid \mathscr{F}_{t-1})=\E\left[\prod_{s=t}^{k-1}(1-p_{s,i_*})p_{k,i_*}\mid \mathscr{F}_{t-1}\right].
% \]
% Note that \eq{ptauup} also holds for any $s\in[t,k)$, then using that $1-x\ge e^{-2x}$ for $0\leq x\leq 1/2$, we have
% \[
% \prod_{s=t}^{k-1}(1-p_{s,i_*})\ge \prod_{s=t}^{k-1}\left(1-\frac{4}{\alpha^2B^2s}\right)\ge \exp\left(-\sum_{s=t}^{k-1}\frac{8}{\alpha^2B^2s}\right)\ge \exp\left(-8-\frac{8\log(\frac{k}{t})}{\alpha^2B^2}\right),
% \]
% where we applied Lemma \ref{lem:harmonic-log} in the last inequality. Since
% \[
% \frac{k}{t}\leq \frac{m_{+}(t)}{t}\leq \frac{V_B}{2B}\leq 2\alpha^2 B,
% \]
% then when $B>16$, we have
% \[
% \frac{8\log(\frac{k}{t})}{\alpha^2B^2}\leq \frac{16\alpha^2 B}{\alpha^2B^2}\leq 1.
% \]
% Then combined with \eq{ptaulow}, we have
% \[
% % \P(\{\sigma(t)=k\}\mid \mathscr{F}_{t-1})=
% \prod_{s=t}^{k-1}(1-p_{s,i_*})p_{k,i_*}\ge e^{-9} p_{k,i_*}\ge \frac{e^{-9}}{3\alpha^2B^2k}.
% \]
% Then, by \eq{mid}, we have
% \[
% \P(H(t)\cap\{\ell_{\sigma(t),i_*}\ge\mu_{i_*}/2\}\mid \mathscr{F}_{t-1})\ge \sum_{k=m_{-}(t)+1}^{m_{+}(t)}\frac{e^{-9}}{3\alpha^2B^2k}\frac{\mu_{i_*}}2.
% \]
% Again, by Lemma \ref{lem:harmonic-log}, we have
% \[
% \P(H(t)\cap\{\ell_{\sigma(t),i_*}\ge\mu_{i_*}/2\}\mid \mathscr{F}_{t-1})\ge \frac{\mu_{i_*}}2\frac{e^{-9}}{3\alpha^2B^2}\log\left(\frac{m_{+}(t)}{m_{-}(t)}\right)>\frac{\mu_{i_*}}2\frac{e^{-9}\log(3/2)}{3\alpha^2B^2}.
% \]
% This completes our proof.
Then we will compute each term in the right-hand side by the tower property. Let
\[
A_k:=\{\sigma(t)\ge k\}
=
\{I_s\neq i_*,\ s=t,\ldots,k-1\},
\qquad k\ge t,
\]
with the convention \(A_t=\Omega\), then one can see that for any \(k\in\{m_-(t)+1,\ldots,m_+(t)\}\)
\begin{equation}\label{akik}
    \{\sigma(t)=k\}=A_k\cap\{I_k=i_*\}.
\end{equation}
With $A_k \in \mathscr F_{k-1}$,
the tower property gives
\[
\P(\sigma(t)=k\mid \mathscr F_{t-1})
= \E\!\left[
\1_{A_k} \E[ \1_{ \{ I_k = i_* \} } \,\middle|\, \mathscr F_{k-1}  ]
\,\middle|\,\mathscr F_{t-1}
\right]
= \E\!\left[
\1_{A_k}p_{k,i_*}
\,\middle|\,\mathscr F_{t-1}
\right].
\]
On \(A_k\), since $k\ge m_{-}(t)$, the lower bound \eq{3b} holds; namely,
\[
p_{k,i_*}\ge \frac{1}{3\alpha^2B^2k}.
\]
Therefore,
\begin{equation}\label{step1}
    \P(\sigma(t)=k\mid \mathscr F_{t-1})
\ge
\frac{1}{3\alpha^2B^2k}
\P(A_k\mid \mathscr F_{t-1}).
\end{equation}
Hence, it suffices to lower bound
\(\P(A_k\mid \mathscr F_{t-1})\). For \(s\ge t\), by the tower property,
\[
\P(A_{s+1}\mid \mathscr F_{t-1})
=
\E\!\left[
\1_{A_s}\P(I_s\neq i_*\mid \mathscr F_{s-1})
\,\middle|\,\mathscr F_{t-1}
\right]
=
\E\!\left[
\1_{A_s}(1-p_{s,i_*})
\,\middle|\,\mathscr F_{t-1}
\right].
\]
On the event \(A_s\), since $t\leq s\leq k$, by \eq{b},
% similar to \eq{ptauup}, because \eq{b} is also true for $t\leq s\leq k$, we have
\[
p_{s,i_*}\le \frac{4}{\alpha^2B^2s}.
\]
Therefore,
\[
\P(A_{s+1}\mid \mathscr F_{t-1})
\ge
\left(1-\frac{4}{\alpha^2B^2s}\right)
\P(A_s\mid \mathscr F_{t-1}).
\]
Iterating this inequality gives, for every \(k\in[t,m_+(t)]\),
\[
\P(A_k\mid \mathscr F_{t-1})
\ge
\prod_{s=t}^{k-1}
\left(1-\frac{4}{\alpha^2B^2s}\right).
\]
Since \(1-a\ge e^{-2a}\) for \(0\le a\le 1/2\), and since (see Appendix \ref{verify2})
\begin{equation}\label{eq:lower-product-small}
    \frac{4}{\alpha^2B^2s}\le \frac12,
    \qquad s\ge t,
\end{equation} we obtain
\[
\prod_{s=t}^{k-1}
\left(1-\frac{4}{\alpha^2B^2s}\right)
\ge
\exp\left(
-\sum_{s=t}^{k-1}\frac{8}{\alpha^2B^2s}
\right).
\]
By Lemma~\ref{lem:harmonic-log}, we have
\[
\P(A_k\mid \mathscr F_{t-1})
\ge
\exp\left(
-\frac{8}{\alpha^2B^2}
\left(1+\log\frac{k}{t}\right)
\right).
\]
Since
% \begin{equation}\label{eq:lower-k-over-t}
%     \frac{k}{t}
%     \le
%     \frac{m_+(t)}{t}
%     \le
%     \frac{V_B}{2B}
%     \le
%     2\alpha^2B,
% \end{equation}
\[
\frac{k}{t}
    \le
    \frac{m_+(t)}{t}
    \le
    \frac{V_B}{2B}
    \le
    2\alpha^2B,
\]
we have, for \(B\) sufficiently large (see Appendix \ref{verify2}),
\begin{equation}\label{eq:lower-exp-control}
    \frac{8}{\alpha^2B^2}
    \left(1+\log\frac{k}{t}\right)\leq \frac{8}{\alpha^2B^2}
    \left(1+2\alpha^2B\right)
    \le 9,
\end{equation}
where we used that $\log(a)<a$ for any $a>0$.
Consequently,
\begin{equation}\label{step2}
    \P(A_k\mid \mathscr F_{t-1})\ge e^{-9}.
\end{equation}

Then, putting \eq{step1} and \eq{step2} together, we have
\[
\P(\sigma(t)=k\mid \mathscr F_{t-1})
\ge
\frac{e^{-9}}{3\alpha^2B^2k}.
\]

% Now fix \(k\in\{m_-(t)+1,\ldots,m_+(t)\}\). Since
% \[
% \{\sigma(t)=k\}=A_k\cap\{I_k=i_*\},
% \]
% another application of the tower property gives
% \[
% \P(\sigma(t)=k\mid \mathscr F_{t-1})
% =
% \E\!\left[
% \1_{A_k}p_{k,i_*}
% \,\middle|\,\mathscr F_{t-1}
% \right].
% \]
% On \(A_k\), the lower bound \eq{ptaulow} holds; namely,
% \[
% p_{k,i_*}\ge \frac{1}{3\alpha^2B^2k}.
% \]
% Therefore,

Summing over \(k=m_-(t)+1,\ldots,m_+(t)\), and using \eq{mid}, yields
\[
\P\!\left(
H(t)\cap
\left\{\ell_{\sigma(t),i_*}\ge \mu_{i_*}/2\right\}
\,\middle|\,
\mathscr F_{t-1}
\right)
\ge
\frac{\mu_{i_*}}2
\frac{e^{-9}}{3\alpha^2B^2}
\sum_{k=m_-(t)+1}^{m_+(t)}
\frac1k.
\]
Finally, by Lemma~\ref{lem:harmonic-log} and noting that, when \(B\) is large enough (see Appendix \ref{verify2}),
\begin{equation}\label{eq:lower-m-ratio}
    \frac{m_+(t)}{m_-(t)+1}>\frac54,
\end{equation}
we conclude that
\[
\P\!\left(
H(t)\cap
\left\{\ell_{\sigma(t),i_*}\ge \mu_{i_*}/2\right\}
\,\middle|\,
\mathscr F_{t-1}
\right)
\ge
\frac{\mu_{i_*}}2
\frac{e^{-9}\log(5/4)}{6\alpha^2B^2}.
\]
This completes the proof. We will verify that $t$ and $B$ defined in the conditions satisfy all the constrains above in Appendix \ref{verify2}.

\section{Important Facts}\label{important}

\begin{Lem}[Lemma A.4 in \cite{zhan2026lastiterateanalysesftrl12tsallis}]\label{pti-2}    
    Recall the definition of $p_{t,i}$ in \eq{ptdefi} and that $\underline{\hat{L}}_{t}=\hat{L}_t-\min _{i\in\mathcal{I}} \hat{L}_{t,i}\mathbf{1}$.
    For any $t\ge 1$, there exists $\nu>-\min\limits_{1\leq i\leq d}\hat{L}_{t, i}$ such that
    \[
    p_{t, i}=4\left(\eta_t\left(\hat{L}_{t, i}+\nu\right)\right)^{-2}=4\left(\eta_t\underline{\hat{L}}_{t, i}+2p_{t,i_{t}^*}^{-\frac{1}{2}}\right)^{-2}.
    \]
%     \begin{proof}
%         By the definition in \eq{ptdefi}, we have $p_t=\phi(-\eta_t\underline{\hat{L}}_t)$, then by Lemma \ref{exist} (with $f(x)=-4\sqrt{x}$), there exists $\nu>-\min\limits_{1\leq i\leq d}\hat{L}_{t, i}$ such that
%         \[
%         p_{t,i}=\phi_i(-\eta_t\underline{\hat{L}}_t)=4\left(\eta_t\left(\hat{L}_{t, i}+\nu\right)\right)^{-2}.
%         \]
%         From this, we also have
% $
% \nu=-\hat{L}_{t,i_{t}^*}+2\eta_t^{-1}p_{t,i_{t}^*}^{-\frac{1}{2}},
% $
% so for any $1\leq i\leq d$, we have
% \[
%     p_{t,i}=4\left(\eta_t\left(\hat{L}_{t, i}-\hat{L}_{t,i_{t}^*}\right)+2p_{t,i_{t}^*}^{-\frac{1}{2}}\right)^{-2}=4\left(\eta_t\underline{\hat{L}}_{t, i}+2p_{t,i_{t}^*}^{-\frac{1}{2}}\right)^{-2}.
% \]
%     \end{proof}
\end{Lem}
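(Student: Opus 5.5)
The plan is to derive the closed form from the KKT conditions of the strictly convex program \eq{ptdefi} with $\Psi(p)=-4\sum_i\sqrt{p_i}$. The objective $\eta_t\langle p,\hat L_t\rangle-4\sum_i\sqrt{p_i}$ is strictly convex and continuous on $\Delta^{d-1}$, so the minimizer $p_t$ exists and is unique.

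The first step is to show that $p_t$ is interior, i.e.\ $p_{t,i}>0$ for every $i$. The derivative $-2p_i^{-1/2}$ blows up as $p_i\to0$. So if some coordinate vanished, moving a small mass $\epsilon$ onto it from a positive coordinate would change the objective by $-4\sqrt{\epsilon}+O(\epsilon)<0$, which contradicts optimality.

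With only the equality constraint $\sum_i p_i=1$ active, the Lagrange condition reads: there is a multiplier $\nu\in\R$ with
\[
\eta_t\hat L_{t,i}-2p_{t,i}^{-1/2}+\eta_t\nu=0\quad\text{for all } i.
\]
Here the multiplier is scaled by $\eta_t$ for convenience. Solving gives $p_{t,i}^{-1/2}=\tfrac{\eta_t}{2}(\hat L_{t,i}+\nu)$. Positivity of the left side forces $\hat L_{t,i}+\nu>0$ for all $i$, that is, $\nu>-\min_i\hat L_{t,i}$. This yields the first expression $p_{t,i}=4\bigl(\eta_t(\hat L_{t,i}+\nu)\bigr)^{-2}$.

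For the second expression, evaluate the identity at $i=i_t^*$, where $\hat L_{t,i_t^*}=M_t=\min_i\hat L_{t,i}$. This gives $\nu=-M_t+2\eta_t^{-1}p_{t,i_t^*}^{-1/2}$. Substituting back yields
\[
\eta_t(\hat L_{t,i}+\nu)=\eta_t\underline{\hat L}_{t,i}+2p_{t,i_t^*}^{-1/2},
\]
which is the claimed formula. If ties occur in the argmin, any choice of $i_t^*$ gives the same $M_t$, so the formula is well defined.

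The only step needing real care is the interiority argument, which justifies using Lagrange stationarity without inequality multipliers. Everything else is direct algebra.
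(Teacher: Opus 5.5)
Your proof is correct and follows essentially the same route as the paper's (cited) argument. The paper obtains $p_{t,i}=4\bigl(\eta_t(\hat L_{t,i}+\nu)\bigr)^{-2}$ with $\nu>-\min_i\hat L_{t,i}$ from a general characterization of the FTRL solution for separable regularizers (applied with $f(x)=-4\sqrt{x}$), and then solves for $\nu$ by evaluating at $i_t^*$ exactly as you do; your only difference is that you derive that first representation yourself, via the interiority argument and Lagrange stationarity, which makes the proof self-contained.
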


\begin{Lem}[Lemma A.5 in \cite{zhan2026lastiterateanalysesftrl12tsallis}]\label{4-2}
    Recall the definition of $p_{t,i}$ in \eq{ptdefi} and that $\underline{\hat{L}}_{t}=\hat{L}_t-\min _{i\in\mathcal{I}} \hat{L}_{t,i}\mathbf{1}$.
    For any $t\ge 1$, we have
    \[
    4\left(\eta_t\underline{\hat{L}}_{t,i}+2\sqrt{d}\right)^{-2}\leq p_{t,i}\leq 4\left(\eta_t\underline{\hat{L}}_{t,i}+2\right)^{-2}.
    \]
    In particular, for $\eta_t = \frac{\alpha}{\sqrt{t}}$ and $D_t = \hat L_{t,i_*}- \min_{i\in\mathcal I,\ i\neq i_*}\hat L_{t,i}$, we have $\underline{\hat{L}}_{t,i_*}=D_t^+$ and consequently $p_{t, i_*} \ge \Big( \frac{\alpha D_t^+}{2 \sqrt{t}} +  \sqrt{d}\Big)^{-2}$.
    \begin{proof}
        By \eq{pti}, for any $1\leq i\leq d$, we have 
\[
    p_{t,i}=4\left(\eta_t\underline{\hat{L}}_{t, i}+2p_{t,i_{t}^*}^{-\frac{1}{2}}\right)^{-2}\leq 4\left(\eta_t\underline{\hat{L}}_{t,i}+2\right)^{-2}.
\]
Similarly, since $p_{t,i_{t}^*}=\max\limits_{1\leq i'\leq d}p_{t,i'}\ge 1/d$, then we have
\[
p_{t,i}\ge 4\left(\eta_t\underline{\hat{L}}_{t,i}+2\sqrt{d}\right)^{-2}.
\]
    \end{proof}
\end{Lem}

\begin{Lem}\label{simd}
    Recall that
\[
N_t := \min_{i\in\mathcal I,\ i\neq i_*}\hat L_{t,i},
\qquad
D_t := \hat L_{t,i_*}-N_t.
\]
    For any $t\ge 1$, 
    % we have
    % \[
    % N_{t+1}\ge N_t,
    % \]
    % and
    \[
    -(N_{t+1}-N_{t})\leq D_{t+1}-D_t\leq \hat{\ell}_{t,i_*},
    \]
    and the equality on the left-hand side holds if $I_t\ne i_{*}$.
\end{Lem}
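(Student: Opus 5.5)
The argument is a direct one-step comparison of $D_{t+1}$ with $D_t$, based on how each coordinate of $\hat L$ changes. By definition $\hat L_{t+1}=\hat L_t+\hat\ell_t$, so $\hat L_{t+1,i_*}=\hat L_{t,i_*}+\hat\ell_{t,i_*}$. This gives the exact identity
\[
D_{t+1}-D_t=\hat\ell_{t,i_*}-(N_{t+1}-N_t).
\]
Both inequalities then follow from elementary properties of $N_{t+1}-N_t$.

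\emph{Upper bound.} Every coordinate increment $\hat\ell_{t,i}=\ell_{t,i}A_{t,i}/p_{t,i}$ is nonnegative, because $\ell_{t,i}\in[0,1]$ and $p_{t,i}>0$. The minimum over $i\neq i_*$ of coordinatewise nondecreasing quantities is again nondecreasing, so $N_{t+1}\ge N_t$. Substituting into the identity gives $D_{t+1}-D_t\le\hat\ell_{t,i_*}$.

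\emph{Lower bound.} Since $\hat\ell_{t,i_*}\ge0$, the identity also gives $D_{t+1}-D_t\ge-(N_{t+1}-N_t)$. This inequality is an equality exactly when $\hat\ell_{t,i_*}=0$, and that holds whenever $I_t\neq i_*$, because then $A_{t,i_*}=0$.

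There is no real obstacle. The only points to state explicitly are the nonnegativity of the importance-weighted estimates and the monotonicity of $N_t$ (a minimum of nondecreasing sequences). No probabilistic argument is needed; the statement holds pathwise.
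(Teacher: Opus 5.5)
Your proposal is correct and follows essentially the same route as the paper: monotonicity of $N_t$ from the coordinatewise nonnegativity of $\hat\ell_t$ gives the upper bound, and $\hat\ell_{t,i_*}\ge 0$ gives the lower bound. Your exact identity $D_{t+1}-D_t=\hat\ell_{t,i_*}-(N_{t+1}-N_t)$ also settles the equality case when $I_t\neq i_*$, which the paper states but does not spell out.
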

\begin{proof}
    By the definition of $N_t$, we have
    \[
    N_{t+1} := \min_{i\in\mathcal I,\ i\neq i_*}\hat L_{t+1,i}\ge  \min_{i\in\mathcal I,\ i\neq i_*}\hat L_{t,i}=N_t,
    \]
    and hence,
    \[
    D_{t+1}=\hat{L}_{t+1,i_*}-N_{t+1}=\hat{\ell}_{t,i_*}+ \hat{L}_{t,i_*}-N_{t+1}\leq \hat{\ell}_{t,i_*}+ \hat{L}_{t,i_*}-N_{t}=\hat{\ell}_{t,i_*}+D_t.
    \]
    Similarly,
    \[
    D_{t+1}=\hat{\ell}_{t,i_*}+ \hat{L}_{t,i_*}-N_{t+1}\ge \hat{L}_{t,i_*}-N_{t+1}=D_t+N_t-N_{t+1}.
    \]
\end{proof} 

\begin{Lem}
\label{lem:block_deterministic_control}
Recall that
\[
N_t := \min_{i\in\mathcal I,\ i\neq i_*}\hat L_{t,i},
\qquad
D_t := \hat L_{t,i_*}-N_t.
\]

Let $L\ge 1$. Suppose that $4dL\le \sqrt{dt}/(4\alpha)$, then the following statements hold.

\begin{enumerate}[(1)]

\item If $D_t \ge -\sqrt{dt}/\alpha$, then for every integer $m=t,\dots,t+L-1$,
\[
N_{m+1}-N_m \le 4d,
\]
and
\[
 D_{m+1}-D_m\ge -4d.
\]

\item If $D_t\leq 0$, then for every integer $m=t,\dots,t+L-1$,
\[
 D_{m+1}-D_m\leq 4d.
\]
\end{enumerate}
\end{Lem}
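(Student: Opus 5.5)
The plan is to control single-step increments of $N$ and $D$ through lower bounds on the sampling probabilities of the relevant arms, combined with Lemma~\ref{simd}. Every estimate $\hat\ell_{m,i}$ is either $0$ or $\ell_{m,i}/p_{m,i}\le 1/p_{m,i}$. So it suffices to show that, along the block $m=t,\dots,t+L-1$, the arms whose estimates can move $N$ or $D$ have $p_{m,i}\ge 1/(4d)$. By Lemma~\ref{4-2}, $p_{m,i}\ge 4(\eta_m\underline{\hat L}_{m,i}+2\sqrt d)^{-2}$, so $p_{m,i}\ge 1/(4d)$ holds as soon as $\eta_m\underline{\hat L}_{m,i}\le 2\sqrt d$, i.e.\ $\underline{\hat L}_{m,i}\le 2\sqrt{dm}/\alpha$.

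\textbf{Item (1).} Let $j$ be a non-optimal arm attaining $N_m$. I will show $\underline{\hat L}_{m,j}=N_m-M_m\le (D_m)^-$. If $D_m\ge0$, the minimum $M_m$ is $N_m$, so $\underline{\hat L}_{m,j}=0$. Otherwise $M_m=\hat L_{m,i_*}$ and $\underline{\hat L}_{m,j}=-D_m$.

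Next I argue by induction over the block that $D_m\ge -\sqrt{dt}/\alpha-4d(m-t)\ge -\tfrac54\sqrt{dt}/\alpha$, using $4dL\le\sqrt{dt}/(4\alpha)$. This gives $\underline{\hat L}_{m,j}\le \tfrac54\sqrt{dm}/\alpha\le 2\sqrt{dm}/\alpha$, hence $p_{m,j}\ge1/(4d)$.

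Now $N_{m+1}\le \hat L_{m+1,j}=N_m+\hat\ell_{m,j}\le N_m+4d$. Lemma~\ref{simd} then gives $D_{m+1}-D_m\ge -(N_{m+1}-N_m)\ge -4d$, which closes the induction step.

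\textbf{Item (2).} Lemma~\ref{simd} gives $D_{m+1}-D_m\le\hat\ell_{m,i_*}\le 1/p_{m,i_*}$. By Lemma~\ref{4-2}, $p_{m,i_*}\ge(\alpha D_m^+/(2\sqrt m)+\sqrt d)^{-2}$, so whenever $D_m\le 0$ we get $p_{m,i_*}\ge 1/d$ and an increment of at most $d\le 4d$.

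The obstacle is that $D_m$ may become positive during the block. I will handle this by induction, showing $D_m\le 4d(m-t)\le \sqrt{dt}/(4\alpha)$ throughout the block. For such $D_m$,
\[
\alpha D_m^+/(2\sqrt m)\le \sqrt d/8,
\]
so $p_{m,i_*}\ge (9\sqrt d/8)^{-2}\ge 1/(4d)$, and the increment is at most $4d$. This propagates the bound $D_{m+1}\le 4d(m+1-t)$.

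\textbf{Main difficulty.} The delicate step in both items is keeping the induction hypothesis uniform across the whole block. Here the assumption $4dL\le\sqrt{dt}/(4\alpha)$ is exactly what guarantees that the accumulated drift $4dL$ never pushes $\underline{\hat L}$ beyond the scale $\sqrt{dm}/\alpha$ at which $p_{m,\cdot}\gtrsim 1/d$ fails.
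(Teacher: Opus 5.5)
Your proposal is correct and follows the paper's proof essentially step for step. In item (1) you run the same induction $D_{t+j}\ge -\sqrt{dt}/\alpha-4dj$ to get $\underline{\hat L}_{m,i_0}=(-D_m)^+\le 2\sqrt{dm}/\alpha$, hence $\hat\ell_{m,i_0}\le 4d$ via Lemma~\ref{4-2}, and then apply Lemma~\ref{simd}; in item (2) you use the same induction $D_{t+j}\le D_t+4dj\le\sqrt{dt}/(4\alpha)$ to bound $1/p_{m,i_*}\le 4d$, with only an immaterial sharpening of the constant to $(9\sqrt d/8)^2$.
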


\begin{proof}
First, we prove item (1) by inductively showing that
\begin{equation}\label{pre}
    D_{t+j} \ge -\frac{\sqrt{dt}}{\alpha}-4dj
\end{equation}
over the block
$j=0,1,\dots,L$. Meanwhile, item (1) will also be proved in this induction. 

For $j=0$, it follows directly from the assumption $D_t\ge -\sqrt{dt}/\alpha$. Now assume that there exists an integer $j$ with $0\le j\le L-1$ such that
\(
 -D_{t+j} \le \frac{\sqrt{dt}}{\alpha}+4dj
\). Let \(
m:=t+j.
\)
We first prove that
\begin{equation}\label{nnk}
    N_{m+1}-N_m \le 4d.
\end{equation}

By the definition of $N_t$, there exists $i_0\ne i_*$ such that $N_m=\hat{L}_{m,i_0}$, then clearly,
\[
N_{m+1}-N_m\leq \hat{L}_{m+1,i_0}-N_m=\hat{L}_{m+1,i_0}-\hat{L}_{m,i_0}=\hat{\ell}_{m,i_0}\leq\frac{1}{p_{m,i_0}},
\]
where we used the definition of $\hat{\ell}_t$ in \eq{eq:loss-vector-est}. By Lemma \ref{4-2}, to obtain \eq{nnk}, it suffices to show that
\[
    \underline{\hat{L}}_{m,i_0}\leq\frac{2\sqrt{m}}{\alpha}\left(\sqrt{4d}-\sqrt{d}\right)=\frac{2\sqrt{dm}}{\alpha}.
\]

We distinguish two cases.

\medskip
\noindent
\textbf{Case 1: $D_m\ge 0$.}
In this case $\hat L_{m,i_*}\ge N_m$. Then 
\[
\hat L_{m,i_0}=N_m=M_m,
\]
where $M_m=\min_{i'\in\mathcal I}\hat L_{m,i'}$. Hence $\underline{\hat{L}}_{m,i_0}=0$.

\medskip
\noindent
\textbf{Case 2: $D_m<0$.}
In this case $\hat L_{m,i_*}<N_m$, hence $\hat L_{m,i_*}=M_m$ and
\[
\underline{\hat{L}}_{m,i_0}=\hat{L}_{m,i_0}-M_m=N_m-\hat L_{m,i_*}=-D_m.
\]
By the induction hypothesis,
\begin{equation}\label{ajkl}
    -D_m \le \frac{\sqrt{dt}}{\alpha}+4dj \le \frac{\sqrt{dt}}{\alpha}+4dL.
\end{equation}
Since $4dL\le \sqrt{dt}/(4\alpha)$, we have
\begin{equation}\label{ajm}
    -D_m \le \frac{\sqrt{dt}}{\alpha}+\frac{\sqrt{dt}}{4\alpha}=\frac{5\sqrt{dt}}{4\alpha}<\frac{2\sqrt{dm}}{\alpha},
\end{equation}
because $m=t+j\ge t$. 

Combining the two cases proves \eq{nnk}. 

Then, by Lemma \ref{simd} and \eq{nnk},
\begin{equation}\label{ddk}
D_{m+1}-D_m\ge-(N_{m+1}-N_m)\ge-4d.
\end{equation}
Therefore,
\[
-D_{m+1} \le -D_m + 4d\le \frac{\sqrt{dt}}{\alpha}+4d(j+1).
\]
This closes the induction and proves \eq{pre}. Also, one should note that by \eq{nnk} and \eq{ddk}, item (1) has already been proved in this induction.

Then we prove item 2. We similarly show by induction on
$j\in\{0,1,\dots,L\}$ that
\[
D_{t+j}\le D_t+4dj.
\]
The case $j=0$ is trivial. Now suppose this holds for some $j<L$, and let
\(
m:=t+j.
\)
Then since $D_t\leq 0$, similar to \eq{ajkl} and \eq{ajm}, we have
\[
D_m \le D_t+4dj \le 4dj \le 4dL\leq \frac{\sqrt{dt}}{4\alpha}<\frac{2\sqrt{dm}}{\alpha}.
\]
Hence, by Lemma \ref{4-2}, we have
\[
\hat{\ell}_{m,i_*}\leq \frac{1}{p_{m,i_*}}\leq\left(\sqrt{d}+\frac{\alpha D_m^+}{2\sqrt{m}}\right)^2\leq4d.
\]
Then by Lemma \ref{simd},
\[
D_{m+1}-D_m\leq \hat{\ell}_{m,i_*}\leq 4d.
\]
Hence, $D_{m+1}\leq D_t+4d(j+1)$, which closes the induction and simultaneously proves item (2).
\end{proof}

\begin{Lem}\label{l=1}
    Recall that
\[
N_t := \min_{i\in\mathcal I,\ i\neq i_*}\hat L_{t,i},
\qquad
D_t := \hat L_{t,i_*}-N_t.
\]
     If $t\ge 144d^2$ and $D_t>0$, then for any $t\leq t'\leq \left\lfloor\frac{D_t}{4d}\right\rfloor$, we have
    \[
    D_{t'}\ge D_t-4d(t'-t)\ge 0.
    \]
\end{Lem}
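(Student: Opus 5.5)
The argument will be an induction on $t'$ that runs over the window $t\le t'\le \lfloor D_t/(4d)\rfloor$. The claim to carry through the induction is
\[
D_{t'}\ge D_t-4d(t'-t)\quad\text{and}\quad D_{t'}\ge 0 .
\]
The second part follows from the first. Indeed, $t'\le D_t/(4d)$ gives $4d(t'-t)\le 4dt'\le D_t$, hence $D_t-4d(t'-t)\ge 0$.

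The base case $t'=t$ is trivial. For the inductive step, take $m$ with $t\le m<\lfloor D_t/(4d)\rfloor$ and assume the claim holds at $m$, so $D_m\ge 0$. By Lemma~\ref{simd}, $D_{m+1}-D_m\ge -(N_{m+1}-N_m)$, so it is enough to show $N_{m+1}-N_m\le 4d$. Let $i_0\ne i_*$ attain $N_m$. Then
\[
N_{m+1}-N_m\le \hat\ell_{m,i_0}\le 1/p_{m,i_0}.
\]
Since $D_m\ge 0$, we are in Case~1 of the proof of Lemma~\ref{lem:block_deterministic_control}. There $\hat L_{m,i_0}=N_m=M_m$, so $\underline{\hat L}_{m,i_0}=0$. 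Lemma~\ref{4-2} then gives $p_{m,i_0}\ge 4(2\sqrt d)^{-2}=1/d$, and therefore $N_{m+1}-N_m\le d\le 4d$. This yields $D_{m+1}\ge D_m-4d\ge D_t-4d(m+1-t)$, which closes the induction. The same bound shows $D_{m+1}\ge 0$, because $m+1\le \lfloor D_t/(4d)\rfloor$.

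This case is actually simpler than Lemma~\ref{lem:block_deterministic_control}: we never leave the regime $D_m\ge 0$, so neither the hypothesis $4dL\le\sqrt{dt}/(4\alpha)$ nor any control of negative excursions is needed. The assumption $t\ge 144d^2$ does not appear to be used by this route. I would keep it only for consistency with the uses elsewhere, and I would double-check that no hidden step needs it, for example whether the window could be empty. An empty window makes the statement vacuous, which is harmless.

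The only delicate point is keeping the induction ordered correctly. Nonnegativity of $D_m$ must be established before Lemma~\ref{4-2} is used with $\underline{\hat L}_{m,i_0}=0$. This is why both inequalities go into the inductive claim, rather than proving the linear lower bound first and nonnegativity afterwards.
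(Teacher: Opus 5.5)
Your proof is correct and uses the same induction as the paper: Lemma~\ref{simd} combined with a one-step bound $N_{m+1}-N_m\le 4d$. The only difference is that you inline the $D_m\ge 0$ case of Lemma~\ref{lem:block_deterministic_control} instead of invoking that lemma with $L=1$; since the paper uses $t\ge 144d^2$ only to check that lemma's hypothesis $4d\le\sqrt{dk}/(4\alpha)$, you are right that the assumption is superfluous here, and your route even gives the sharper bound $N_{m+1}-N_m\le d$.
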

\begin{proof}
    We prove this result by induction. It is clearly true when $t'=t$ by the definition. Assume that it is true for $k$ such that $t\leq k<\left\lfloor D_t/(4d)\right\rfloor$, then
    \[
    D_{k}\ge D_t-4d(k-t)\ge 0.
    \]
Since $k\ge t\ge 144d^2$, $d\ge2$ and $0<\alpha<1$ imply
\[
4d\le\frac{\sqrt{dk}}{4\alpha}.
\]
By Lemma \ref{lem:block_deterministic_control} (1) with $L=1$ and $D_k\ge0$, we have
\[
N_{k+1}-N_k\leq 4d.
\]
Thus, by Lemma \ref{simd},
\[
\begin{aligned}
    D_{k+1}\ge D_k-(N_{k+1}-N_k)\ge D_k-4d\ge D_t-4d(k+1-t).
\end{aligned}
\]
This completes the induction.
\end{proof}

\begin{Lem}[Lemma 4.1 in \cite{zhan2026lastiterateanalysesftrl12tsallis}]\label{dt}
    For Algorithm \ref{alg: FTRL}, if $0<\alpha<1$, then there exists $C_{\alpha}>0$ depending only on $\alpha$ such that for any $t\ge 1$
    \[
    \E\left[\underline{\hat{L}}_{t,i_*}^2\right]\leq C_{\alpha} dt.
    \]
\end{Lem}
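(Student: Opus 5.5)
The plan is to run a second-moment recursion for $Z_t:=\underline{\hat L}_{t,i_*}=D_t^+$. Here $\underline{\hat L}_{t,i_*}=D_t^+$ is the identity from Lemma~\ref{4-2}. The target is a bound of the form
\[
\E[Z_{t+1}^2]\le\Bigl(1+\frac{\beta}{t}\Bigr)\E[Z_t^2]+C_\alpha d,\qquad \beta<1 .
\]
Such a recursion gives $\E[Z_t^2]\lesssim t^{\beta}\sum_{s\le t}C_\alpha d\,s^{-\beta}\le \frac{C_\alpha}{1-\beta}\,dt$, which is the claim. The condition $\beta<1$ is exactly where $\alpha<1$ will enter.

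\textbf{Step 1 (one-step expansion).} By Lemma~\ref{simd}, $D_{t+1}\le D_t+\hat\ell_{t,i_*}-(N_{t+1}-N_t)$, and $x\mapsto x^+$ is monotone. This gives
\[
Z_{t+1}^2\le Z_t^2+2Z_t\bigl(\hat\ell_{t,i_*}-(N_{t+1}-N_t)\bigr)+\hat\ell_{t,i_*}^2
\]
on $\{D_t\ge0\}$. On $\{D_t<0\}$ we have $Z_{t+1}\le \hat\ell_{t,i_*}$ when $D_t$ is only mildly negative, and $Z_{t+1}=0$ otherwise.

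\textbf{Step 2 (second-moment term).} We have $\E_{t-1}[\hat\ell_{t,i_*}^2]\le 1/p_{t,i_*}$. By Lemma~\ref{4-2},
\[
\frac1{p_{t,i_*}}\le\Bigl(\sqrt d+\frac{\alpha Z_t}{2\sqrt t}\Bigr)^2\le(1+\epsilon)\frac{\alpha^2Z_t^2}{4t}+C_\epsilon d .
\]
This supplies the multiplicative factor with $\beta=(1+\epsilon)\alpha^2/4<1$. On $\{D_t<0\}$ the same lemma gives $p_{t,i_*}\ge 1/(4d)$, so that case only contributes $O(d)$, or $O(d^2)$, which is absorbed after a crude adjustment.

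\textbf{Step 3 (cross term; main obstacle).} The remaining term is $2Z_t\,\E_{t-1}[\hat\ell_{t,i_*}-(N_{t+1}-N_t)]=2Z_t\bigl(\mu_{i_*}-\E_{t-1}[N_{t+1}-N_t]\bigr)$. If it is bounded naively by $2Z_t$, Young's inequality leaves an additive term of order $t$, which destroys the linear rate. So I must use that $N$ grows at least as fast as $\hat L_{t,i_*}$ in mean while $D_t\ge0$. In that regime the arm achieving $N_t$ is the global leader, with probability at least $1/d$, and the competitors all have mean at least $\mu_{i_*}+\Delta$.

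Because of ties and the min structure, this cannot be done round by round. I would instead pass to blocks of length $L$ and invoke Lemma~\ref{ntgrow}, which requires $D_t\ge -\sqrt{dt}/\alpha$ and so certainly holds on $\{D_t\ge0\}$. It gives
\[
\E[N_{t+L}-N_t\mid\mathscr F_{t-1}]\ge(\mu_{i_*}+\Delta)L-C\sqrt{dL\log d}-C\log d .
\]
Choosing $L$ large makes the cross term at most $-Z_t\Delta L\le0$ over the block. Within the block, $Z$ changes by at most $O(dL)$ additively thanks to Lemma~\ref{lem:block_deterministic_control}, and Steps 1--2 are iterated in a similar fashion to control the multiplicative part. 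The resulting block recursion $a_{t+L}\le(1+\beta L/t)\,a_t+C dL$ yields the $O(dt)$ bound, with the constant adjusted for $t$ below a threshold.

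The delicate point is that $L$ and the threshold then depend on $\Delta$ and $\log d$. To obtain a constant depending only on $\alpha$, I would try to replace the block argument by the simpler single-step observation: on $\{D_t>0\}$, $\E_{t-1}[N_{t+1}-N_t]\ge \mu_{i_*}\,p_{t,i_t^*}/p_{t,i_t^*}$ for the unique minimizer $i_t^*$. Ties would be handled by noting that ties among competitors only occur on a measure-zero or bounded-effect set. This is the step I expect to require the most care.
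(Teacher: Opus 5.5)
The paper does not prove this lemma; it imports it from \cite{zhan2026lastiterateanalysesftrl12tsallis}. Your argument therefore has to stand on its own. Steps~1--2 are essentially fine, but there is a genuine gap at Step~3, which you correctly flag as the crux, and neither of your two routes closes it for the statement as written.

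The block route through Lemma~\ref{ntgrow} makes the cross term nonpositive only once $\Delta L\gtrsim\sqrt{dL\log d}$, i.e.\ $L\gtrsim d\log(d)/\Delta^2$. That lemma is also only available for $t\ge 256\alpha^2dL^2$. Below this threshold you have no control of the cross term: the a~priori recursion of Lemma~\ref{stre} lets $\E[s_t]$ grow like $t^{1+\alpha^2/4}$. So at best you get $\E[Z_t^2]\le C(\alpha,\Delta,d)\,t$, with a constant that blows up as $\Delta\to0$, which is strictly weaker than the claim. Separately, Lemma~\ref{lem:block_deterministic_control} gives no upward control of $D$ when $D_t>0$. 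There $\hat\ell_{t,i_*}$ can be of size $1/p_{t,i_*}\approx\alpha^2Z_t^2/(4t)$, so ``$Z$ changes by $O(dL)$ within a block'' is false.

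Your single-step fallback is incorrect. Let $i$ be the unique minimiser among the suboptimal arms and $N_t^{(2)}$ the second-smallest suboptimal estimate. Then $N_{t+1}-N_t=\1_{\{I_t=i\}}\min\{\hat\ell_{t,i},N_t^{(2)}-N_t\}$, so $\E_{t-1}[N_{t+1}-N_t]\le p_{t,i}(N_t^{(2)}-N_t)$, which is arbitrarily small under near-ties. Exact ties are not null events either. Take $d=3$ and Bernoulli losses. On $\{I_1=i_*,\ \ell_{1,i_*}=1\}$ one has $D_2=3>0$ while both suboptimal arms sit at $0$, so $N_3=N_2$ surely and the conditional cross term is $+2Z_2\mu_{i_*}$. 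This min effect is why tracking $N_t$ directly leads to $\Delta$-dependent constants.

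The missing idea is to bypass $N_t$ and let the FTRL regret inequality absorb the minimum. Since $\langle p_s,\hat\ell_s\rangle=\ell_{s,I_s}$, write
\[
Z_t=\underline{\hat L}_{t,i_*}=\Bigl(\hat L_{t,i_*}-\sum_{s<t}\ell_{s,I_s}\Bigr)+\Bigl(\sum_{s<t}\langle p_s,\hat\ell_s\rangle-M_t\Bigr).
\]
The second bracket is the regret on the estimated losses against the empirical leader. It is bounded deterministically, simultaneously for all comparators, by the penalty plus the standard $1/2$-Tsallis stability term for nonnegative losses \cite{zimmert2021tsallis}:
\[
X_t:=\frac{4\sqrt{dt}}{\alpha}+C\sum_{s<t}\eta_s\ell_{s,I_s}^2p_{s,I_s}^{-1/2},\qquad \E[X_t^2]\le \frac{Cdt}{\alpha^2}.
\]
The first bracket equals $\mathcal M_t-\sum_{s<t}(\langle p_s,\mu\rangle-\mu_{i_*})\le\mathcal M_t$ for a martingale $\mathcal M_t$. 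By Lemma~\ref{4-2}, its increments have conditional variance at most $1+1/p_{s,i_*}\le(1+\epsilon)\alpha^2Z_s^2/(4s)+C_\epsilon d$. Nonnegativity of the pseudo-regret is the only place the means enter, which is why no $\Delta$ appears. From $Z_t\le\mathcal M_t^++X_t$ you get
\[
\E[Z_t^2]\le(1+\epsilon)^2\frac{\alpha^2}{4}\sum_{s<t}\frac{\E[Z_s^2]}{s}+C_\epsilon\Bigl(1+\frac{1}{\alpha^2}\Bigr)dt .
\]
The induction hypothesis $\E[Z_s^2]\le Ks$ closes because $\alpha^2/4<1$. This gives $K=C_\alpha d$ with $C_\alpha$ depending only on $\alpha$.
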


\begin{Lem}\label{lem:pfposi}
    For the event $F$ defined by \eq{F} in Lemma \ref{start}, we have $\P(F)>0$.
\end{Lem}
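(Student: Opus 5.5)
The plan is to exhibit the event $F=E(T'-1)\cap\{I_{T'},\ldots,I_{T''}\ne i_*\}$ as the intersection of finitely many ``one-step'' events and to lower bound its probability by a product of strictly positive conditional probabilities. The horizon $T''$ is deterministic, since $T'$ comes from Lemma~\ref{lem:zm-superlinear} and $T''=T'+\lceil w_{T'}/d\rceil$ uses the deterministic sequence $w$. So $F$ involves only finitely many rounds, and a chain-rule argument over $s=1,\ldots,T''$ suffices.

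\textbf{Step 1: the first $T'-1$ rounds.} For $s\le T'-1$, I write
\[
\P(E(s)\mid E(s-1))=\E\!\left[p_{s,i_*}\,\P(\ell_{s,i_*}\ge\mu_{i_*}/2)\,\middle|\,E(s-1)\right].
\]
This factorization uses that $\ell_s$ is independent of $\mathscr F_{s-1}$ and of $I_s$ given $p_s$.
\begin{itemize}
\item By Lemma~\ref{lem:half-mean-lower-tail}, $\P(\ell_{s,i_*}\ge\mu_{i_*}/2)\ge\mu_{i_*}/2>0$, since $\mu_{i_*}>0$.
\item For the sampling probability I use Lemma~\ref{4-2}: $p_{s,i_*}\ge(\sqrt d+\alpha D_s^+/(2\sqrt s))^{-2}$.
\item On $E(s-1)$ we have $D_s=\hat L_{s,i_*}\le w_s$, with $w_s$ deterministic and finite. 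Hence $p_{s,i_*}\ge(\sqrt d+\alpha w_s/(2\sqrt s))^{-2}>0$ is a deterministic positive lower bound.
\end{itemize}
Multiplying these bounds gives $\P(E(T'-1))\ge\prod_{s<T'}\frac{\mu_{i_*}}{2}(\sqrt d+\alpha w_s/(2\sqrt s))^{-2}>0$.

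\textbf{Step 2: the rounds $T',\ldots,T''$.} Here I need $\P(I_s\ne i_*\mid\mathscr F_{s-1})=1-p_{s,i_*}$ to be bounded away from zero. Since $\hat L$ has no negative entries, for $d\ge2$ there is another arm $j\ne i_*$. By Lemma~\ref{4-2}, $1-p_{s,i_*}\ge p_{s,j}\ge 4(\eta_s\underline{\hat L}_{s,j}+2\sqrt d)^{-2}$, so it suffices to bound $\underline{\hat L}_{s,j}$ deterministically on the event.
\begin{itemize}
\item On $E(T'-1)$, the coordinates $\hat L_{T',j}$ are all zero for $j\ne i_*$.
\item On $F$, between $T'$ and $s$ only non-optimal arms are pulled, and $\hat L_{\cdot,i_*}$ stays at most $w_{T'}$.
\item A cleaner route is to take $j$ to be the arm attaining $N_s$. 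If $D_s\ge0$, then $\underline{\hat L}_{s,j}=0$, which gives $p_{s,j}\ge 1/d$. If $D_s<0$, then $i_*$ is the argmin, and I would instead need $\underline{\hat L}_{s,j}=-D_s$ to be bounded.
\end{itemize}

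The main obstacle is this case $D_s<0$, where $N_s$ could in principle blow up through importance weighting. To avoid it, I would strengthen $F$ to a sub-event that is still of positive probability. On this sub-event I specify the pulled arm each round, always pulling the arm attaining $N_s$, and I restrict losses to lie below a fixed level where needed. Lemma~\ref{l=1} and Lemma~\ref{lem:block_deterministic_control}(1) then keep $D_s\ge D_{T'}-4d(s-T')$, and this stays nonnegative up to $T''$ if $T''$ is suitably short. Otherwise I simply bound $N_s$ deterministically by a crude recursion $N_{s+1}\le N_s+(\sqrt d+\alpha(N_s+w_{T'})/2)^2$, which yields finite deterministic bounds since $T''-T'$ is finite. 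In either case each factor $1-p_{s,i_*}$ has a deterministic positive lower bound on the event. Multiplying over the finitely many rounds $T',\ldots,T''$ gives $\P(F)>0$.
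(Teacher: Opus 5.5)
Your argument is correct, but it takes a different route from the paper. The paper never bounds the state. It reads off from \eq{pti} and $d\ge2$ that $0<p_{s,i}<1$ almost surely for every $s$ and $i$. It then runs the same two inductions as you, writing $\P(A_s)\ge\frac{\mu_{i_*}}{2}\E[\1_{A_{s-1}}p_{s,i_*}]$ in the first phase and $\P(A_s)=\E[\1_{A_{s-1}}(1-p_{s,i_*})]$ in the second. Positivity follows from Lemma~\ref{lem:positive-a.s.-positive-mean}: a nonnegative variable that is strictly positive on a set of positive probability has positive mean. You instead produce deterministic positive lower bounds for every conditional factor, using the envelope $w_s$ in the first phase and a bound on $N_s$ in the second. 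This costs more bookkeeping, but it yields an explicit (if tiny) lower bound on $\P(F)$, which the paper's qualitative argument does not.

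In your Step~2, the case split on the sign of $D_s$ and the idea of shrinking $F$ are unnecessary. The branch ``this stays nonnegative up to $T''$ if $T''$ is suitably short'' is in fact never available. Since $0<D_{T'}\le w_{T'}$, the fixed horizon satisfies $T''\ge T'+w_{T'}/d> D_{T'}/(4d)$, so Lemma~\ref{l=1} cannot cover the whole window $[T',T'']$. Your crude recursion alone suffices, and it holds on every sample path, not only on $F$. Take $j_0$ attaining $N_s$; then $\underline{\hat L}_{s,j_0}=N_s-M_s\le N_s$ and $\eta_s\le\alpha$. Lemma~\ref{4-2} then gives $N_{s+1}\le N_s+1/p_{s,j_0}\le N_s+(\sqrt d+\alpha N_s/2)^2$. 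Hence $N_s$ is dominated by a deterministic sequence, and $1-p_{s,i_*}\ge p_{s,j_0}$ is bounded below by a deterministic positive constant at every round. With that simplification, your proof is complete.
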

\begin{proof}
    For any $s\ge 1$, by Lemma \ref{lem:half-mean-lower-tail}, applied to
\(\ell_{s,i_*}\in[0,1]\),
\begin{equation}\label{eq:ellsi}
    \mathbb P\!\left(\ell_{s,i_*}\ge \frac{\mu_{i_*}}{2}\right)
    \ge \frac{\mu_{i_*}}{2}>0,
\end{equation}
Moreover, by \eq{pti} and \(d\ge2\), for every \(s\ge1\) and every arm \(i\),
\begin{equation}\label{eq:0psi1}
    0<p_{s,i}<1 \qquad \text{a.s.}
\end{equation}

We first show that \(\mathbb P(E(T'-1))>0\). Set \(A_0:=\Omega\) and, for \(1\le s\le T'-1\), define
\[
    A_s
    :=A_{s-1}\cap\left\{I_s=i_*,\;\ell_{s,i_*}\ge \frac{\mu_{i_*}}{2}\right\}.
\]
Then \(A_{T'-1}=E(T'-1)\). We prove by induction that \(\mathbb P(A_s)>0\) for all \(0\le s\le T'-1\).
The case \(s=0\) is trivial. Suppose that \(\mathbb P(A_{s-1})>0\).
Since \(A_{s-1}\in\mathscr F_{s-1}\) and \(p_{s,i_*}\) is \(\mathscr F_{s-1}\)-measurable, by \eq{eq:ellsi}, the tower property gives
\begin{align*}
    \mathbb P(A_s)
    &=
    \mathbb E\!\left[
        \1_{A_{s-1}}
        \mathbb P\!\left(
            I_s=i_*,
            \ \ell_{s,i_*}\ge \frac{\mu_{i_*}}{2}
            \mid \mathscr F_{s-1}
        \right)
    \right] \\
    &\ge
    \frac{\mu_{i_*}}{2}
    \mathbb E\!\left[
        \1_{A_{s-1}}p_{s,i_*}
    \right].
\end{align*}

Now \(\1_{A_{s-1}}p_{s,i_*}\) is a nonnegative random variable. Moreover, since
\(\mathbb P(A_{s-1})>0\) and \(p_{s,i_*}>0\) almost surely by \eq{eq:0psi1}, we have
\[
    \mathbb P\!\left(
        \1_{A_{s-1}}p_{s,i_*}>0
    \right)
    =
    \mathbb P(A_{s-1})>0.
\]
Hence, by Lemma \ref{lem:positive-a.s.-positive-mean},
\[
    \mathbb E\!\left[
        \1_{A_{s-1}}p_{s,i_*}
    \right]>0.
\]
Therefore,
\[
    \mathbb P(A_s)>0.
\]
This completes the induction and proves
\[
    \mathbb P(E(T'-1))=\mathbb P(A_{T'-1})>0.
\]

Next retain \(A_{T'-1}:=E(T'-1)\), and for \(T'\le s\le T''\), define
\[
    A_s:=A_{s-1}\cap\{I_s\ne i_*\}.
\]
We again prove by induction that
\[
    \mathbb P(A_s)>0,
    \qquad T'-1\le s\le T''.
\]
The initial step follows from \(\mathbb P(A_{T'-1})=\mathbb P(E(T'-1))>0\).
Suppose that \(\mathbb P(A_{s-1})>0\) for some \(T'\le s\le T''\).
By the tower property,
\[
    \mathbb P(A_s)
    =
    \mathbb E\!\left[
        \1_{A_{s-1}}
        \mathbb P(I_s\ne i_*\mid \mathscr F_{s-1})
    \right]
    =
    \mathbb E\!\left[
        \1_{A_{s-1}}(1-p_{s,i_*})
    \right].
\]
The random variable \(\1_{A_{s-1}}(1-p_{s,i_*})\) is nonnegative. Moreover, since
\(\mathbb P(A_{s-1})>0\) and \(1-p_{s,i_*}>0\) almost surely by \eq{eq:0psi1}, we have
\[
    \mathbb P\!\left(
        \1_{A_{s-1}}(1-p_{s,i_*})>0
    \right)
    =
    \mathbb P(A_{s-1})>0.
\]
Hence, similarly, by Lemma \ref{lem:positive-a.s.-positive-mean},
\[
    \mathbb P(A_s)>0.
\]
This completes the second induction. Consequently,
\[
   \mathbb P(F)=\mathbb P(A_{T''})>0.
\]
\end{proof}

\section{\texorpdfstring{Growth of $N_t$}{nt}}\label{growthofnt}
In this section, we will present the proof for Lemma \ref{ntgrow} and also introduce some other results relating to the growth of $N_t=\min_{i\in\mathcal I,\ i\neq i_*}\hat L_{t,i}$.
The whole section is based on the event \(
\{D_t\ge -\sqrt{dt}/\alpha\}
\)

To prove Lemma~\ref{ntgrow}, we need to derive a lower bound for $N_{t+L} - N_{t}$.
Since $N_{t}$ is the minimum of $\hat L_{t,i}$ over $\mathcal{I} \setminus \{  i_*\}$, it is difficult to deal with $N_{t}$ directly.
However, if we can find a component $\hat L_{t,i}$ of $\hat L_t$ close to $N_t$, then $\hat L_{t+L,i} - \hat L_{t,i}$ could serve as a bridge to obtain the lower bound for $N_{t+L} - N_{t}$.

To that end, we
fix $t$ and $L\ge 1$, and define 
\begin{equation}\label{mat}
    \mathcal{A}_t:=\left\{i\ne i_*\,:\,\hat{L}_{t,i}-N_t\leq 4dL\right\},
\end{equation}
which contains those arms in $\mathcal{I}\setminus \{i_*\}$ with estimated loss relatively close to $N_t$. The probabilities of choosing them are also larger and their IW estimators also have smaller variance. For each $i\in \mathcal{A}_t$, we define
\[
W_i:=\hat{L}_{t+L,i}-\hat{L}_{t,i}-L\mu_i.
% =\sum_{r=0}^{L-1}\left(\left(\hat{L}_{t+r+1,i}-\hat{L}_{t+r,i}\right)-\mu_i\right).
\]
Here we subtract $ L\mu_i$ from $\hat{L}_{t+L,i}-\hat{L}_{t,i}$ such that each $W_i$ has zero mean (recall that $\hat{L}_t = \sum_{s=1}^{t-1} \hat{\ell}_s$ with $\hat{\ell}_s$ defined in \eq{eq:loss-vector-est}).

The first lemma lower-bounds $N_{t+L}-N_t$ by an expression involving the smallest $W_i$ over $i\in\mathcal{A}_t$.
\begin{Lem}\label{ntlowerpre}
    Suppose that $4dL\le\sqrt{dt}/(4\alpha)$ and $D_t\ge-\sqrt{dt}/\alpha$. It holds that
    \[
    N_{t+L}-N_t \geq\left(\mu_{i_*}+\Delta\right)L+\min_{i\in\mathcal{A}_t}W_i.
    \]
\end{Lem}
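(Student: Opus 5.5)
Let $j\ne i_*$ attain $N_{t+L}$, i.e.\ $N_{t+L}=\hat L_{t+L,j}$. The argument splits according to whether $j\in\mathcal A_t$.

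\textbf{Case $j\in\mathcal A_t$.} By the definition of $W_j$,
\[
N_{t+L}=\hat L_{t,j}+L\mu_j+W_j\ge N_t+L\mu_j+W_j .
\]
Since $j\ne i_*$, we have $\mu_j=\mu_{i_*}+\Delta_j\ge \mu_{i_*}+\Delta$. Also $W_j\ge\min_{i\in\mathcal A_t}W_i$. Together these give the claim.

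\textbf{Case $j\notin\mathcal A_t$.} Here $\hat L_{t,j}>N_t+4dL$, and since $\hat L_{\cdot,j}$ is nondecreasing, $N_{t+L}=\hat L_{t+L,j}\ge \hat L_{t,j}>N_t+4dL$. We need to show this dominates $(\mu_{i_*}+\Delta)L+\min_{i\in\mathcal A_t}W_i$. Pick $i_0$ attaining $N_t$, so $i_0\in\mathcal A_t$ and $\hat L_{t,i_0}=N_t$. The key intermediate bound is
\[
\hat L_{t+L,i_0}-N_t\le 4dL,
\]
which I would obtain by repeating the step in the proof of Lemma~\ref{lem:block_deterministic_control}~(1), applied to the fixed arm $i_0$ rather than the argmin. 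That proof bounds $1/p_{m,i}$ by $4d$ whenever $\underline{\hat L}_{m,i}\le 2\sqrt{dm}/\alpha$, using Lemma~\ref{4-2}.

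\textbf{Main obstacle.} Tracking $\underline{\hat L}_{m,i_0}$ along the block is the delicate step. Write
\[
\underline{\hat L}_{m,i_0}=(\hat L_{m,i_0}-N_m)+(N_m-M_m).
\]
The second term is $(-D_m)^+\le \tfrac{5\sqrt{dt}}{4\alpha}$ by \eq{pre}. The first term is at most $\hat L_{m,i_0}-N_t$, which by induction on $m=t,\dots,t+L$ is at most $4d(m-t)\le 4dL\le \sqrt{dt}/(4\alpha)$; the induction uses $N_m\ge N_t$. Hence $\underline{\hat L}_{m,i_0}\le \tfrac{3\sqrt{dt}}{2\alpha}<2\sqrt{dm}/\alpha$. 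Therefore each increment satisfies $\hat\ell_{m,i_0}\le 4d$, which closes the induction and yields $\hat L_{t+L,i_0}\le N_t+4dL$.

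Given this bound, $\min_{i\in\mathcal A_t}W_i\le W_{i_0}\le 4dL-L\mu_{i_0}$. Hence
\[
(\mu_{i_*}+\Delta)L+\min_{i\in\mathcal A_t} W_i\le 4dL+L(\mu_{i_*}+\Delta-\mu_{i_0})\le 4dL<N_{t+L}-N_t,
\]
where the middle inequality uses $\mu_{i_0}\ge\mu_{i_*}+\Delta$. This finishes the second case.
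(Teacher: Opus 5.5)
Your proof is correct and is essentially the paper's argument: the paper cites Lemma~\ref{lem:block_deterministic_control}~(1) to get $N_{t+L}-N_t\le 4dL$, concludes that the minimizer of $N_{t+L}$ lies in $\mathcal A_t$, and then runs your Case~1 computation. Your re-derived bound $\hat L_{t+L,i_0}\le N_t+4dL$ already gives $N_{t+L}\le\hat L_{t+L,i_0}\le N_t+4dL$, so your Case~2 is actually empty and the extra estimate on $W_{i_0}$ is not needed.
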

\begin{proof}
Recall that $N_{t+L} = \min_{i\in\mathcal I,\ i\neq i_*}\hat L_{t+L,i}$.
We would like to restrict the minimization over $\mathcal{I} \setminus \{  i_*\}$ to $\mathcal{A}_t \subset \mathcal{I} \setminus \{  i_*\}$.
From the definition, $N_{t+L}=\hat{L}_{t+L,i_0}$ for some $i_0\ne i_*$. 
We show that $i_0\in \mathcal{A}_t$:
    Lemma \ref{lem:block_deterministic_control} (1) implies that $N_{t+L}-N_t\leq 4dL$; then one can see that
    \[
    \hat{L}_{t,i_0}\leq \hat{L}_{t+L,i_0}=N_{t+L}\leq N_t+ 4dL,
    \]
    which implies that $i_0\in \mathcal{A}_t$.
    It follows that $N_{t+L} = \min_{i\in\mathcal A_t}\hat L_{t+L,i} $
    
    Therefore,
    \[
    N_{t+L}-N_t=\min_{i\in\mathcal{A}_t}\hat L_{t+L,i}-N_{t}\ge \min_{i\in\mathcal{A}_t}\left(\hat L_{t+L,i}-\hat{L}_{t,i}\right),
    \]
    where we used that $\hat{L}_{t,i}\ge N_t$ for any $i\ne i_*$. Then recalling the definition of $W_i$ and applying that $\mu_i\ge \Delta+\mu_{i_*}$, we have
    \[
    N_{t+L}-N_t\ge\min_{i\in\mathcal{A}_t}\left(W_i+L\mu_i\right)\ge \left(\mu_{i_*}+\Delta\right)L+\min_{i\in\mathcal{A}_t}W_i.
    \]
\end{proof}

The second lemma bounds the probability that $W_i$ is smaller than $-x$. This is the probability needed to lower-bound $N_{t+L}-N_t$ through Lemma~\ref{ntlowerpre}.
% Hence, the problem is converted to analyzing the behavior of $\max _{i \in \mathcal{A}_t}\left|W_i\right|$. In fact, as explained in the beginning, 
Specifically, for every $x\ge0$, it gives the following bound.
\begin{Lem}\label{wisub}
    Suppose that $4dL\le\sqrt{dt}/(4\alpha)$ and $D_t\ge-\sqrt{dt}/\alpha$. It holds that for any $x\ge0$ and $i\in\mathcal{A}_t$,
    \[
    \mathbb{P}\left(-W_i\ge x\,\middle|\,\mathscr{F}_{t-1}\right)
    \le \exp\left(-\frac{x^2}{2(4dL+x/3)}\right).
    \]
\end{Lem}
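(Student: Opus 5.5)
Write $-W_i=\sum_{r=0}^{L-1}\xi_{t+r}$ with $\xi_s:=\mu_i-\hat\ell_{s,i}$. This is a martingale difference sequence with respect to $(\mathscr F_s)$, since $\E[\hat\ell_{s,i}\mid\mathscr F_{s-1}]=\mu_i$. Each increment is bounded above: $\xi_s\le\mu_i\le 1$, because $\hat\ell_{s,i}\ge0$. The plan is to apply Freedman's (Bernstein-type) inequality for martingales with increments bounded above by $1$. This gives $\P(\sum\xi\ge x,\ \sum\operatorname{Var}\le v)\le\exp(-x^2/(2(v+x/3)))$. It therefore suffices to show that the predictable quadratic variation satisfies $\sum_{r}\E[\xi_{t+r}^2\mid\mathscr F_{t+r-1}]\le 4dL$ almost surely on the relevant event. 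That bound yields exactly the stated tail with $v=4dL$.

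\textbf{Variance step.} We have $\E[\xi_s^2\mid\mathscr F_{s-1}]\le\E[\hat\ell_{s,i}^2\mid\mathscr F_{s-1}]\le 1/p_{s,i}$. By Lemma~\ref{4-2}, $1/p_{s,i}\le(\sqrt d+\eta_s\underline{\hat L}_{s,i}/2)^2$, so we need $\underline{\hat L}_{s,i}\le 2\sqrt{ds}/\alpha$ for every $s\in[t,t+L-1]$. This gives $1/p_{s,i}\le 4d$, and summing over the $L$ rounds gives $4dL$. To verify the bound on $\underline{\hat L}_{s,i}$, decompose
\[
\underline{\hat L}_{s,i}=(\hat L_{s,i}-N_s)+(N_s-M_s)\le(\hat L_{s,i}-N_s)+D_s^-.
\]

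For the second term, Lemma~\ref{lem:block_deterministic_control}(1) (its induction \eq{pre}) gives $D_s^-\le\sqrt{dt}/\alpha+4dL\le 5\sqrt{dt}/(4\alpha)$ throughout the block, using $D_t\ge-\sqrt{dt}/\alpha$ and $4dL\le\sqrt{dt}/(4\alpha)$.

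For the first term, use $i\in\mathcal A_t$, so $\hat L_{t,i}\le N_t+4dL$. The key difficulty is that $\hat L_{s,i}$ can jump upward within the block, so a pathwise bound need not hold. To handle this, I would stop the process: let $\tau$ be the first $s\ge t$ with $\hat L_{s,i}-N_s>4dL$ or with a large deficit, and run Freedman on the stopped sum. After arm $i$ has jumped above, however, its estimate no longer contributes a large negative deviation. For the lower tail we only need $-W_i$ large, i.e. $\hat L_{t+L,i}-\hat L_{t,i}$ small. So on the event that $i$ left the window by an upward jump, we already have $\hat L_{s,i}-\hat L_{t,i}\ge$ something large, and $-W_i$ can only decrease afterwards by at most $\mu_i$ per round.

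\textbf{Main obstacle.} The cleaner route is to cap the variance by truncation. Replace $\xi_s$ by $\xi_s\1_{\{s\le\tau\}}$, where $\tau$ is the first time $\hat L_{s,i}-\hat L_{t,i}>L\mu_i$. Before $\tau$ we have $\hat L_{s,i}-N_s\le 4dL+L\le$ (using $N_s\ge N_t$) a quantity of order $\sqrt{dt}/(4\alpha)$. Combined with the $D_s^-$ bound, this gives $\underline{\hat L}_{s,i}\le 3\sqrt{dt}/(2\alpha)\le 2\sqrt{ds}/\alpha$, hence predictable variance at most $4d$ per round. On $\{\tau\le t+L-1\}$ the unstopped sum satisfies $-W_i\le 0<x$ for $x>0$, because the later increments of $\hat L_{\cdot,i}$ are nonnegative and the cumulative increase already exceeds $L\mu_i$. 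So $\{-W_i\ge x\}\subseteq\{\text{stopped sum}\ge x\}$, and Freedman applied to the stopped martingale finishes the proof. The case $x=0$ is trivial. Checking the constant slack in $4dL+L$ against the hypothesis $4dL\le\sqrt{dt}/(4\alpha)$ is the step I expect to need care.
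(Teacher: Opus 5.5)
Your proof is correct, but it takes a different route from the paper's, and your stated reason for needing it is mistaken. The paper does not stop or truncate. It proves by induction on $r$ the pathwise bound $\hat L_{t+r,i}-N_{t+r}\le 4d(L+r)$ for $0\le r\le L-1$, and the induction reinforces itself. If the gap is controlled at time $t+r$, then combining it with $N_{t+r}-M_{t+r}=(-D_{t+r})^+\le\sqrt{dt}/\alpha+4dr$ (Lemma~\ref{lem:block_deterministic_control}(1)) gives $\underline{\hat L}_{t+r,i}\le 7\sqrt{dt}/(4\alpha)<2\sqrt{d(t+r)}/\alpha$. Hence $1/p_{t+r,i}\le 4d$ and $\hat\ell_{t+r,i}\le 4d$, and since $N$ is nondecreasing the gap grows by at most $4d$. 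So the upward jumps of $\hat L_{\cdot,i}$ are themselves bounded by $4d$ while the gap is controlled, and a pathwise bound does hold. The paper then applies Lemma~\ref{lem:one-sided-freedman} directly with $b=1$, $v=4dL$.

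Your alternative truncates at the first time $\hat L_{s,i}-\hat L_{t,i}>L\mu_i$. Because $\hat L_{\cdot,i}$ is nondecreasing, $-W_i<0$ on the truncated event, so the event of interest is unaffected. This is a valid substitute for the induction: it needs only the a priori bound $\hat L_{s,i}-N_s\le 4dL+L$ on retained rounds, never a bound on arm $i$'s jumps.

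The trade-off is that the paper's induction gives the stronger two-sided fact $|\mu_i-\hat\ell_{t+r,i}|\le 4d$ almost surely for every $r$. The paper reuses this in Lemma~\ref{tt2}(2) to get the MGF bound via Lemma~\ref{bounded}. With your truncation you would have to argue separately there, e.g.\ via $e^{-\gamma W_i}\le 1+e^{\gamma\sum_s\tilde\xi_s}$, at the cost of a factor $2$.

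Two small fixes to your write-up:
\begin{itemize}
\item On a retained round, $\underline{\hat L}_{s,i}\le(4dL+L)+(\sqrt{dt}/\alpha+4dL)\le(1+\tfrac12+\tfrac1{16})\sqrt{dt}/\alpha$, using $L\le\sqrt{dt}/(16d\alpha)$. This is not $3\sqrt{dt}/(2\alpha)$ as you wrote, but the slack below $2\sqrt{ds}/\alpha$ is ample.
\item Use the predictable indicator $\1_{\{\hat L_{s,i}-\hat L_{t,i}\le L\mu_i\}}$ so the bound holds at every retained $s$. With $\1_{\{s\le\tau\}}$ taken literally, round $s=\tau$ needs one more increment of at most $4d$, which still fits.
\end{itemize}
Your first stopping sketch, involving a ``large deficit'' and arm $i$ leaving the window, is unnecessary and can be dropped.
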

\begin{proof}
    First, $W_i$ can be expressed as the summation of mean-zero random variables 
    \[
    W_i=\sum_{r=0}^{L-1}\left(\left(\hat{L}_{t+r+1,i}-\hat{L}_{t+r,i}\right)-\mu_i\right)=\sum_{r=0}^{L-1}\left(\hat{\ell}_{t+r,i}-\mu_i\right),
    \]
    where $\E\left[\hat{\ell}_{t+r,i}\,\middle|\, \mathscr{F}_{t+r-1}\right]=\mu_i$. We first show that for any $0\leq r\leq L-1$,
    \begin{equation}\label{trik}
        \hat{\ell}_{t+r,i}\leq 4d.
    \end{equation}
    Once \eq{trik} has been proved for every $0\le r<L$, the martingale differences $\mu_i-\hat{\ell}_{t+r,i}$ satisfy
    \[
    \begin{gathered}
    \mu_i-\hat{\ell}_{t+r,i}\le1,
    \qquad
    \left|\mu_i-\hat{\ell}_{t+r,i}\right|\le 4d,\\
    \E\left[(\mu_i-\hat{\ell}_{t+r,i})^2\,\middle|\,\mathscr{F}_{t+r-1}\right]
    \le \E\left[\hat{\ell}_{t+r,i}^2\,\middle|\,\mathscr{F}_{t+r-1}\right]
    \le 4d.
    \end{gathered}
    \]
    Lemma~\ref{lem:one-sided-freedman} is the one-sided Freedman inequality. For the fixed arm $i$, apply it with the shifted filtration $\mathscr{G}_r:=\mathscr{F}_{t+r-1}$, $X_{r+1}:=\mu_i-\hat{\ell}_{t+r,i}$, $b=1$, and $v=4dL$. Since $-W_i=\sum_{r=0}^{L-1}X_{r+1}$, the claimed bound follows.

    It remains to derive \eq{trik}.
    By the definition of $\hat{\ell}_{t+r,i}$ in \eq{eq:loss-vector-est} and Lemma \ref{4-2}
    \[
    \hat{\ell}_{t+r,i}\leq p_{t+r,i}^{-1} \leq  \left(\eta_{t+r}\underline{\hat{L}}_{t+r,i}+2\sqrt{d}\right)^{2}/4.
    \]
    Recall that $\eta_{t+r} = \alpha / \sqrt{t+r}$. Then to show \eq{trik}, it suffices to show that for any $0\leq r\leq L-1$,
\begin{equation}\label{t+r}
    \underline{\hat{L}}_{t+r,i}
%=\hat{L}_{t+r,i}-M_{t+r}
\leq \frac{2\sqrt{d(t+r)}}{\alpha}.
\end{equation}
To this end, we will establish that for any $0\leq r\leq L-1$,
\begin{equation}\label{klr}
    \hat{L}_{t+r,i}-N_{t+r}\leq 4d(L+r).
\end{equation}
To see why \eq{klr} yields \eq{t+r}, recall that $M_t=\min _{i\in\mathcal{I}} \hat{L}_{t,i}$, then one can see that
\[
N_t-M_t=\begin{cases}
0, & \text{if $i_*\notin\argmin_{i\in\mathcal{I}}\hat{L}_{t,i}$},\\[4pt]
N_t-\hat{L}_{t,i_*}=-D_t>0, & \text{else}.
\end{cases}
\]
In other words, $N_{t}-M_{t}=(-D_{t})^+$ for any $t\ge 1$. Therefore, by Lemma \ref{lem:block_deterministic_control} (1), we have
\begin{equation}\label{tkr}
    N_{t+r}-M_{t+r}=(-D_{t+r})^+\leq (-D_{t})^++4dr\leq \frac{\sqrt{dt}}{\alpha}+4dr.
\end{equation}
Then combining \eq{klr} and \eq{tkr}, we have
\[
\underline{\hat{L}}_{t+r,i}=\hat{L}_{t+r,i}-M_{t+r}
\leq \frac{\sqrt{dt}}{\alpha}+12dL
\leq \frac{7\sqrt{dt}}{4\alpha}
<\frac{2\sqrt{d(t+r)}}{\alpha}.
\]

Finally, we use induction to show \eq{klr}. It is true for $r=0$ by the definition of $\mathcal{A}_t$ in \eq{mat}. Suppose that it is also true for some $0\leq r<L-1$, then by the argument above, we have $\hat{\ell}_{t+r,i}\leq 4d$. Hence,
\[
\hat{L}_{t+r+1,i}-N_{t+r+1}\leq \hat{\ell}_{t+r,i}+\hat{L}_{t+r,i}-N_{t+r}\leq 4d(L+r+1).
\]
This closes the induction and also completes our proof.
\end{proof}

We can now prove Lemma~\ref{ntgrow} using the bound on $\P(-W_i\ge x\mid\mathscr{F}_{t-1})$ in Lemma~\ref{wisub}.
\begin{proof}[Proof of Lemma \ref{ntgrow}]
By Lemma~\ref{ntlowerpre},
\[
\left(\mu_{i_*}+\Delta\right)L-(N_{t+L}-N_t)
\le \max_{i\in\mathcal{A}_t}(-W_i).
\]
Conditionally on $\mathscr{F}_{t-1}$, the set $\mathcal{A}_t$ is fixed and has at most $d-1$ elements. Applying Lemma~\ref{lem:max_one_sided_tail_moments} to $Z_i=-W_i$ with $v=4dL$, and using $d\ge2$, gives
\[
\E\left[\left(\max_{i\in\mathcal{A}_t}(-W_i)\right)^+\,\middle|\,\mathscr{F}_{t-1}\right]
\le C\left(\sqrt{4dL\log(d)}+\log(d)\right),
\]
which proves the claim because $\E[\max_{i\in\mathcal{A}_t}(-W_i)\mid\mathscr{F}_{t-1}]$ is no larger than the left-hand side.
\end{proof}

The next lemma records the bounds on $T_t^+$ and $\E[e^{\gamma T_t}\mid\mathscr{F}_{t-1}]$ that will be used below.
\begin{Lem}\label{tt2}
Suppose that $4dL\le\sqrt{dt}/(4\alpha)$ and $D_t\ge-\sqrt{dt}/\alpha$. There exists a universal constant $C>0$ such that
\begin{enumerate}[(1)]
\item
\[
\E\left[T_t^+\,\middle|\,\mathscr{F}_{t-1}\right]
\le C\left(\sqrt{4dL\log(d)}+\log(d)\right),
\]
and
\[
\E\left[(T_t^+)^2\,\middle|\,\mathscr{F}_{t-1}\right]
\le C\left(4dL\log(d)+(\log(d))^2\right).
\]
\item For every $0\le\gamma\le1/(4d)$,
\[
\E\left[e^{\gamma T_t}\,\middle|\,\mathscr{F}_{t-1}\right]
\le d\,e^{4dL\gamma^2}.
\]
\end{enumerate}
\end{Lem}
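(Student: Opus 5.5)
Recall $T_t=(\mu_{i_*}+\Delta)L-(N_{t+L}-N_t)$. The plan is to reduce everything to the martingale tails of the $W_i$ already controlled in Lemma~\ref{wisub}. By Lemma~\ref{ntlowerpre}, under the hypotheses $4dL\le\sqrt{dt}/(4\alpha)$ and $D_t\ge-\sqrt{dt}/\alpha$ we have
\[
T_t\le Z:=\max_{i\in\mathcal{A}_t}(-W_i),
\]
and hence $T_t^+\le Z^+$. Conditionally on $\mathscr{F}_{t-1}$ the set $\mathcal{A}_t$ is fixed, with $|\mathcal{A}_t|\le d-1$. Each $-W_i$ satisfies the Bernstein-type tail
\[
\exp\bigl(-x^2/(2(v+x/3))\bigr),\qquad v=4dL .
\]

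For item (1), I would apply a union bound, $\P(Z\ge x\mid\mathscr{F}_{t-1})\le d\exp(-x^2/(2(v+x/3)))$, and integrate the tail. Write $\E[(Z^+)^k\mid\mathscr{F}_{t-1}]=\int_0^\infty kx^{k-1}\P(Z\ge x\mid\mathscr{F}_{t-1})\,\d x$ for $k=1,2$. Split the integral at $x_0=C(\sqrt{v\log d}+\log d)$: below $x_0$ bound the probability by $1$, and above $x_0$ the factor $d$ is absorbed by the exponent. This gives $\E[Z^+]\lesssim\sqrt{v\log d}+\log d$ and $\E[(Z^+)^2]\lesssim v\log d+(\log d)^2$, using $d\ge2$. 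This is presumably exactly what Lemma~\ref{lem:max_one_sided_tail_moments} provides, as already invoked in the proof of Lemma~\ref{ntgrow}, so I would cite it directly for both moments.

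For item (2), since $\gamma\ge0$ and $T_t\le Z$, we get $e^{\gamma T_t}\le e^{\gamma Z}\le\sum_{i\in\mathcal{A}_t}e^{-\gamma W_i}$. It therefore suffices to show $\E[e^{-\gamma W_i}\mid\mathscr{F}_{t-1}]\le e^{4dL\gamma^2}$ for each $i\in\mathcal{A}_t$. Following the proof of Lemma~\ref{wisub}, write $-W_i=\sum_{r=0}^{L-1}X_{r+1}$ with $X_{r+1}=\mu_i-\hat\ell_{t+r,i}$. On the relevant event these satisfy $X_{r+1}\le1$, have conditional mean zero, and have conditional second moment at most $4d$; the bound $\hat\ell_{t+r,i}\le4d$ was established by induction along the block. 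For a mean-zero variable $X\le b$ with $\gamma b\le1$, the standard estimate $e^{y}\le1+y+y^2$ for $y\le1$ gives
\[
\E[e^{\gamma X}\mid\cdot]\le1+\gamma^2\E[X^2\mid\cdot]\le\exp(4d\gamma^2).
\]
Here $\gamma X\le\gamma\le1/(4d)\le1$, which is more than enough. Iterating with the tower property over $r=L-1,\dots,0$ yields $e^{4dL\gamma^2}$, and the union sum contributes at most $d-1\le d$ terms.

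The main obstacle is in item (2). The inductive bound $\hat\ell_{t+r,i}\le4d$ holds only pathwise along the block under the standing hypotheses, so the tower-property iteration must be carried out with that bound available at each step. I would handle this as Lemma~\ref{wisub} does: the bound is deterministic given the hypotheses at time $t$, via Lemma~\ref{lem:block_deterministic_control}, so the conditional variance bound $\E[\hat\ell^2_{t+r,i}\mid\mathscr{F}_{t+r-1}]\le\E[\hat\ell_{t+r,i}\mid\mathscr{F}_{t+r-1}]\cdot4d\le4d$ holds at every step, and the iteration goes through.
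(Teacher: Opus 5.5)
Your proposal is correct and follows the paper's proof essentially step for step. You bound $T_t^+\le(\max_{i\in\mathcal{A}_t}(-W_i))^+$ via Lemma~\ref{ntlowerpre}, use Lemmas~\ref{wisub} and \ref{lem:max_one_sided_tail_moments} for item (1), and for item (2) iterate a per-increment MGF bound with the tower property before applying $e^{\gamma\max_i(-W_i)}\le\sum_{i\in\mathcal{A}_t}e^{-\gamma W_i}$. The only cosmetic difference is that you use the one-sided condition $\mu_i-\hat\ell_{t+r,i}\le 1$ together with $e^y\le 1+y+y^2$ for $y\le 1$, whereas the paper invokes Lemma~\ref{bounded} with the two-sided bound $|\mu_i-\hat\ell_{t+r,i}|\le 4d$; both yield $e^{4dL\gamma^2}$ for $0\le\gamma\le 1/(4d)$.
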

\begin{proof}
The proof of Lemma~\ref{ntgrow} gives
\[
T_t^+\le\left(\max_{i\in\mathcal{A}_t}(-W_i)\right)^+
\]
and Lemmas~\ref{wisub} and \ref{lem:max_one_sided_tail_moments} give both bounds in item~(1). For item~(2), the proof of Lemma~\ref{wisub} shows, for every $0\le r<L$, that
\[
\left|\mu_i-\hat{\ell}_{t+r,i}\right|\le 4d,
\qquad 0\le r\le L-1.
\]
The one-sided form of Lemma~\ref{lem:one-sided-freedman} would also be sufficient for item~(2), since for $\gamma\ge0$ only the upper tail of $-W_i$ matters. However, integrating its tail bound gives no better scale than $e^{O(\gamma^2dL)}$, whereas Lemma~\ref{bounded} gives the stated MGF bound directly.
Applying Lemma~\ref{bounded} conditionally to each increment and iterating with the tower property over the sum
\[
-W_i=\sum_{r=0}^{L-1}(\mu_i-\hat{\ell}_{t+r,i}),
\]
gives
\[
\E\left[e^{-\gamma W_i}\,\middle|\,\mathscr{F}_{t-1}\right]
\le e^{4dL\gamma^2},
\qquad 0\le\gamma\le1/(4d).
\]
Consequently,
\[
e^{\gamma T_t}
\le e^{\gamma\max_{i\in\mathcal{A}_t}(-W_i)}
\le\sum_{i\in\mathcal{A}_t}e^{-\gamma W_i},
\]
and summing over at most $d-1$ arms completes the proof.
\end{proof}

\section{\texorpdfstring{Computation related to $\hat L_{t+L,i_*}-\hat L_{t,i_*}$}{lt}}\label{comlt}
In this section, we will present some computations about $\hat L_{t+L,i_*}-\hat L_{t,i_*}$, which is important for computing $\E[g_{t+L}(D_{t+L})]/g_t(D_t)$ when proving \eq{lsuper}, or equivalently Lemma~\ref{formalsuper}, in Appendix~\ref{pflya}.
For the details, see \eq{xut}, \eq{ratiopo}, and \eq{ratione}.
From the definition of $g_t$ in \eq{gt2}, the computation is divided into two parts according to whether $D_t \ge 0$.

% In the following, we suppose that $D_t=x-\Delta t\ge 0$ for simplicity, because now $g_t(D_t)=x^2t^{-\theta}$. 
\subsection{The positive half-line}
For the positive half-line, $g_t(y)=t^{-\theta}(y+\Delta t)^2$, hence, one needs to control the (conditional) variance of $\hat L_{t+L,i_*}-\hat L_{t,i_*}$, as shown in \eq{xut} and \eq{ratiopo}.
From $\hat{L}_t = \sum_{s=1}^{t-1} \hat{\ell}_s$ and the unbiasedness of $\hat{\ell}_s$ defined in  \eq{eq:loss-vector-est},
that is
\[
\E\left[\left(\hat L_{t+L,i_*}-\hat L_{t,i_*}-L\mu_{i_*}\right)^2\,\middle|\,\mathscr{F}_{t-1}\right]=\sum_{r=0}^{L-1}\E\left[\left(\hat{\ell}_{t+r,i_*}-\mu_{i_*}\right)^2\,\middle|\,\mathscr{F}_{t-1}\right]\leq \sum_{r=0}^{L-1}\E\left[\hat{\ell}_{t+r,i_*}^2\,\middle|\,\mathscr{F}_{t-1}\right].
\]
Since $\ell_{t,i_*} \in [0,1]$, for any $t\ge 1$, 
\begin{equation}\label{secondell}
    \mathbb{E}\left[\hat{\ell}_{t,i_*}^2 \mid \mathscr{F}_{t-1}\right]=\mathbb{E}\left[\frac{A_{t,i_*}\ell_{t,i_*}^2}{p_{t,i_*}^2} \mid \mathscr{F}_{t-1}\right]\leq \mathbb{E}\left[\frac{A_{t,i_*}\ell_{t,i_*}}{p_{t,i_*}^2} \mid \mathscr{F}_{t-1}\right]=\frac{\mu_{i_*}}{p_{t,i_*}},
\end{equation}
Then by Lemma \ref{4-2} with $\eta_t = \alpha / \sqrt{t}$ and that $\underline{\hat{L}}_{t,i_*}=D_t^+$, we have
\begin{equation}\label{secondL}
    \E\left[\left(\hat L_{t+L,i_*}-\hat L_{t,i_*}-L\mu_{i_*}\right)^2\,\middle|\,\mathscr{F}_{t-1}\right]\leq \mu_{i_*}\sum_{r=0}^{L-1}\left(\frac{\alpha D_{t+r}^+}{2\sqrt{t+r}}+\sqrt{d}\right)^{2}.
\end{equation}
 For any $t\ge 1$ define
\begin{equation}\label{st}
    s_t:=\left(\frac{\alpha D_{t}^+}{2\sqrt{t}}+\sqrt{d}\right)^{2}.
\end{equation}
We will then turn to controlling $s_t$. First, we control the first term:
\begin{Lem}\label{onestep}
    Suppose that $D_t\ge 0$. Then $s_t$ defined in \eq{st} can be controlled as
    \[
    s_t \leq\left(\frac{\alpha^2}{4}+\frac{\alpha \sqrt{d}}{4 \Delta \sqrt{t}}+\frac{d}{\Delta^2 t}\right) \frac{x^2}{t},
    \]
    where $x = D_t + \Delta t$.
\end{Lem}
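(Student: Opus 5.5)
This is a direct algebraic computation. Since $D_t\ge 0$, we have $D_t^+=D_t$, and $D_t=x-\Delta t\le x$. I will expand the square in the definition \eq{st} of $s_t$:
\[
s_t=\frac{\alpha^2 D_t^2}{4t}+\frac{\alpha\sqrt{d}\,D_t}{\sqrt t}+d .
\]
The goal is then to bound each of the three terms by a multiple of $x^2/t$.

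\emph{First term.} Since $0\le D_t\le x$, we get $\frac{\alpha^2D_t^2}{4t}\le \frac{\alpha^2}{4}\frac{x^2}{t}$ immediately.

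\emph{Third term.} Since $x\ge \Delta t>0$, we have $x^2\ge \Delta^2t^2$. Hence $d\le \frac{d}{\Delta^2 t}\cdot\frac{x^2}{t}$.

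\emph{Cross term.} I want to combine $D_t\le x$ with $1\le x/(\Delta t)$. Writing $\frac{\alpha\sqrt d D_t}{\sqrt t}\le \frac{\alpha\sqrt d\,x}{\sqrt t}\cdot\frac{x}{\Delta t}=\frac{\alpha\sqrt d}{\Delta\sqrt t}\frac{x^2}{t}$ gives coefficient $\frac{\alpha\sqrt d}{\Delta\sqrt t}$. That is four times larger than the stated $\frac{\alpha\sqrt d}{4\Delta\sqrt t}$.

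So the only point needing care is matching the constant $1/4$ in the cross term. I would first double-check the expansion. Since $\bigl(\frac{\alpha D}{2\sqrt t}\bigr)\cdot\sqrt d\cdot 2=\frac{\alpha\sqrt d D}{\sqrt t}$, the crude bound gives coefficient $\alpha\sqrt d/(\Delta\sqrt t)$.

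To recover the stated $1/4$, I would try to use more slack. One option is the inequality $D_t\,\Delta t\le x^2/4$, which follows from AM--GM since $D_t+\Delta t=x$. It gives
\[
\frac{\alpha\sqrt d\,D_t}{\sqrt t}=\frac{\alpha\sqrt d}{\Delta t^{3/2}}\,D_t\,\Delta t\le \frac{\alpha\sqrt d}{4\Delta\sqrt t}\cdot\frac{x^2}{t},
\]
which is exactly the stated coefficient.

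With this, the first and third terms keep their crude bounds, and the cross term uses AM--GM. Summing the three bounds yields the lemma.
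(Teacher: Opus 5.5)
Your proposal is correct and is essentially the paper's proof: the same three-term expansion, the trivial bound $D_t\le x$ for the quadratic term, $x\ge \Delta t$ for the constant term, and AM--GM in the form $D_t\,\Delta t\le x^2/4$ for the cross term to obtain the factor $\tfrac14$. The exploratory crude bound for the cross term is unnecessary once you use AM--GM and can be dropped from the write-up.
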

\begin{proof}
    By direct computation, we have
    \[
    \begin{aligned}
        s_t=&\left(\frac{\alpha (x-\Delta t)}{2\sqrt{t}}+\sqrt{d}\right)^{2}=\frac{\alpha^2(x-\Delta t)^2}{4t}+\frac{\alpha (x-\Delta t)\sqrt{d}}{\sqrt{t}}+d
    \end{aligned}
    \]
    For the first term,
    since $0\leq D_t = x-\Delta t\leq x$, we have
    \[
    \frac{\alpha^2(x-\Delta t)^2}{4t}\leq \frac{\alpha^2 x^2}{4t}.
    \]
    For the second term, note that
    \[
    x-\Delta t=\frac{\Delta t(x-\Delta t)}{\Delta t}\leq \frac{(\Delta t+x-\Delta t)^2}{4\Delta t}=\frac{x^2}{4\Delta t},
    \]
    hence,
    \[
    \frac{\alpha (x-\Delta t)\sqrt{d}}{\sqrt{t}}\leq \frac{x^2}{t}\frac{\alpha \sqrt{d}}{4 \Delta \sqrt{t}}.
    \]
    For the last term, since
    \[
    \frac{x^2}{t}\ge \frac{(\Delta t)^2}{t}\ge \Delta^2 t,
    \]
     we have
     \[
     d\leq \frac{x^2}{t}\frac{d}{\Delta^2 t}.
     \]
     Then it suffices to put everything together.
\end{proof}

To tackle the $L$ steps forward, we need the following recurrence relation:
\begin{Lem}\label{stre}
    For $s_t$ defined in \eq{st}, there exists $C_{\alpha}>0$ depending only on $\alpha$ such that for any $t\ge 1$,
    \[
    \mathbb{E}\left[s_{t+1} \mid \mathscr{F}_{t-1}\right] \leq\left(1+\frac{C_\alpha}{t}\right) s_t+\frac{\alpha^2}{4}.
    \]
    \begin{proof}
        By Lemma \ref{simd}, for any $t\ge 1$, we have
        \[
D_{t+1}^{+}-D_t^{+}\leq (D_{t+1}-D_t)^{+} \leq \hat{\ell}_{t,i_*},
        \]
        then, by the definition of $s_t$ in \eq{st}, we have
        \[
        \sqrt{s_{t+1}}=\sqrt{d}+\frac{\alpha D_{t+1}^{+}}{2 \sqrt{t+1}} \leq \sqrt{d}+\frac{\alpha\left(D_t^{+}+\hat{\ell}_{t,i_*}\right)}{2 \sqrt{t}}=\sqrt{s_t}+\frac{\alpha\hat{\ell}_{t,i_*}}{2 \sqrt{t}}.
        \]
        Taking the square yields
        \[
        s_{t+1} \leq s_t+\frac{\alpha \sqrt{s_t} \hat{\ell}_{t,i_*}}{\sqrt{t}}+\frac{\alpha^2}{4} \frac{\hat{\ell}_{t,i_*}^2}{t},
        \]
        then by $\mathbb{E}\left[\hat{\ell}_{t,i_*} \mid \mathscr{F}_{t-1}\right]=\mu_{i_*}$ and \eq{secondell}, we have
        \[
         \mathbb{E}\left[s_{t+1} \mid \mathscr{F}_{t-1}\right]\leq s_t+\frac{\alpha \mu_{i_*} \sqrt{s_t}}{\sqrt{t}}+\frac{\alpha^2 \mu_{i_*}}{4} \frac{s_t}{t},
        \]
        where we used that $\frac{1}{p_{t,i_*}}\leq s_t$ by Lemma \ref{pti-2} and the definition of $s_t$ in \eq{st}.
        % \[
        % \mathbb{E}\left[\hat{\ell}_{t,i_*}^2 \mid \mathscr{F}_{t-1}\right]=\mathbb{E}\left[\frac{A_{t,i_*}\ell_{t,i}^2}{p_{t,i_*}^2} \mid \mathscr{F}_{t-1}\right]\leq \mathbb{E}\left[\frac{A_{t,i_*}\ell_{t,i}}{p_{t,i_*}^2} \mid \mathscr{F}_{t-1}\right]=\frac{\mu_{i_*}}{p_{t,i_*}}\leq \mu_{i_*}s_t.
        % \]
        Finally, note that
        \[
        \frac{\alpha \mu_{i_*} \sqrt{s_t}}{\sqrt{t}}\leq \frac{s_t}t+\frac{\mu_{i_*}^2\alpha^2}4\leq \frac{s_t}t+\frac{\alpha^2}4,
        \]
        then combining the above together we have
        \[
        \mathbb{E}\left[s_{t+1} \mid \mathscr{F}_{t-1}\right]\leq\left(1+\frac{1+\alpha^2 \mu_{i_*} / 4}{t}\right) s_t+\frac{\alpha^2}{4}.
        \]
        It suffices to take $C_{\alpha}=1+\alpha^2/4$.
    \end{proof}
\end{Lem}

With the preliminary result above, we can now compute the $L$ steps forward:
\begin{Lem}\label{lstep}
    Suppose that $D_t \ge 0$.
    Then for $s_t$ defined in \eq{st}, there exists $C_{\alpha}>0$ depending only on $\alpha$ such that 
    % for any $t\ge 1$, if $D_t=x-\Delta t\ge 0$, then 
    \[
    \sum_{j=0}^{L-1} \mathbb{E}\left[s_{t+j} \mid \mathscr{F}_{t-1}\right] \leq \exp \left(\frac{C_\alpha L}{t}\right)\left[L\left(\frac{\alpha^2}{4}+\frac{\alpha \sqrt{d}}{4 \Delta \sqrt{t}}+\frac{d}{\Delta^2 t}\right) \frac{x^2}{t}+\frac{\alpha^2 L^2}{8}\right],
    \]
    where $x = D_t + \Delta t$.
\end{Lem}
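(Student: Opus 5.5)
The plan is to iterate the one-step recurrence of Lemma~\ref{stre} by conditioning on successively larger $\sigma$-fields, and then to bound the starting term $s_t$ with Lemma~\ref{onestep}. Write $a_j:=\E[s_{t+j}\mid\mathscr{F}_{t-1}]$. Since $\mathscr{F}_{t-1}\subset\mathscr{F}_{t+j-1}$, the tower property applied to Lemma~\ref{stre} at time $t+j$ gives
\[
a_{j+1}\le\left(1+\frac{C_\alpha}{t+j}\right)a_j+\frac{\alpha^2}{4}\le\left(1+\frac{C_\alpha}{t}\right)a_j+\frac{\alpha^2}{4}.
\]
We have $a_0=s_t$, because $s_t$ is $\mathscr{F}_{t-1}$-measurable: $D_t$ depends only on $\hat\ell_1,\dots,\hat\ell_{t-1}$.

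Unrolling this recurrence, and using $(1+C_\alpha/t)^k\le e^{C_\alpha k/t}\le e^{C_\alpha L/t}$ for $k\le j\le L-1$, gives
\[
a_j\le e^{C_\alpha L/t}\left(s_t+\frac{\alpha^2 j}{4}\right).
\]
Summing over $j=0,\dots,L-1$ and using $\sum_{j=0}^{L-1} j\le L^2/2$ yields
\[
\sum_{j<L}a_j\le e^{C_\alpha L/t}\left(Ls_t+\frac{\alpha^2L^2}{8}\right).
\]

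Since $D_t\ge0$, Lemma~\ref{onestep} bounds $s_t$ by $\bigl(\frac{\alpha^2}{4}+\frac{\alpha\sqrt d}{4\Delta\sqrt t}+\frac{d}{\Delta^2t}\bigr)\frac{x^2}{t}$ with $x=D_t+\Delta t$. Substituting this into $Ls_t$ gives exactly the claimed bound, with the same $C_\alpha=1+\alpha^2/4$ as in Lemma~\ref{stre}.

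The only point needing care is the conditioning. Lemma~\ref{stre} is stated conditionally on $\mathscr{F}_{t-1}$ for an arbitrary $t$, so applying it at time $t+j$ gives a bound conditional on $\mathscr{F}_{t+j-1}$, which we then average down to $\mathscr{F}_{t-1}$. Lemma~\ref{stre} requires no sign condition on $D_{t+j}$, so the fact that later $D_{t+j}$ may become negative causes no problem. The hypothesis $D_t\ge0$ is used only once, in the final application of Lemma~\ref{onestep} at the initial time. There is no real obstacle beyond this bookkeeping.
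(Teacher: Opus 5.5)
Your proposal is correct and follows the paper's proof essentially verbatim. Both set $a_j=\E[s_{t+j}\mid\mathscr{F}_{t-1}]$, use Lemma~\ref{stre} with the tower property to get $a_{j+1}\le(1+C_\alpha/t)a_j+\alpha^2/4$, unroll using $(1+C_\alpha/t)^r\le e^{C_\alpha L/t}$, sum with $\sum_{j<L}j\le L^2/2$, and finish by applying Lemma~\ref{onestep} to $a_0=s_t$. Your explicit remarks that $s_t$ is $\mathscr{F}_{t-1}$-measurable and that $1+C_\alpha/(t+j)\le 1+C_\alpha/t$ spell out small steps the paper leaves implicit.
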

\begin{proof}
    Define
    \[
    a_j:=\mathbb{E}\left[s_{t+j} \mid \mathscr{F}_{t-1}\right], \quad j \geq 0,
    \]
    then by Lemma \ref{stre} and the tower property, there exists $C_\alpha>0$ such that $u=\frac{C_{\alpha}}{t}$ and
    \begin{equation}\label{aj1}
        a_{j+1} \leq(1+u) a_j+\frac{\alpha^2}{4}, \quad j \geq 0.
    \end{equation}
    Iterating \eq{aj1}, we have
    \[
    a_j \leq(1+u)^j a_0+\frac{\alpha^2}{4} \sum_{r=0}^{j-1}(1+u)^r.
    \]
    For any $0\leq r\leq L-1$, we have
    \[
    (1+u)^r\leq e^{ru}\leq e^{Lu}.
    \]
    Hence,
    \[
    a_j \leq \exp (u L)\left(a_0+\frac{\alpha^2}{4} j\right),
    \]
    then summing over $0\leq j\leq L-1$, we have
    \[
    \sum_{j=0}^{L-1} a_j \leq \exp (u L)\left(L a_0+\frac{\alpha^2}{4} \sum_{j=0}^{L-1} j\right) \leq \exp (u L)\left(L a_0+\frac{\alpha^2 L^2}{8}\right).
    \]
    Finally, it suffices to apply Lemma \ref{onestep} to $a_0$.
\end{proof}
Under the conditions of Lemma~\ref{lstep}, $x \ge \Delta t$.
Then the first term $\frac{\alpha^2}{4}\frac{Lx^2}{t}$ on the right-hand side is the leading term. When $t$ is large enough, the other terms can be absorbed into $\mathcal{O}(\rho)\frac{Lx^2}{t}$, where $\rho=2-\frac{\alpha^2 \mu_{i_*}}{4}-q>0$.
\begin{Cor}\label{rt2}
    Suppose that $q \in (0,2 - \frac{\alpha^2 \mu_{i_*}}{4})$, $\rho = 2 - \frac{\alpha^2 \mu_{i_*}}{4} - q$, and  $D_t \ge 0$.
    There exists $C_{\alpha,q,\rho}>0$ depending only on $\alpha$, $q$ and $\rho$ such that if $t\ge C_{\alpha,q,\rho}\frac{d L}{\Delta^2}$, then
    \[
    \E\left[\left(\hat L_{t+L,i_*}-\hat L_{t,i_*}-L\mu_{i_*}\right)^2\,\middle|\,\mathscr{F}_{t-1}\right]\leq\mu_{i_*}\left(\frac{\alpha^2}{4}+\frac{\rho}{4}\right) \frac{Lx^2}{t},
    \]
    where $x = D_t + \Delta t$.
\end{Cor}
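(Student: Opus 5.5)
The plan is to combine the variance identity \eq{secondL} with Lemma~\ref{lstep}, and then absorb every lower-order term into $\frac{\rho}{4}\frac{Lx^2}{t}$ using $x\ge \Delta t$ and the assumed lower bound on $t$.

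By \eq{secondL} and the tower property,
\[
\E\left[\left(\hat L_{t+L,i_*}-\hat L_{t,i_*}-L\mu_{i_*}\right)^2\,\middle|\,\mathscr{F}_{t-1}\right]\le \mu_{i_*}\sum_{j=0}^{L-1}\E\left[s_{t+j}\,\middle|\,\mathscr{F}_{t-1}\right].
\]
Strictly, \eq{secondL} is written with $s_{t+r}$ inside the sum. Each cross term vanishes and each diagonal term is $\E[\E[\hat\ell_{t+r,i_*}^2\mid\mathscr F_{t+r-1}]\mid\mathscr F_{t-1}]\le \mu_{i_*}\E[s_{t+r}\mid\mathscr F_{t-1}]$ by \eq{secondell} and Lemma~\ref{4-2}, so the display is what we actually use. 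Since $D_t\ge0$, Lemma~\ref{lstep} applies and bounds the right-hand side by
\[
\mu_{i_*}\exp\left(\frac{C_\alpha L}{t}\right)\left[L\left(\frac{\alpha^2}{4}+\frac{\alpha\sqrt d}{4\Delta\sqrt t}+\frac{d}{\Delta^2 t}\right)\frac{x^2}{t}+\frac{\alpha^2L^2}{8}\right].
\]

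Next I rewrite the additive term relative to $Lx^2/t$. Since $x\ge\Delta t$, we have $x^2/t\ge\Delta^2 t$, so
\[
\frac{\alpha^2L^2}{8}\le \frac{\alpha^2 L}{8\Delta^2 t}\cdot\frac{Lx^2}{t}.
\]
The bracket is then at most $\bigl(\frac{\alpha^2}{4}+\epsilon_t\bigr)\frac{Lx^2}{t}$, where
\[
\epsilon_t=\frac{\alpha\sqrt d}{4\Delta\sqrt t}+\frac{d}{\Delta^2t}+\frac{\alpha^2L}{8\Delta^2t}.
\]

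It remains to choose $C_{\alpha,q,\rho}$ so that $e^{C_\alpha L/t}\bigl(\frac{\alpha^2}{4}+\epsilon_t\bigr)\le\frac{\alpha^2}{4}+\frac{\rho}{4}$. Under $t\ge C_{\alpha,q,\rho}\,dL/\Delta^2$, and using $d\ge1$, $L\ge1$, $\Delta\le1$, each term is small:
\begin{itemize}
\item $C_\alpha L/t\le C_\alpha\Delta^2/(C_{\alpha,q,\rho}d)\le C_\alpha/C_{\alpha,q,\rho}$;
\item $d/(\Delta^2 t)\le 1/C_{\alpha,q,\rho}$;
\item $L/(\Delta^2 t)\le 1/C_{\alpha,q,\rho}$;
\item $\sqrt d/(\Delta\sqrt t)\le (C_{\alpha,q,\rho}L)^{-1/2}\le C_{\alpha,q,\rho}^{-1/2}$.
\end{itemize}
So $\epsilon_t\le c\,C_{\alpha,q,\rho}^{-1/2}$ for an absolute $c$ (since $\alpha<1$). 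Using $e^{a}\le1+2a$ for $a\le1$ and $\alpha^2/4<1$, we get
\[
e^{C_\alpha L/t}\left(\frac{\alpha^2}{4}+\epsilon_t\right)\le \frac{\alpha^2}{4}+\epsilon_t+\frac{2C_\alpha}{C_{\alpha,q,\rho}}\left(1+\epsilon_t\right),
\]
and it suffices that this excess be $\le\rho/4$. Taking $C_{\alpha,q,\rho}\ge\max\{C_\alpha,\ (16c/\rho)^2,\ 32C_\alpha/\rho\}$ roughly suffices; the dependence on $q$ enters only through $\rho$.

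There is no real obstacle here. The only care needed is in absorbing the additive $\alpha^2L^2/8$ term, which depends on $x\ge\Delta t$, the assumption $D_t\ge0$ being essential for this and for applying Lemma~\ref{lstep}. The other point is to check that the single hypothesis $t\gtrsim dL/\Delta^2$ controls all four error terms simultaneously, which it does because $d,L\ge1$ and $\Delta\le1$.
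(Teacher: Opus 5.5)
Your proposal is correct and is essentially the paper's proof. It bounds the conditional second moment by $\mu_{i_*}\sum_{j=0}^{L-1}\E[s_{t+j}\mid\mathscr{F}_{t-1}]$ via \eq{secondL} (your reading of it with conditional expectations is the right one), applies Lemma~\ref{lstep}, and absorbs the terms $\frac{\alpha\sqrt d}{4\Delta\sqrt t}$, $\frac{d}{\Delta^2 t}$, $\frac{\alpha^2L^2}{8}$ and $e^{C_\alpha L/t}-1$ into $\frac{\rho}{4}\frac{Lx^2}{t}$ using $x\ge\Delta t$ and $t\ge C_{\alpha,q,\rho}dL/\Delta^2$; the only difference is the bookkeeping of constants (you use $e^a\le 1+2a$, the paper splits off $\rho/40$-sized pieces), which is immaterial.
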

\begin{proof}
    When $C_{\alpha,q,\rho}$ is large enough (implying $t$ is large enough), we have
    \[
    \exp \left(\frac{C_\alpha L}{t}\right)\leq 2;\quad \frac{\alpha \sqrt{d}}{4 \Delta \sqrt{t}},\frac{d}{\Delta^2 t}\leq \frac{\rho}{40};\quad \frac{\alpha^2 L^2}{8}\leq \frac{\rho L (\Delta t)^2}{40 t} \leq \frac{\rho Lx^2}{40 t}.
    \]
    % where we used that $x\ge \Delta t$ in the final inequality. 
    Hence, by Lemma \ref{lstep}, we have
    \[
    \sum_{j=0}^{L-1} \mathbb{E}\left[s_{t+j} \mid \mathscr{F}_{t-1}\right] \leq\exp \left(\frac{C_\alpha L}{t}\right)\frac{\alpha^2}{4}\frac{Lx^2}{t}+\frac{3\rho Lx^2}{20t}.
    \]
    Finally, note that $e^u-1\leq 2u$ when $0\leq u\leq 1$, then when $C_{\alpha,q,\rho}$ is large enough, we also have
    \[
    \left[\exp \left(\frac{C_\alpha L}{t}\right)-1\right]\cdot \frac{\alpha^2}{4}\frac{Lx^2}{t}\leq \frac{C_\alpha \alpha^2 L}{2t}\frac{Lx^2}{t}\leq \frac{\rho Lx^2}{10 t}.
    \]
    Therefore,
    \[
    \sum_{j=0}^{L-1} \mathbb{E}\left[s_{t+j} \mid \mathscr{F}_{t-1}\right] \leq\left(\frac{\alpha^2}{4}+\frac{\rho}{4}\right) \frac{Lx^2}{t},
    \]
    and then it suffices to apply \eq{secondL} and \eq{st}.
\end{proof}
\subsection{The negative half-line}
For the negative half-line, $g_t(y)=\Delta^2t^{q}e^{\lambda y}$.
As shown in \eq{ratione}, we need to control
\[
\E\left[\exp\left(\lambda\left(\hat L_{t+L,i_*}-\hat L_{t,i_*}-L\mu_{i_*}\right)\right)\,\middle|\,\mathscr{F}_{t-1}\right].
\]
In fact, we have:

\begin{Lem}\label{negexp}
 Suppose that $4dL\le\sqrt{dt}/(4\alpha)$ and
$0\ge D_t\ge-\sqrt{dt}/\alpha$. Then, for every $0\le\lambda\le1/(4d)$,
\[
\E\left[\exp\left(\lambda\left(\hat L_{t+L,i_*}-\hat L_{t,i_*}-L\mu_{i_*}\right)\right)\,\middle|\,\mathscr{F}_{t-1}\right]
\le e^{4dL\lambda^2}.
\]
\end{Lem}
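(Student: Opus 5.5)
The plan is to write the centered increment as a sum of martingale differences, bound each conditional moment generating function by a deterministic constant, and chain these bounds with the tower property. Set
\[
\hat L_{t+L,i_*}-\hat L_{t,i_*}-L\mu_{i_*}=\sum_{r=0}^{L-1}X_r,\qquad X_r:=\hat{\ell}_{t+r,i_*}-\mu_{i_*},
\]
so that $\E[X_r\mid\mathscr{F}_{t+r-1}]=0$. I will show that, for every $0\le r\le L-1$ and every $0\le\lambda\le 1/(4d)$,
\[
\E\left[e^{\lambda X_r}\,\middle|\,\mathscr{F}_{t+r-1}\right]\le e^{4d\lambda^2}\quad\text{a.s.}
\]
Conditioning successively on $\mathscr{F}_{t+L-2},\dots,\mathscr{F}_{t-1}$ and peeling off one factor at a time then gives the claimed bound $e^{4dL\lambda^2}$.

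\textbf{Step 1: a pathwise bound on the importance weights.} Since $D_t\le 0$ and $4dL\le\sqrt{dt}/(4\alpha)$, Lemma~\ref{lem:block_deterministic_control}~(2) gives $D_{m+1}-D_m\le 4d$ for $m=t,\dots,t+L-1$. Hence, for $m=t+r$,
\[
D_m^+\le 4dr\le 4dL\le \frac{\sqrt{dt}}{4\alpha}\le\frac{2\sqrt{dm}}{\alpha}.
\]
This bound holds deterministically on the block, not merely in expectation. Lemma~\ref{4-2} then gives
\[
p_{m,i_*}^{-1}\le\Bigl(\sqrt d+\frac{\alpha D_m^+}{2\sqrt m}\Bigr)^2\le 4d.
\]
Consequently $\hat\ell_{m,i_*}\le 4d$. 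This is the same computation already carried out at the end of the proof of Lemma~\ref{lem:block_deterministic_control}~(2).

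\textbf{Step 2: the one-step MGF bound.} From Step 1, each increment satisfies $-1\le X_r\le 4d$. Together with $\lambda\le 1/(4d)$ this gives $\lambda X_r\le 1$. By \eq{secondell} and Step 1,
\[
\E[X_r^2\mid\mathscr{F}_{t+r-1}]\le \E[\hat\ell_{t+r,i_*}^2\mid\mathscr{F}_{t+r-1}]\le \mu_{i_*}/p_{t+r,i_*}\le 4d.
\]
I will then use the elementary inequality $e^y\le 1+y+y^2$, valid for $y\le 1$; equivalently, I invoke Lemma~\ref{bounded}, which appears to encode exactly this fact. Combined with the zero conditional mean, it yields
\[
\E[e^{\lambda X_r}\mid\mathscr{F}_{t+r-1}]\le 1+\lambda^2\,\E[X_r^2\mid\mathscr{F}_{t+r-1}]\le e^{4d\lambda^2}.
\]
This is the one-step bound stated above, and the tower-property iteration completes the proof.

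The only delicate point is Step 1. The moment bounds must hold almost surely at every intermediate time $t+r$, not just at time $t$, so that the tower-property iteration applies with deterministic constants. This is exactly why I rely on the deterministic drift control of Lemma~\ref{lem:block_deterministic_control}~(2), which requires $D_t\le 0$. The lower bound $D_t\ge-\sqrt{dt}/\alpha$ in the hypothesis is not needed for this argument.
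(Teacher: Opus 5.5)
Your proposal is correct and follows essentially the same route as the paper: Lemma~\ref{lem:block_deterministic_control}~(2) and Lemma~\ref{4-2} give the pathwise bound $p_{t+r,i_*}^{-1}\le 4d$ on the block, and the one-step MGF bound of Lemma~\ref{bounded} is then chained via the tower property. Your remark that the hypothesis $D_t\ge-\sqrt{dt}/\alpha$ is not used is also accurate for the paper's argument.
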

\begin{proof}
Since $D_t\le0$, Lemma~\ref{lem:block_deterministic_control} (2) gives, for every $0\le r<L$,
\[
D_{t+r}\le D_t+4dr\le 4dL.
\]
Since $4dL\le\sqrt{dt}/(4\alpha)$, Lemma~\ref{4-2} implies
\[
\frac1{p_{t+r,i_*}}
\le\left(\sqrt d+\frac{\alpha D_{t+r}^+}{2\sqrt{t+r}}\right)^2
\le4d.
\]
Thus $|\hat{\ell}_{t+r,i_*}-\mu_{i_*}|\le 4d$ and
\[
\E\left[(\hat{\ell}_{t+r,i_*}-\mu_{i_*})^2\,\middle|\,\mathscr{F}_{t+r-1}\right]
\le\E\left[\hat{\ell}_{t+r,i_*}^2\,\middle|\,\mathscr{F}_{t+r-1}\right]
\le 4d.
\]
Lemma~\ref{bounded} and the tower property now give the result.
\end{proof}

\section{Auxiliary Lemmas}
\begin{Lem}[Comparison between the two branches of $g_t$]
\label{lem:comparison_gn}
If $\theta=2-q>0$, then the following hold.

\begin{enumerate}[(1)]
\item If
\(
t \ge \frac{2}{\Delta\lambda},
\)
then for every $y\ge 0$,
\begin{equation}
t^{-\theta}(\Delta t+y)^2 \le \Delta^2 t^q e^{\lambda y}.
\label{eq:lem31-item1}
\end{equation}

\item Fix $R>0$. If
\(
t \ge \frac{R+2/\lambda}{\Delta},
\)
then for every $z\in[-R,0]$,
\begin{equation}
\Delta^2 t^q e^{\lambda z} \le t^{-\theta}(\Delta t+z)^2.
\label{eq:lem31-item2}
\end{equation}
\end{enumerate}
\end{Lem}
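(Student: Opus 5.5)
Both items reduce to elementary one-variable inequalities once the time factors are normalized. Since $q=2-\theta$, dividing both sides of either inequality by $t^{-\theta}$ turns $t^q$ into $t^{2}$. Item (1) then becomes $(\Delta t+y)^2\le \Delta^2t^2e^{\lambda y}$ for $y\ge0$. Item (2) becomes $\Delta^2t^2e^{\lambda z}\le(\Delta t+z)^2$ for $z\in[-R,0]$. Writing $u:=y/(\Delta t)$, respectively $u:=z/(\Delta t)$, and taking square roots (both sides are nonnegative), the targets are
\[
1+u\le e^{\lambda\Delta t\,u/2},\qquad u\ge 0,
\]
for (1), and
\[
e^{\lambda\Delta t\,u/2}\le 1+u,\qquad u\in\Bigl[-\tfrac{R}{\Delta t},0\Bigr],
\]
for (2). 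In (2), $1+u\ge 0$ follows from $t\ge R/\Delta$, which is implied by the hypothesis.

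For item (1), set $c:=\lambda\Delta t/2$; the hypothesis $t\ge 2/(\Delta\lambda)$ is exactly $c\ge1$. Then $e^{cu}\ge e^{u}\ge 1+u$ for $u\ge0$, and the claim follows by squaring and multiplying back by $t^{-\theta}$.

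For item (2), work directly in $z$ rather than $u$. Let $\phi(z):=\log(\Delta t+z)-\log(\Delta t)-\lambda z/2$ on $[-R,0]$, so the target is $\phi\ge 0$. We have $\phi(0)=0$ and $\phi'(z)=\frac{1}{\Delta t+z}-\frac{\lambda}{2}$. Since $\Delta t+z\le \Delta t$, we get $\phi'(z)\ge \frac1{\Delta t}-\frac\lambda2$. This lower bound on $\phi'$ is not sign-definite, so monotonicity of $\phi$ is not immediate. Instead, use concavity: $\phi$ is concave, since $\log$ is concave and the remaining term is linear, so its minimum over $[-R,0]$ is attained at an endpoint. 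It therefore suffices to check $\phi(-R)\ge0$, i.e.
\[
\log\Bigl(1-\tfrac{R}{\Delta t}\Bigr)\ge -\tfrac{\lambda R}{2}.
\]
This is where the hypothesis $t\ge (R+2/\lambda)/\Delta$ enters. It gives $\Delta t-R\ge 2/\lambda$, hence $\frac{R}{\Delta t-R}\le \frac{\lambda R}{2}$. Combined with $\log(1-a)\ge -\frac{a}{1-a}$ for $a\in[0,1)$, applied with $a=R/(\Delta t)$, this yields
\[
\log\Bigl(1-\tfrac{R}{\Delta t}\Bigr)\ge -\tfrac{R}{\Delta t-R}\ge -\tfrac{\lambda R}{2}.
\]
Exponentiating, squaring, and multiplying by $t^{-\theta}$ gives (2).

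The only step needing care is item (2): the exponential sits on the smaller side there, so a crude bound like $e^x\ge 1+x$ points the wrong way. The concavity and endpoint reduction, together with the inequality $\log(1-a)\ge -a/(1-a)$, resolves it. The condition $\theta>0$ is used only to define $q=2-\theta$ consistently and plays no further role.
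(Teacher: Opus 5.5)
Your proof is correct and follows essentially the same route as the paper's. Item (1) reduces to $\log(1+x)\le x$ (your $e^u\ge 1+u$) together with $\lambda\Delta t/2\ge 1$. Item (2) uses concavity of the logarithm to reduce to the endpoint $z=-R$, where $-\log(1-R/(\Delta t))=\log\bigl(1+R/(\Delta t-R)\bigr)\le R/(\Delta t-R)\le \lambda R/2$.
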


\begin{proof}
For the first result, since $\theta=2-q$, we have
\[
t^{-\theta}(\Delta t+y)^2
=
t^{q-2}(\Delta t+y)^2
=
\Delta^2 t^q\left(1+\frac{y}{\Delta t}\right)^2.
\]
Therefore \eq{eq:lem31-item1} is equivalent to
\[
2\log\left(1+\frac{y}{\Delta t}\right)\le \lambda y.
\]
Then it suffices to note that $\log(1+x)\leq x, x>-1.$

For the second part, again using $\theta=2-q$, inequality \eq{eq:lem31-item2} is equivalent to
\[
\lambda z \le 2\log\left(1+\frac{z}{\Delta t}\right),
\qquad z\in[-R,0].
\]
By the concavity of the logarithmic function, it suffices to show the case when $z=-R$, which is equivalent to
\[
\lambda R\ge -2\log\left(1-\frac{R}{\Delta t}\right).
\]
When \(
t \ge \frac{R+2/\lambda}{\Delta},
\) the right-hand side is less than $2\log(1+\frac{\lambda R}2)$. Then it suffices to use that $\log(1+x)\leq x, x>-1$ again.
\end{proof}
\begin{Lem}\label{ht}
Let \((X_t)_{t\ge 1}\) be a stochastic process adapted to a filtration \((\mathscr F_t)_{t\ge 1}\). Suppose there exist a sequence of nonnegative random variables \((H_t)_{t\ge 1}\) and a sequence of sets \((\mathcal{M}_t)_{t\ge 1}\) with \(\mathcal{M}_t\subset \mathbb R\) such that for every \(t\ge 1\),
\[
\mathbb E\!\left[H_{t+1}\,\middle|\, \mathscr F_t\right]\le H_t
\qquad \text{on the event }\{X_t\in \mathcal{M}_t\}.
\]
Define the stopping time
\[
\tau:=\inf\{t\ge 1:\, X_t\notin \mathcal{M}_t\},
\]
with the convention \(\inf\varnothing=+\infty\). Then the stopped process
\[
\bigl(H_{t\wedge \tau}\bigr)_{t\ge 1}
\]
is a supermartingale with respect to \((\mathscr F_t)\).

Moreover, define recursively a sequence of stopping times \((\sigma_k,\tau_k)_{k\ge 0}\) by
\[
\sigma_0:=1,
\qquad
\tau_k:=\inf\{t\ge \sigma_k:\, X_t\notin \mathcal{M}_t\},
\]
and, for \(k\ge 1\),
\[
\sigma_k:=\inf\{t>\tau_{k-1}:\, X_t\in \mathcal{M}_t\},
\]
again with the convention \(\inf\varnothing=+\infty\). Then, for every \(t\ge 1\),
\[
\mathbb E[H_t\1_{\{X_t\in\mathcal{M}_t\}}]
\le
\sum_{k=0}^{+\infty}\mathbb E\!\left[H_{\sigma_k}\1_{\{\sigma_k<\infty\}}\right].
\]
\end{Lem}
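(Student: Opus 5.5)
The plan is to prove the two assertions of Lemma~\ref{ht} separately. The stopped-process claim is a standard one-step check. The series bound then follows by decomposing the time axis into the excursions $[\sigma_k,\tau_k)$ and applying the first claim to each excursion, restarted at $\sigma_k$.

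For the first claim, fix $t\ge1$ and write
\[
H_{(t+1)\wedge\tau}=H_{t\wedge\tau}\1_{\{\tau\le t\}}+H_{t+1}\1_{\{\tau>t\}}.
\]
The event $\{\tau>t\}=\bigcap_{s\le t}\{X_s\in\mathcal{M}_s\}$ lies in $\mathscr F_t$, and on it $X_t\in\mathcal{M}_t$. Hence
\[
\E[H_{t+1}\1_{\{\tau>t\}}\mid\mathscr F_t]=\1_{\{\tau>t\}}\E[H_{t+1}\mid\mathscr F_t]\le \1_{\{\tau>t\}}H_t=\1_{\{\tau>t\}}H_{t\wedge\tau}.
\]
The first term is $\mathscr F_t$-measurable, since $H_{t\wedge\tau}\1_{\{\tau\le t\}}=\sum_{s\le t}H_s\1_{\{\tau=s\}}$; this uses that $H$ is adapted, which I would add as an implicit assumption (in the application $H_j=g_{s_j}(D_{s_j})$ is $\mathscr{G}_j$-measurable). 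Because $H\ge0$, conditional expectations are well defined even without integrability, so the stopped process is a (generalized) supermartingale. The same argument applied to $\tau_k$ started from the stopping time $\sigma_k$ shows that $\E[H_{t\wedge\tau_k}\1_{\{\sigma_k\le t\}}]$ is nonincreasing in $t$ for $t\ge\sigma_k$, starting from $\E[H_{\sigma_k}\1_{\{\sigma_k<\infty\}}]$. Concretely, for $t\ge s$ I would compare $H_{(t+1)\wedge\tau_k}\1_{\{\sigma_k\le s\}}$ with $H_{t\wedge\tau_k}\1_{\{\sigma_k\le s\}}$ and sum over $s$.

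For the series bound, fix $t$ and note that on $\{X_t\in\mathcal{M}_t\}$ there is a unique $k$ with $\sigma_k\le t<\tau_k$. Indeed, the intervals $[\sigma_k,\tau_k)$ are disjoint and cover exactly the indices $s$ with $X_s\in\mathcal{M}_s$: $\sigma_0=1$, and each index in $\mathcal{M}$ either lies in the current excursion or starts a new one. On that event $H_t=H_{t\wedge\tau_k}\1_{\{\sigma_k\le t<\tau_k\}}$. Therefore
\[
\E[H_t\1_{\{X_t\in\mathcal{M}_t\}}]=\sum_{k\ge0}\E\bigl[H_{t\wedge\tau_k}\1_{\{\sigma_k\le t<\tau_k\}}\bigr]\le\sum_{k\ge0}\E\bigl[H_{t\wedge\tau_k}\1_{\{\sigma_k\le t\}}\bigr].
\]
It then suffices to show
\[
\E[H_{t\wedge\tau_k}\1_{\{\sigma_k\le t\}}]\le\E[H_{\sigma_k}\1_{\{\sigma_k<\infty\}}]
\]
for each $k$.

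The main step is this last inequality, which is optional stopping for a restarted, stopped nonnegative supermartingale. I would prove it by decomposing over $\{\sigma_k=s\}\in\mathscr F_s$ for $s\le t$. On $\{\sigma_k=s\}$, the process $(H_{u\wedge\tau_k})_{u\ge s}$ satisfies
\[
\E[H_{(u+1)\wedge\tau_k}\1_{\{\sigma_k=s\}}\mid\mathscr F_u]\le H_{u\wedge\tau_k}\1_{\{\sigma_k=s\}},
\]
by the same computation as in the first part, because on $\{\tau_k>u\}\cap\{\sigma_k=s\}$ we have $X_u\in\mathcal{M}_u$. Iterating from $u=s$ to $u=t-1$ with the tower property, which is valid for nonnegative variables by monotone convergence, gives
\[
\E[H_{t\wedge\tau_k}\1_{\{\sigma_k=s\}}]\le\E[H_s\1_{\{\sigma_k=s\}}].
\]
Summing over $s\le t$ yields the claim and completes the proof. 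The only delicate point is bookkeeping with possibly infinite expectations. This is harmless: all quantities are nonnegative, so every inequality holds in $[0,\infty]$.
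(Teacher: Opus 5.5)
Your proof is correct and follows the paper's route: the same one-step computation for the stopped process, then the partition of $\{X_t\in\mathcal M_t\}$ into the disjoint excursion events $\{\sigma_k\le t<\tau_k\}$, with a supermartingale bound on each excursion restarted at $\sigma_k$. Your decomposition over $\{\sigma_k=s\}$, iterated from $u=s$ to $t-1$, is a more careful version of the paper's step ``apply the first part conditionally on $\mathscr F_{\sigma_k}$''. You are also right that adaptedness of $H$ is an implicit hypothesis: the paper uses it too, since it needs $H_\tau\1_{\{\tau\le t\}}$ to be $\mathscr F_t$-measurable.
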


\begin{proof}
We first prove the supermartingale claim. Fix \(t\ge 1\). Since \(\{\tau\le t\}\in \mathscr F_t\), we have
\[
H_{(t+1)\wedge \tau}
=
H_{\tau}\1_{\{\tau\le t\}}+H_{t+1}\1_{\{\tau>t\}}.
\]
Taking conditional expectation with respect to \(\mathscr F_t\), we obtain
\[
\mathbb E\!\left[H_{(t+1)\wedge \tau}\,\middle|\, \mathscr F_t\right]
=
H_{\tau}\1_{\{\tau\le t\}}
+
\1_{\{\tau>t\}}
\mathbb E\!\left[H_{t+1}\,\middle|\, \mathscr F_t\right].
\]
On the event \(\{\tau>t\}\), by definition of \(\tau\), we have \(X_t\in \mathcal{M}_t\). Hence, by the assumed one-step inequality,
\[
\1_{\{\tau>t\}}
\mathbb E\!\left[H_{t+1}\,\middle|\, \mathscr F_t\right]
\le
\1_{\{\tau>t\}}\, H_t.
\]
Substituting this into the previous display gives
\[
\mathbb E\!\left[H_{(t+1)\wedge \tau}\,\middle|\, \mathscr F_t\right]
\le
H_{\tau}\1_{\{\tau\le t\}}
+
H_t\1_{\{\tau>t\}}
=
H_{t\wedge \tau}.
\]
Therefore, \(\bigl(H_{t\wedge \tau}\bigr)_{t\ge 1}\) is a supermartingale.

We now prove the additional estimate. Note that the intervals \([\sigma_k,\tau_k)\) are pairwise disjoint and 
\[
\left\{X_t\in\mathcal{M}_t\right\}=\bigcup_{k=0}^{+\infty}\{\sigma_k\le t<\tau_k\},
\]
hence,
\[
\mathbb E[H_t\1_{\{X_t\in\mathcal{M}_t\}}]
=
\sum_{k=0}^{+\infty}\mathbb E\!\left[H_t\,\1_{\{\sigma_k\le t<\tau_k\}}\right].
\]

Fix \(k\ge 0\). On the event \(\{\sigma_k<+\infty\}\), consider the shifted process \((\mathscr F_{\sigma_k+s})_{s\ge 1}\). By the first part of the lemma applied conditionally on \(\mathscr F_{\sigma_k}\), the stopped process
\(
\bigl(H_{(\sigma_k+s)\wedge \tau_k}\bigr)_{s\ge 0}
\)
is a supermartingale with respect to \((\mathscr F_{\sigma_k+s})_{s\ge 1}\). In particular, since on the event \(\{\sigma_k\le t<\tau_k\}\) we have \(t\wedge \tau_k=t\), it follows that
\[
\mathbb E\!\left[H_t\,\1_{\{\sigma_k\le t<\tau_k\}}\,\middle|\, \mathscr F_{\sigma_k}\right]
\leq
\1_{\{\sigma_k\le t\}}\,
\mathbb E\!\left[H_{t\wedge \tau_k}
% \1_{\{t<\tau_k\}}
\,\middle|\, \mathscr F_{\sigma_k}\right]
\le
\1_{\{\sigma_k\le t\}}\, H_{\sigma_k}.
\]
Taking expectations yields
\[
\mathbb E\!\left[H_t\,\1_{\{\sigma_k\le t<\tau_k\}}\right]
\le
\mathbb E\!\left[H_{\sigma_k}\1_{\{\sigma_k\le t\}}\right]
\le
\mathbb E\!\left[H_{\sigma_k}\1_{\{\sigma_k<+\infty\}}\right].
\]
Finally, it suffices to sum over $k\ge 0$.
\end{proof}

\begin{Lem}
\label{lem:standalone-sum-bound}
Let $q>0$, $\lambda>0$, $L>0$, and $N\ge 0$. Define \(
f(x):=x^q e^{-\lambda\sqrt{x}},x\ge 0.
\)
We have
\[
\sup_{x\ge 0} f(x)=
\left(\frac{2q}{e\lambda}\right)^{2q}.
\]
Then there exists a constant
$C_q>0$, depending only on $q$, such that
\[
\sum_{j=0}^{+\infty} (N+jL)^q e^{-\lambda\sqrt{N+jL}}
\le
C_q\,\lambda^{-2q}\left(1+\frac{1}{\lambda^2L}\right).
\]
\end{Lem}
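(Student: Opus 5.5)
The statement to prove is Lemma~\ref{lem:standalone-sum-bound}. It has two parts: the exact value of $\sup_{x\ge0}f(x)$, and a bound on the series. I will handle them in that order.

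For the supremum, I substitute $u=\sqrt{x}$. This gives $f=u^{2q}e^{-\lambda u}$ and $\log f = 2q\log u-\lambda u$, which is concave in $u>0$. The derivative $2q/u-\lambda$ vanishes at $u^*=2q/\lambda$. Evaluating there gives $f(x^*)=(2q/\lambda)^{2q}e^{-2q}=(2q/(e\lambda))^{2q}$. Since $f(0)=0$ and $f\to0$ as $u\to\infty$, this critical value is the supremum.

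For the series, I will compare the sum with an integral. The function $f$ is unimodal: increasing on $[0,x^*]$ and decreasing on $[x^*,\infty)$, with $x^*=(2q/\lambda)^2$. Write $a_j=f(N+jL)$. I use two standard consequences of unimodality.

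\begin{enumerate}[(a)]
\item For all $j$ with $N+jL\le x^*-L$, monotonicity gives $a_j\le \frac1L\int_{N+jL}^{N+(j+1)L}f$.
\item For all $j$ with $N+jL\ge x^*+L$, monotonicity gives $a_j\le\frac1L\int_{N+(j-1)L}^{N+jL}f$.
\end{enumerate}

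The indices not covered by (a) or (b) have $N+jL$ in a window of length $2L$ around $x^*$, so there are at most $3$ of them. Each of these terms is at most $\sup f$. The intervals in (a) and (b) overlap at most once. Hence
\[
\sum_{j\ge0}a_j\le 3\sup f+\frac{2}{L}\int_0^\infty f(x)\,\d x .
\]
To evaluate the integral, I substitute $x=u^2$:
\[
\int_0^\infty x^qe^{-\lambda\sqrt x}\,\d x=2\int_0^\infty u^{2q+1}e^{-\lambda u}\,\d u=2\Gamma(2q+2)\lambda^{-2q-2}.
\]
Combining the two pieces gives
\[
\sum_{j\ge0}a_j\le 3\left(\frac{2q}{e}\right)^{2q}\lambda^{-2q}+\frac{4\Gamma(2q+2)}{L}\lambda^{-2q-2}.
\]
This is at most $C_q\lambda^{-2q}\bigl(1+\frac{1}{\lambda^2L}\bigr)$ with $C_q=\max\{3(2q/e)^{2q},4\Gamma(2q+2)\}$, which is the claimed bound.

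The only delicate step is the bookkeeping of the boundary terms near the mode. A cruder but safe way to do it is to split the sum into terms with $N+jL\le x^*$, bounded via the increasing branch plus one extra $\sup f$, and terms with $N+jL> x^*$, bounded via the decreasing branch plus one extra $\sup f$. This also yields a constant depending only on $q$.
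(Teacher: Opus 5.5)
Your proof is correct and takes essentially the same route as the paper: locate the mode $x_*=(2q/\lambda)^2$, bound the $O(1)$ terms near the mode by $\sup f$, compare the decreasing tail with $\frac1L\int f$, and evaluate the integral via $x=u^2$ as $2\Gamma(2q+2)\lambda^{-2q-2}$. The only difference is on the increasing side: the paper counts its at most $1+x_*/L$ terms and bounds each by $\sup f$, whereas you use an integral comparison there as well. Your count of $3$ boundary terms and the factor $2$ in front of the integral are safe overestimates.
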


\begin{proof}
Let
\(
x_*:=\left(\frac{2q}{\lambda}\right)^2.
\)
For every $x>0$,
\[
f'(x)
=
e^{-\lambda\sqrt{x}}x^{q-1}
\left(q-\frac{\lambda}{2}\sqrt{x}\right).
\]
Hence $f$ is nonincreasing on $[x_*,+\infty)$. Moreover, 
\[
\sup_{x\ge 0} f(x)
=
f(x_*)
=
\left(\frac{2q}{e\lambda}\right)^{2q}.
\]

Set
\[
S:=\sum_{j=0}^{+\infty} f(N+jL).
\]
We split the sum into the part below $x_*$ and the tail above $x_*$. First consider the indices $j\ge 0$ such that $N+jL\le x_*$. Their number is at most
\(
1+\frac{x_*}{L}.
\)
Therefore,
\[
\sum_{\{j:\,N+jL\le x_*\}} f(N+jL)
\le
\left(1+\frac{x_*}{L}\right)\sup_{x\ge 0} f(x)
=
\left(1+\frac{(2q/\lambda)^2}{L}\right)\left(\frac{2q}{e\lambda}\right)^{2q}.
\]

Now consider the remaining indices, i.e. those with $N+jL>x_*$. Let
\[
j_0:=\min\{j\ge 0:\ N+jL>x_*\}.
\]
Then
\[
\sum_{\{j:\,N+jL>x_*\}} f(N+jL)
=
f(N+j_0L)+\sum_{m=1}^{+\infty} f(N+(j_0+m)L).
\]
Since $N+j_0L>x_*$ and $f$ is nonincreasing on $[x_*,+\infty)$, we have
\[
f(N+j_0L)\le \sup_{x\ge 0} f(x)
=
\left(\frac{2q}{e\lambda}\right)^{2q},
\]
and also
\[
\sum_{m=1}^{+\infty} f(N+(j_0+m)L)
\le
\frac{1}{L}\int_{N+j_0L}^{+\infty} f(x)\,\d x
\le
\frac{1}{L}\int_{x_*}^{+\infty} x^q e^{-\lambda\sqrt{x}}\,\d x.
\]
With the change of variables
\(
y=\lambda\sqrt{x},
\)
we obtain
\[
\int_{x_*}^{+\infty} x^q e^{-\lambda\sqrt{x}}\,\d x
=
2\lambda^{-(2q+2)}\int_{2q}^{+\infty} y^{2q+1}e^{-y}\,dy
\le
2\Gamma(2q+2)\lambda^{-(2q+2)}.
\]
Hence
\[
\sum_{\{j:\,N+jL>x_*\}} f(N+jL)
\le
\left(\frac{2q}{e}\right)^{2q}\lambda^{-2q}
+
\frac{2\Gamma(2q+2)}{L}\lambda^{-(2q+2)}.
\]

Combining the two parts, we conclude that
\[
S
\le
2\left(\frac{2q}{e}\right)^{2q}\lambda^{-2q}
+
\left[
(2q)^2\left(\frac{2q}{e}\right)^{2q}
+
2\Gamma(2q+2)
\right]
\frac{\lambda^{-(2q+2)}}{L}.
\]
% Therefore the desired estimate holds with
% \[
% C_q
% :=
% 2\left(\frac{2q}{e}\right)^{2q}
% +
% (2q)^2\left(\frac{2q}{e}\right)^{2q}
% +
% 2\Gamma(2q+2).
% \]
This proves the lemma.
\end{proof}

\begin{Lem}[First two moments from a one-sided tail bound]
\label{lem:max_one_sided_tail_moments}
Let $\mathscr G$ be a sigma-field, let $m\ge1$, and let $Z_1,\dots,Z_m$ be real-valued random variables. Suppose that, for some $v>0$, every $i\in[m]$ and $x\ge0$ satisfy
\[
\P\left(Z_i\ge x\,\middle|\,\mathscr G\right)
\le \exp\left(-\frac{x^2}{2(v+x/3)}\right)
\qquad\text{almost surely}.
\]
No independence is required. Define
\[
M_m:=\left(\max_{1\le i\le m}Z_i\right)^+.
\]
There exists a universal constant $C>0$ such that
\[
\E[M_m\mid\mathscr G]
\le C\left(\sqrt{v\log(2m)}+\log(2m)\right)
\]
and
\[
\E[M_m^2\mid\mathscr G]
\le C\left(v\log(2m)+(\log(2m))^2\right).
\]
\end{Lem}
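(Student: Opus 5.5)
The plan is to derive both moment bounds from a tail bound for the maximum, obtained by a union bound and then integrated. Work conditionally on $\mathscr G$ throughout; all statements below hold almost surely.

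For $x\ge0$ the event $\{M_m\ge x\}$ is contained in $\bigcup_i\{Z_i\ge x\}$. Hence
\[
\P(M_m\ge x\mid\mathscr G)\le \min\left\{1,\ m\exp\left(-\frac{x^2}{2(v+x/3)}\right)\right\}.
\]
Next I split the exponent into its sub-Gaussian and sub-exponential regimes. Since $v+x/3\le 2\max\{v,x/3\}$, we get
\[
\frac{x^2}{2(v+x/3)}\ge \min\left\{\frac{x^2}{4v},\ \frac{3x}{4}\right\}.
\]
Therefore
\[
\P(M_m\ge x\mid\mathscr G)\le \min\{1,\ m e^{-x^2/(4v)}+m e^{-3x/4}\}.
\]

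Set the threshold $x_0:=2\sqrt{v\log(2m)}+\tfrac43\log(2m)$. For $x\ge x_0$ both terms $m e^{-x^2/(4v)}$ and $m e^{-3x/4}$ are at most $\tfrac12$ times a decaying factor. More precisely, write $x=x_0+y$ with $y\ge0$. Then $x^2/(4v)\ge \log(2m)+y^2/(4v)$ and $3x/4\ge\log(2m)+3y/4$. So the tail beyond $x_0$ is bounded by $\tfrac12(e^{-y^2/(4v)}+e^{-3y/4})$.

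Now use the layer-cake formulas $\E[M_m\mid\mathscr G]=\int_0^\infty \P(M_m\ge x\mid\mathscr G)\,\d x$ and $\E[M_m^2\mid\mathscr G]=\int_0^\infty 2x\,\P(M_m\ge x\mid\mathscr G)\,\d x$. Bound the probability by $1$ on $[0,x_0]$ and by the shifted tail on $[x_0,\infty)$.

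For the first moment this gives
\[
\E[M_m\mid\mathscr G]\le x_0+\int_0^\infty \tfrac12\left(e^{-y^2/(4v)}+e^{-3y/4}\right)\d y=x_0+O(\sqrt v+1).
\]
Since $\log(2m)\ge\log 2$, the terms $\sqrt v$ and $1$ are absorbed into $C(\sqrt{v\log(2m)}+\log(2m))$.

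For the second moment, split the integral the same way. The part on $[0,x_0]$ contributes at most $x_0^2$. For the part on $[x_0,\infty)$, write $2x=2(x_0+y)$; it contributes at most $x_0\,O(\sqrt v+1)+O(v+1)$. Then apply $x_0^2\le 8v\log(2m)+\tfrac{32}{9}(\log(2m))^2$ and $x_0\sqrt v\le C(v\log(2m)+\log^2(2m))$, using AM-GM and $\log(2m)\ge\log2$.

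No step is a real obstacle. The only care needed is the bookkeeping that absorbs the constant-order terms ($1$, $\sqrt v$, $v$) into the $\log(2m)$-weighted terms, which is exactly what the lower bound $\log(2m)\ge\log2$ (coming from $m\ge1$) provides. Independence is never used: the union bound requires none.
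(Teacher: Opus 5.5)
Your proof is correct and follows essentially the same strategy as the paper: a union bound over the $m$ coordinates, then integrating the tail beyond the level $\asymp\sqrt{v\log(2m)}+\log(2m)$, where $m$ times the Bernstein tail becomes $O(1)$. The only difference is bookkeeping. The paper inverts the Bernstein exponent exactly via $h(u)=\sqrt{2vu}+2u/3$ and writes $M_m\le h(a+Y)$ with $\P(Y\ge y\mid\mathscr G)\le e^{-y}$, so both moments follow from $\E[Y\mid\mathscr G]\le 1$, $\E[Y^2\mid\mathscr G]\le 2$ and subadditivity of $\sqrt{\cdot}$. You instead split the exponent into $\min\{x^2/(4v),3x/4\}$, which loses only constants, and compute the Gaussian and exponential layer-cake integrals explicitly.
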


\begin{proof}
Set $a:=\log(2m)$ and $h(u):=\sqrt{2vu}+2u/3$. The function $h$ is strictly increasing from $[0,\infty)$ onto $[0,\infty)$. Since
\[
h(u)^2\ge2u\left(v+\frac{h(u)}3\right),
\]
the union bound gives, for every $y\ge0$,
\[
\P\left(M_m\ge h(a+y)\,\middle|\,\mathscr G\right)
\le m e^{-(a+y)}\le e^{-y}.
\]
Let $Y:=(h^{-1}(M_m)-a)^+$. Then
\[
\P(Y\ge y\mid\mathscr G)\le e^{-y},
\qquad
\E[Y\mid\mathscr G]\le1,
\qquad
\E[Y^2\mid\mathscr G]\le2.
\]
Moreover,
\[
M_m\le h(a+Y)
\le \sqrt{2va}+\sqrt{2vY}+\frac{2a}{3}+\frac{2Y}{3}.
\]
Taking conditional expectations and using conditional Jensen's inequality proves the first bound. Finally,
\[
h(a+Y)^2\le4v(a+Y)+\frac89(a+Y)^2.
\]
The bounds on the first two conditional moments of $Y$, together with $a\ge\log2$, prove the second bound.
\end{proof}

\begin{Lem}\label{bounded}
Let $\mathscr G$ be a sigma-field. If $Z$ is a random variable such that
\[
\E[Z\mid\mathscr G]=0,
\qquad
\E[Z^2\mid\mathscr G]\le\sigma^2,
\qquad
|Z|\le M \quad\text{almost surely},
\]
where $M>0$, then, for every $0\le\lambda\le1/M$,
\[
\E\left[e^{\lambda Z}\,\middle|\,\mathscr G\right]
\le e^{\lambda^2\sigma^2}.
\]
\end{Lem}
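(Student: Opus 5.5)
This is the standard Bernstein-type bound for the moment generating function of a bounded, mean-zero random variable. The plan is to expand the exponential and control the higher-order terms pointwise using the bound on $|Z|$.

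Fix $0\le\lambda\le 1/M$ and set $u:=\lambda Z$. Then $|u|\le\lambda M\le 1$ almost surely. The key elementary inequality is
\[
e^{u}\le 1+u+u^2 \qquad\text{for all } u\le 1.
\]
To justify it, write $e^{u}=1+u+u^2\phi(u)$ with $\phi(u)=\sum_{k\ge2}u^{k-2}/k!$. The function $\phi$ is nondecreasing on $\mathbb{R}$, since $\phi(u)=(e^u-1-u)/u^2$ and this is a standard monotone function. Hence for $u\le 1$ we get $\phi(u)\le\phi(1)=e-2<1$. Only the one-sided condition $u\le1$ is actually needed here, so $|Z|\le M$ is more than enough.

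Next, take conditional expectations given $\mathscr G$. Since $u$ and $u^2$ are bounded, their conditional expectations are well defined. Using $\E[Z\mid\mathscr G]=0$ and $\E[Z^2\mid\mathscr G]\le\sigma^2$, we obtain
\[
\E[e^{\lambda Z}\mid\mathscr G]\le 1+\lambda\,\E[Z\mid\mathscr G]+\lambda^2\E[Z^2\mid\mathscr G]\le 1+\lambda^2\sigma^2 .
\]
Finally, $1+a\le e^{a}$ for every real $a$, which gives the claimed bound $e^{\lambda^2\sigma^2}$ almost surely.

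The only step needing any care is the monotonicity of $\phi$ on the negative axis. It can be checked by noting that $\phi(u)=\int_0^1(1-s)e^{su}\,\d s$. Each integrand $(1-s)e^{su}$ is nondecreasing in $u$ for $s\in[0,1]$, so $\phi$ is nondecreasing everywhere. Moreover $\phi(1)=\int_0^1(1-s)e^{s}\,\d s=e-2$, which confirms the value used above.
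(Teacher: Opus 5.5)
Your proof is correct and follows the paper's argument: the pointwise bound $e^{a}\le 1+a+a^2$ for $\lvert a\rvert\le 1$, then conditional expectation using $\E[Z\mid\mathscr G]=0$ and $\E[Z^2\mid\mathscr G]\le\sigma^2$, then $1+a\le e^{a}$. Your integral-remainder argument, which gives $\phi(1)=e-2<1$, proves the elementary inequality that the paper states without proof.
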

\begin{proof}
For every $-1\le a\le1$, we have $e^a\le1+a+a^2$. Since $|\lambda Z|\le1$,
\[
\E\left[e^{\lambda Z}\,\middle|\,\mathscr G\right]
\le1+\lambda\E[Z\mid\mathscr G]
+\lambda^2\E[Z^2\mid\mathscr G]
\le1+\lambda^2\sigma^2
\le e^{\lambda^2\sigma^2}.
\]
\end{proof}

\begin{Lem}
\label{lem:one_plus_u_negative_theta}
Let $\theta>0$. Then for every $u\in[0,1]$,
\[
(1+u)^{-\theta}\le 1-\theta u+\frac{\theta(\theta+1)}{2}u^2.
\]
\end{Lem}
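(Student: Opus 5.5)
The plan is to treat this as an elementary Taylor-type inequality. Define $\varphi(u):=1-\theta u+\frac{\theta(\theta+1)}{2}u^2-(1+u)^{-\theta}$ on $[0,1]$ and show $\varphi\ge0$. The natural route is Taylor's theorem with remainder to third order around $u=0$.

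First, compute the derivatives of $f(u)=(1+u)^{-\theta}$:
\[
f(0)=1,\qquad f'(0)=-\theta,\qquad f''(0)=\theta(\theta+1),
\]
and
\[
f'''(u)=-\theta(\theta+1)(\theta+2)(1+u)^{-\theta-3}<0 \quad\text{for all } u\ge0,
\]
since $\theta>0$. Taylor's formula with Lagrange remainder then gives, for some $\xi\in(0,u)$,
\[
f(u)=1-\theta u+\frac{\theta(\theta+1)}{2}u^2+\frac{f'''(\xi)}{6}u^3 .
\]
The remainder is nonpositive for $u\ge0$, which yields the claimed inequality. The argument in fact works for every $u\ge0$, not only $u\in[0,1]$.

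As a sanity check, or as an alternative if one prefers to avoid the remainder form, note that $\varphi(0)=\varphi'(0)=\varphi''(0)=0$ and $\varphi'''(u)=-f'''(u)>0$. Integrating three times shows $\varphi\ge0$ on $[0,\infty)$.

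I do not expect any real obstacle. The only point requiring care is the sign of $f'''$, which is where the hypothesis $\theta>0$ enters: it guarantees that $\theta(\theta+1)(\theta+2)>0$.
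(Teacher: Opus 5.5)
Your proof is correct and is essentially the paper's argument. The paper applies Taylor's theorem at second order with Lagrange remainder and bounds $f''(\xi)=\theta(\theta+1)(1+\xi)^{-\theta-2}\le\theta(\theta+1)$; this uses the same monotonicity of $f''$ that you encode as $f'''<0$ at third order, and, as you note, either version holds for all $u\ge 0$.
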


\begin{proof}
Define
\[
f(u):=(1+u)^{-\theta}, \qquad u\in[0,1].
\]
Then
\[
f'(u)=-\theta(1+u)^{-\theta-1},
\qquad
f''(u)=\theta(\theta+1)(1+u)^{-\theta-2}.
\]
Since $u\in[0,1]$, we have $1+u\ge 1$, and therefore
\[
f''(u)\le \theta(\theta+1).
\]

Applying Taylor's theorem at $0$, for each $u\in[0,1]$ there exists some
$\xi\in(0,u)$ such that
\[
(1+u)^{-\theta}
=1-\theta u+\frac{f''(\xi)}{2}u^2
\le
1-\theta u+\frac{\theta(\theta+1)}{2}u^2.
\]
This completes the proof.
\end{proof}

\begin{Lem}\label{lem:zm-superlinear}
Let $r>0$ and $\alpha>0$, and suppose the sequence $(z_m)_{m\ge1}$ satisfies that 
\[
z_{m+1}\ge z_m+r\left(1+\frac{\alpha z_m}{2\sqrt m}\right)^2,\qquad m\ge1,
\]
with $z_1=0$. Then
\[
\frac{z_m}{m}\to+\infty \qquad \text{as } m\to+\infty.
\]
\end{Lem}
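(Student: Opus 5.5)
The plan is to compare $(z_m)$ with a sequence that grows at least like $m^{\beta}$ for some $\beta>1$, and then show that the quadratic term in the recursion takes over. First, since every increment is at least $r>0$, induction gives $z_m\ge r(m-1)$ for all $m\ge1$; in particular $z_m\ge0$ and $(z_m)$ is nondecreasing. Substituting this into the recursion and discarding the cross term and the constant term in $(1+\frac{\alpha z_m}{2\sqrt m})^2$ gives
\[
z_{m+1}-z_m\ \ge\ \frac{r\alpha^2 z_m^2}{4m}.
\]
This inequality is the main tool: once $z_m/m$ is of order $c$, the increment is of order $c^2 m$, which is superlinear.

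\emph{Bootstrapping step.} Set $c:=r\alpha^2/4$ and let $u_m:=z_m/m$, which is bounded below by $r(m-1)/m\ge r/2$ for $m\ge2$. Suppose, for contradiction, that $\liminf_m u_m=K<\infty$. I would first get that $u_m$ is eventually bounded away from $0$ by $u_0:=r/2$. Then the increment satisfies $z_{m+1}-z_m\ge c\,u_m^2 m\ge c u_0^2 m$, and summing from $m_0$ to $m-1$ gives $z_m\ge c u_0^2 (m^2-m_0^2)/2$. Hence $u_m\ge c u_0^2 (m-m_0^2/m)/2\to\infty$, which contradicts $\liminf_m u_m<\infty$. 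Equivalently, we can argue directly: $z_m\ge r(m-1)$ and the displayed increment give $z_{m+1}-z_m\ge c r^2 (m-1)^2/m\ge c r^2 (m-2)$. Summing yields $z_m\ge c r^2 (m-2)(m-3)/2$, so $z_m/m\to+\infty$.

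\emph{Main obstacle.} There is none substantive. The only care needed is in the base inequality $z_m\ge r(m-1)$, which follows from $(1+\cdot)^2\ge1$ and $z_m\ge0$ by induction, and in handling small $m$ (for instance $m=1,2$) when bounding $(m-1)^2/m\ge m-2$, which holds for all $m\ge1$. I would present the direct two-line version: a linear lower bound, plugged into the quadratic term and then summed, gives a quadratic lower bound on $z_m$, which proves the claim.
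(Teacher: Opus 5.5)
Your proposal is correct and follows the paper's proof essentially verbatim: the linear bound $z_m\ge r(m-1)$, substituted into the quadratic term $\frac{r\alpha^2}{4}\frac{z_m^2}{m}$, gives increments of order $m$, and summing these yields a quadratic lower bound on $z_m$, hence $z_m/m\to+\infty$. The contradiction framing via $\liminf$ is unnecessary, and the summed bound $z_m\ge cr^2(m-2)(m-3)/2$ (for $m\ge 3$) comes out exactly if you use $(m-1)^2/m\ge (m-2)^+$ rather than $m-2$; neither point affects the conclusion.
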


\begin{proof}
Since
\[
z_{m+1}-z_m
\ge r\left(1+\frac{\alpha z_m}{2\sqrt m}\right)^2
\ge r,
\]
we have $z_m\ge r(m-1)$ for all $m\ge1$. Therefore,
\[
z_{m+1}-z_m
\ge \frac{r\alpha^2}{4}\frac{z_m^2}{m}
\ge \frac{r\alpha^2}{4}\frac{r^2(m-1)^2}{m}.
\]
Hence there exists a constant $c>0$ such that, for all sufficiently large $m$,
\[
z_{m+1}-z_m\ge c\,m.
\]
Summing this inequality yields $z_m\ge c' m^2$ for all sufficiently large $m$, for some $c'>0$. Consequently,
\[
\frac{z_m}{m}\ge c' m \to +\infty.
\]
This proves the claim.
\end{proof}

\begin{Lem}\label{lem:harmonic-log}
For any integers $1\le t \le k$, one has
\[
\log\!\left(\frac{k}{t}\right)\leq \sum_{s=t}^{k}\frac1s \le 1+\log\left(\frac{k}{t}\right).
\]
\end{Lem}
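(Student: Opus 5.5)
We compare the sum $\sum_{s=t}^{k} 1/s$ with the integral of the decreasing function $x\mapsto 1/x$, one inequality at a time. Since $1/x$ is decreasing on $(0,\infty)$, for every integer $s\ge 1$ we have
\[
\int_s^{s+1}\frac{\d x}{x}\le \frac1s\le \int_{s-1}^{s}\frac{\d x}{x}\quad(s\ge2).
\]

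\emph{Lower bound.} Summing the left inequality over $s=t,\dots,k$ gives
\[
\sum_{s=t}^{k}\frac1s\ge\int_t^{k+1}\frac{\d x}{x}=\log\frac{k+1}{t}\ge\log\frac kt .
\]

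\emph{Upper bound.} Isolate the first term $1/t\le 1$. For the remaining terms $s=t+1,\dots,k$, each satisfies $s\ge2$, so the right inequality applies. Summing it gives
\[
\sum_{s=t+1}^{k}\frac1s\le\int_t^{k}\frac{\d x}{x}=\log\frac kt .
\]
Adding the two pieces yields $\sum_{s=t}^k 1/s\le 1+\log(k/t)$.

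\emph{Edge case.} When $t=k$, the sum is $1/t$. The lower bound reads $0\le 1/t$ and the upper bound reads $1/t\le 1$, so both hold trivially.

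There is no real obstacle here. The only point needing care is not applying the right-hand comparison at $s=1$, since $\int_0^1 \d x/x$ diverges. Splitting off the first term handles this, because $1/t\le 1$ for any integer $t\ge1$.
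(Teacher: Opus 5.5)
Your proof is correct and follows the paper's argument: integral comparison with $1/x$. For the upper bound, the first term is bounded by $1/t\le 1$ and $\sum_{s=t+1}^{k}1/s\le\log(k/t)$; for the lower bound, $\sum_{s=t}^{k}1/s\ge\log\bigl((k+1)/t\bigr)\ge\log(k/t)$.
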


\begin{proof}
It suffices to note that
\[
\sum_{s=t+1}^{k}\frac1s
\le
\int_t^k \frac{\d x}{x}
=
\log\left(\frac{k}{t}\right), \quad\sum_{s=t}^k \frac{1}{s}
\ge
\int_t^{k+1} \frac{\d x}{x}
=
\log\!\left(\frac{k+1}{t}\right).
\]
\end{proof}
\begin{Lem}\label{lem:half-mean-lower-tail}
Let $X$ be a random variable such that $0\le X\le 1$ almost surely. Then
\[
\mathbb{P}\!\left(X\ge \frac{\mathbb{E}[X]}{2}\right)\ge \frac{\mathbb{E}[X]}{2}.
\]
\end{Lem}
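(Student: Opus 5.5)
Write $m:=\E[X]\in[0,1]$ and split $X$ according to whether it lies above or below the threshold $m/2$. The case $m=0$ is trivial, since the right-hand side is $0$, so assume $m>0$. The whole argument uses only the two almost-sure bounds $X\le 1$ and $X\ge 0$.

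The key step is the decomposition
\[
m=\E\bigl[X\1_{\{X\ge m/2\}}\bigr]+\E\bigl[X\1_{\{X< m/2\}}\bigr].
\]
On the first event I bound $X\le 1$. On the second event I bound $X<m/2$, which is legitimate because $X\ge0$ makes the integrand nonnegative. Together these give
\[
m\le \P\bigl(X\ge m/2\bigr)+\frac m2\,\P\bigl(X<m/2\bigr)\le \P\bigl(X\ge m/2\bigr)+\frac m2 .
\]
Rearranging yields $\P(X\ge m/2)\ge m/2$, which is the claim.

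This is a reverse-Markov (Paley--Zygmund type) inequality for bounded variables, so I do not expect any real obstacle; the argument is a single chain of estimates.

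In the paper the lemma is applied conditionally, to $\ell_{k,i_*}$ given the past. Because the losses are i.i.d.\ and independent of $\mathscr F_{k-1}$ and $I_k$, the unconditional statement above suffices there.
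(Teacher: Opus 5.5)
Your proof is correct and is essentially the paper's own argument: split on $\{X\ge m/2\}$, bound $X\le 1$ there and $X<m/2$ on the complement, and get $m\le \P(X\ge m/2)+m/2$. A minor quibble: the bound on $\{X<m/2\}$ does not actually need $X\ge 0$. Nonnegativity is used only to ensure $m\ge 0$, so that $\frac m2\,\P(X<m/2)\le \frac m2$.
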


\begin{proof}
Let $m:=\mathbb{E}[X]$. Since $X\le m/2$ on $\{X<m/2\}$ and $X\le 1$ on $\{X\ge m/2\}$,
\[
m=\mathbb{E}[X]
\le \frac{m}{2}+\mathbb{P}(X\ge m/2).
\]
Hence
\[
\mathbb{P}(X\ge m/2)\ge \frac{m}{2}.
\]
\end{proof}

% \begin{Lem}\label{lem:positive-a.s.-positive-mean}
% Let $(\Omega,\mathscr{F},\mathbb{P})$ be a probability space, and let $X:\Omega\to[0,\infty)$ be a random variable such that $X(\omega)>0$ for $\mathbb{P}$-almost every $\omega\in\Omega$. Then
% \[
% \mathbb{E}[X]>0.
% \]
% \end{Lem}

% \begin{proof}
% Since $X>0$ almost surely,
% \[
% \bigl\{X>1/n\bigr\}\uparrow \Omega \quad \text{up to a $\mathbb{P}$-null set}.
% \]
% Hence there exists some $n\ge1$ such that
% \[
% \mathbb{P}(X>1/n)>0.
% \]
% Therefore,
% \[
% \mathbb{E}[X]\ge \frac{1}{n}\,\mathbb{P}(X>1/n)>0.
% \]
% \end{proof}

\begin{Lem}\label{lem:positive-a.s.-positive-mean}
Let \(X\) be a nonnegative random variable on a probability space
\((\Omega,\mathcal F,\mathbb P)\). If \(X\) is not almost surely zero, i.e.,
\[
    \mathbb P(X>0)>0,
\]
then
\[
    \mathbb E[X]>0.
\]
\end{Lem}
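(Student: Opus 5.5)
Since $X\ge 0$, the event $\{X>0\}$ is the increasing union $\bigcup_{n\ge1}\{X>1/n\}$. We pass from $\mathbb P(X>0)>0$ to positivity of some level set $\{X>1/n\}$, and then bound the expectation from below by a Markov-type inequality.

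First, continuity of probability from below gives
\[
\mathbb P(X>1/n)\uparrow \mathbb P(X>0)>0
\qquad\text{as } n\to\infty .
\]
Hence there exists an index $n_0\ge1$ such that $\mathbb P(X>1/n_0)>0$.

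Second, since $X\ge0$, we have pointwise
\[
X\ \ge\ \tfrac{1}{n_0}\,\mathbb 1_{\{X>1/n_0\}}.
\]
Taking expectations, using monotonicity of the integral for nonnegative functions, yields
\[
\mathbb E[X]\ \ge\ \tfrac{1}{n_0}\,\mathbb P(X>1/n_0)\ >\ 0 .
\]
The argument does not require $\mathbb E[X]$ to be finite; if $\mathbb E[X]=+\infty$, the conclusion holds trivially.

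There is no genuine obstacle here. The only step needing care is the continuity argument, which converts the hypothesis $\mathbb P(X>0)>0$ into positive probability of some set where $X$ is bounded away from zero. Without it, nonnegativity alone would not give a quantitative lower bound on the expectation.
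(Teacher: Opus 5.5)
Your proposal is correct and follows essentially the same route as the paper. Both write $\{X>0\}$ as the union of the sets $\{X>1/n\}$ and pick $n_0$ with $\mathbb P(X>1/n_0)>0$. Both then conclude with $\mathbb E[X]\ge \frac{1}{n_0}\mathbb P(X>1/n_0)>0$.
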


\begin{proof}
Since \(X\ge 0\), we have
\[
    \{X>0\}=\bigcup_{n=1}^{+\infty}\left\{X>\frac1n\right\}.
\]
Because \(\mathbb P(X>0)>0\), there exists \(n_0\ge 1\) such that
\[
    \mathbb P\left(X>\frac1{n_0}\right)>0.
\]
Therefore,
\[
    \mathbb E[X]
    \ge
    \mathbb E\left[X\1_{\{X>1/n_0\}}\right]
    \ge
    \frac1{n_0}\mathbb P\left(X>\frac1{n_0}\right)
    >0.
\]
This proves the claim.
\end{proof}

\begin{Lem}[One-sided Freedman inequality]\label{lem:one-sided-freedman}
Let $(\mathscr G_s)_{s=0}^n$ be a filtration, where $\mathscr G_0$ need not be trivial, and let $(X_s)_{s=1}^n$ be a martingale-difference sequence with respect to this filtration. Suppose that, for some $b,v>0$,
\[
X_s\le b \quad\text{almost surely for every $1\le s\le n$},
\qquad
\sum_{s=1}^n\E[X_s^2\mid\mathscr G_{s-1}]\le v
\quad\text{almost surely}.
\]
Then, for every $x\ge0$,
\[
\P\left(\sum_{s=1}^nX_s\ge x\,\middle|\,\mathscr G_0\right)
\le\exp\left(-\frac{x^2}{2(v+bx/3)}\right)
\qquad\text{almost surely}.
\]
\end{Lem}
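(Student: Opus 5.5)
The statement to prove is the one-sided Freedman inequality (Lemma~\ref{lem:one-sided-freedman}). I would follow the classical exponential-supermartingale argument, carried out conditionally on $\mathscr G_0$.

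\textbf{Step 1: a one-step bound on the conditional moment generating function.} Fix $\gamma\in[0,3/b)$. Let $\phi(u):=(e^u-1-u)/u^2$, extended by continuity with $\phi(0)=1/2$. This function is nondecreasing on $\R$. Since $X_s\le b$, we have $\gamma X_s\le\gamma b$, and therefore
\[
e^{\gamma X_s}\le 1+\gamma X_s+\gamma^2X_s^2\,\phi(\gamma b).
\]
Taking $\E[\cdot\mid\mathscr G_{s-1}]$ and using $\E[X_s\mid\mathscr G_{s-1}]=0$ together with $1+a\le e^a$ gives
\[
\E\left[e^{\gamma X_s}\,\middle|\,\mathscr G_{s-1}\right]\le \exp\left(\gamma^2\phi(\gamma b)\,\E[X_s^2\mid\mathscr G_{s-1}]\right).
\]
Next I would bound $\phi$. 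Comparing the series $\sum_{k\ge2}u^k/k!$ term by term with $\tfrac{u^2}{2}\sum_{j\ge0}(u/3)^j$, using $k!\ge 2\cdot3^{k-2}$, gives $\phi(u)\le \frac{1}{2(1-u/3)}$ for $0\le u<3$.

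\textbf{Step 2: the supermartingale.} Write $S_k=\sum_{s\le k}X_s$ and $V_k=\sum_{s\le k}\E[X_s^2\mid\mathscr G_{s-1}]$, and set $c=\phi(\gamma b)$. By Step 1,
\[
Z_k:=\exp\left(\gamma S_k-\gamma^2 c V_k\right)
\]
satisfies $\E[Z_k\mid\mathscr G_{k-1}]\le Z_{k-1}$, because $V_k$ is $\mathscr G_{k-1}$-measurable. Hence $\E[Z_n\mid\mathscr G_0]\le Z_0=1$. Since $V_n\le v$ almost surely, we get
\[
\E\left[e^{\gamma S_n}\,\middle|\,\mathscr G_0\right]\le e^{\gamma^2cv}.
\]
The only subtle point is that the conditioning is on a nontrivial $\mathscr G_0$. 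This causes no trouble, because every inequality above holds almost surely conditionally.

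\textbf{Step 3: Chernoff bound and optimization.} By the conditional Markov inequality,
\[
\P(S_n\ge x\mid\mathscr G_0)\le \exp\left(-\gamma x+\frac{\gamma^2v}{2(1-\gamma b/3)}\right).
\]
I would choose $\gamma=x/(v+bx/3)$, which satisfies $\gamma b<3$. Then $1-\gamma b/3=v/(v+bx/3)$, so the exponent equals
\[
-\frac{x^2}{v+bx/3}+\frac{x^2}{2(v+bx/3)}=-\frac{x^2}{2(v+bx/3)}.
\]
This is exactly the claimed bound. The case $x=0$ is trivial. The main obstacle is only the elementary inequality for $\phi$ in Step 1, which the series comparison settles.
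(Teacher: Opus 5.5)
Your proof is correct. The paper gives no proof of this lemma: it states it in the auxiliary section as the classical one-sided Freedman (Bernstein-type) inequality and uses it as a black box in Lemma~\ref{wisub}. Your exponential-supermartingale argument therefore supplies a self-contained derivation rather than an alternative to one in the text.

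The steps check out:
\begin{itemize}
\item The bound $e^{\gamma X_s}\le 1+\gamma X_s+\gamma^2X_s^2\phi(\gamma b)$ needs only the one-sided bound $X_s\le b$ and $\gamma\ge0$.
\item $Z_k$ is integrable because $Z_k\le e^{\gamma kb}$.
\item Predictability of $V_k$ gives the supermartingale property, and the nontrivial $\mathscr G_0$ is handled by working conditionally throughout.
\item The choice $\gamma=x/(v+bx/3)$ satisfies $\gamma b<3$ exactly because $v>0$, and it yields the exponent $-x^2/(2(v+bx/3))$.
\end{itemize}

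One claim is asserted without proof: that $\phi$ is nondecreasing on all of $\R$. You need this on the whole line, since $\gamma X_s$ may be very negative. It follows in one line from the representation $\phi(u)=\int_0^1(1-s)e^{su}\,\d s$.

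Your route also has a side benefit. Step~2 produces the moment-generating-function bound $\E[e^{\gamma S_n}\mid\mathscr G_0]\le\exp\bigl(\gamma^2v/(2(1-\gamma b/3))\bigr)$. In the proof of Lemma~\ref{tt2}~(2), the paper remarks that integrating the tail bound of Lemma~\ref{lem:one-sided-freedman} does not give the needed estimate, and it invokes Lemma~\ref{bounded} instead, which requires a two-sided bound. Your Step~2, with $b=1$ and $v=4dL$, already gives $\E[e^{-\gamma W_i}\mid\mathscr F_{t-1}]\le e^{4dL\gamma^2}$ for $0\le\gamma\le 1/(4d)$, using only $\mu_i-\hat\ell_{t+r,i}\le 1$.
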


\section{Verification of the parameter constraints}\label{verify}
In this section, we verify the parameter constraints in the proofs of both Theorem \ref{lya} and Lemma \ref{base}.

\subsection{Verification for Theorem \ref{lya}}\label{verify1}

Recall the parameter choices in Lemma~\ref{formalsuper}:

\[
\lambda=C\frac{\Delta}{d},
\qquad
L=\left\lfloor C_{\alpha,q,\rho}\frac{d\log(d)}{\Delta^2}\right\rfloor+1,
\qquad
N=\left\lfloor C'_{\alpha,q,\rho}\frac{d^3(\log(d))^2}{\Delta^4}\right\rfloor+1.
\]
Choosing the universal constant $C>0$ sufficiently small gives
\[
\lambda\le\frac1{8d},
\qquad
\lambda\le\frac{\Delta}{64d},
\]
which verifies \eq{eq:lya-lambda-small} and the $\lambda$-constraint in
\eq{eq:lya-neg-exp-params}. Choosing $C_{\alpha,q,\rho}$ sufficiently large gives
\[
L\ge\frac{4C'd\log(d)}{\Delta^2},
\qquad
L\ge\frac{2\log(d)}{\lambda\Delta},
\]
so \eq{eq:lya-L-ut1} and the remaining condition in
\eq{eq:lya-neg-exp-params} hold.

It remains to verify the lower bounds on $N$. Since
\[
L\lesssim_{\alpha,q,\rho}\frac{d\log(d)}{\Delta^2},
\]
we have
\[
256\alpha^2dL^2
\lesssim_{\alpha,q,\rho}
\frac{d^3(\log(d))^2}{\Delta^4}.
\]
Thus $C'_{\alpha,q,\rho}$ can be chosen so that
$4dL\le\sqrt{dN}/(4\alpha)$, which is \eq{eq:lya-time-block}. Moreover,
\[
\begin{gathered}
\frac{4dL+2/\lambda}{\Delta},\quad
\frac{dL}{\Delta^2},\quad
\frac{1}{\rho^2\Delta^2}
\left(4d\log(d)+\frac{(\log(d))^2}{L}\right),\\
\frac{L}{\rho},\quad
\frac{qL}{\lambda\Delta},\quad
L,\quad
\frac{d}{\Delta^2}
\end{gathered}
\;\lesssim_{\alpha,q,\rho}\;
\frac{d^3(\log(d))^2}{\Delta^4}.
\]
Here we use $d\ge2$ and $0<\Delta\le1$. After increasing
$C'_{\alpha,q,\rho}$ if necessary, the constraints
\eq{eq:lya-time-comparison}, \eq{eq:lya-time-rt2}, and
\eq{eq:lya-time-term2} all hold. The same choice verifies
\eq{eq:lya-time-one-plus}, \eq{eq:lya-time-neg-final}, $N\ge L$, and
$\Delta^2N\ge d$. This completes the verification for Theorem~\ref{lya}.

\subsection{Verification for Lemma \ref{base}}\label{verify2}

Recall definitions in Lemma \ref{base}:
\[
    B>\frac{Cd}{\mu_{i_*}\alpha^2},
    \qquad
    t\ge Cd^2,
\]
and
\[
    v_B=\frac{\mu_{i_*}\alpha^2B^2}{16},
    \qquad
    V_B=4\alpha^2B^2,
    \qquad
    a_B=\frac{\mu_{i_*}\alpha^2B}{48}.
\]
Choosing the universal constant \(C>0\)
sufficiently large gives
\[
    B\ge 8d,
    \qquad
    Cd^2\ge 144d^2,
    \qquad
    V_B\ge 8d,
    \qquad
    a_B>1.
\]
Moreover, since $q_B=C'\mu_{i_*}/(\alpha^2B^2)$ and $B>Cd/(\mu_{i_*}\alpha^2)$,
\[
0<q_B<C'\frac{\mu_{i_*}^3\alpha^2}{C^2d^2}<1
\]
after increasing $C$ if necessary.
Thus \eq{eq:lower-time-lone}, \eq{eq:lower-BgeK-mplus}
% ,
% \eq{eq:lower-start-basic}
, and \eq{eq:lower-aB} hold.

When $C\ge 5$, we have
\[
\left(\sqrt{3}-\frac32\right)\alpha B\ge \sqrt{d},
\]
hence \eq{eq:ptlower} holds.

Assume \(v_B\le D_t/t\le V_B\). When $C\ge144$,
\[
v_B = \frac{\mu_{i_*}\alpha^2 B^2}{16} > \frac{Cd}{16}B \ge 18B>3B,
\]
hence,
\[
m_{-}(t)\ge\frac{D_t}{3 B}\ge\frac{v_B t}{3B}>t.
\]
The same parameter choices give
\[
\frac{D_t}{B}\ge\frac{v_Bt}{B}=\frac{\mu_{i_*}\alpha^2Bt}{16}>\frac{Cdt}{16}\ge\frac{C^2d^3}{16}>42.
\]
Then
\[
    m_+(t)-m_-(t)
    \ge
    \frac{D_t}{6B}-2
    \ge
    \frac{v_Bt}{6B}-2>0.
\]
Hence \eq{eq:lower-m-order} holds. The bound $D_t/B>42$ also gives
\[
    \frac{m_+(t)}{m_-(t)+1}
    \ge
    \frac{D_t/(2B)-1}{D_t/(3B)+2}
    >\frac54,
\]
which verifies \eq{eq:lower-m-ratio}.

% The upper bound \(D_t/t\le V_B\) gives
% \[
%     \frac{k}{t}
%     \le
%     \frac{m_+(t)}{t}
%     \le
%     \frac{V_B}{2B}
%     =
%     2\alpha^2B,
% \]
% so \eq{eq:lower-k-over-t} holds. Consequently, for \(B\) sufficiently large,
% \eq{eq:lower-exp-control} follows. 

Since, when $C$ is large enough, $B\ge \alpha^{-1}$ and $B\ge 16$, we have
\[
\frac{8}{\alpha^2B^2}
    \left(1+2\alpha^2B\right)
    \le 9.
\]
Consequently, \eq{eq:lower-exp-control} follows. 

Also, since \(s\ge t\ge Cd^2\), increasing
\(C\) gives
\[
    \frac{4}{\alpha^2B^2s}\le \frac12,
\]
which verifies \eq{eq:lower-product-small}.

Next,
\[
    3B+3\alpha^2B^2\le 4\alpha^2B^2=V_B
\]
for \(B\) sufficiently large, so \eq{eq:lower-return-upper} holds. 
% Moreover,
% \[
%     \frac{\alpha^2B^2}{4}\frac{\mu_{i_*}}2
%     =
%     \frac{\mu_{i_*}\alpha^2B^2}{8}
%     =
%     2v_B,
% \]
% which gives \eq{eq:lower-return-lower}. 
% Finally,
% \[
%     V_B-2v_B
%     =
%     4\alpha^2B^2-\frac{\mu_{i_*}\alpha^2B^2}{8}
%     \ge
%     \frac{31}{8}\alpha^2B^2,
% \]
% which dominates \(2K\) when \(C\) is sufficiently large. Hence
% \eq{eq:lower-start-vB} holds.
This completes the verification for
Lemma \ref{base}.

\end{document}